\documentclass{article}

\PassOptionsToPackage{numbers, compress}{natbib}
\usepackage[preprint]{neurips_2026}

\usepackage[utf8]{inputenc} 
\usepackage[T1]{fontenc}    
\usepackage{hyperref}       
\usepackage{url}            
\usepackage{booktabs}       
\usepackage{amsfonts}       
\usepackage{nicefrac}       
\usepackage{microtype}      
\usepackage{xcolor}         

\usepackage{microtype}
\usepackage{graphicx}
\usepackage{subcaption}
\usepackage{booktabs}

\usepackage{amsmath}
\usepackage{amssymb}
\usepackage{mathtools}
\usepackage{amsthm}
\usepackage{thmtools}
\usepackage{amsfonts}
\usepackage{thm-restate}
\usepackage{enumitem}
\usepackage{comment}
\usepackage[table]{xcolor}
\allowdisplaybreaks

\usepackage{algorithm}
\usepackage{algorithmic}

\usepackage{thm-restate}

\usepackage[capitalize,noabbrev]{cleveref}

\usepackage{mymacros}

\declaretheorem[name=Definition,numberwithin=section]{definition}

\declaretheorem[name=Assumption,numberwithin=section]{assumption}

\declaretheorem[name=Lemma,sharenumber=theorem]{lemma}

\usepackage[textsize=tiny]{todonotes}

\title{How Log-Barrier Helps\\Exploration in Policy Optimization}

\author{%
  Leonardo Cesani\\
  Politecnico di Milano, Milan, Italy \\
  \texttt{leonardo.cesani@polimi.it} \\
  \And
  Matteo Papini \\
  Università degli Studi di Milano, Milan, Italy \\
  \texttt{matteo.papini@unimi.it} \\
  \AND
  Marcello Restelli \\
  Politecnico di Milano, Milan, Italy \\
  \texttt{marcello.restelli@polimi.it} \\
}

\begin{document}

\maketitle

\begin{abstract}
  Recently, it has been shown that the Stochastic Gradient Bandit (\sgb) algorithm converges to a globally optimal policy with a constant learning rate. However, these guarantees rely on unrealistic assumptions about the learning process, namely that the probability of playing the optimal action is always bounded away from zero. We attribute this to the lack of an explicit exploration mechanism in \sgb. To solve this issue, we investigate the information geometry of the problem to enforce exploration explicitly. This naturally leads to regularizing the \sgb objective with a log-barrier on the parametric policy, structurally encouraging a minimal level of exploration. Our formulation aligns with the Natural Policy Gradient, as both methods exploit the underlying geometry of the policy space encoded by the Fisher information.  We prove that Log-Barrier Stochastic Gradient Bandit (\lbsgb) matches the sample complexity of \sgb, but also converges (at a slower rate) without any assumption on the sampling probability of the optimal action. Finally, we validate our theoretical findings through numerical simulations, showing the benefits of the log-barrier regularization, especially when the number of actions is large.
\end{abstract}


\section{Introduction}
\label{sec:introduction}
Stochastic gradient algorithms have been widely studied in the context of Reinforcement Learning~\citep[RL,][]{sutton2018rl} and Multi-Armed Bandit~\citep[MAB,][]{lattimore2020bandit} problems. Starting from foundational algorithms such as \texttt{REINFORCE}~\cite{williams1992simple}, and \texttt{GPOMDP}/\texttt{PGT}~\cite{baxter2001infinite,sutton1999policy}, Policy Gradient (PG) methods have become a standard tool for direct policy optimization. These methods have also been effectively adapted to modern deep architectures, leading to the development of algorithms showing impressive performance \cite{lillicrap2015continuous, schulman2017proximal, Ouyang0JAWMZASR22, haarnoja2018soft, AhmadianCGFKPUH24, grpo}. 

Despite their empirical success, providing rigorous finite-time convergence guarantees for PG methods remains a significant challenge~\citep{agarwal2021theory, yuan2022general}.
\citet{mei2022global} proposed to study policy gradient methods in the simplified setting of MABs with parametric softmax policies, to gain insights on the behavior of this family of algorithms that could transfer to the more general RL problem.
In this ``gradient bandit" setting~\citep[Section 2.7][]{sutton2018rl}, \texttt{REINFORCE} assumes a very simple form and is amenable to detailed analysis. Following~\citet{mei2024bandit}, we call it Stochastic Gradient Bandit (\sgb).
In this work, we will focus on the sample complexity of gradient bandits, that is, the number of samples or pulls required to find an $\epsilon$-optimal policy.\footnote{Sample complexity results can be easily translated into convergence rates~\citep{mei2024bandit}. Also note that iteration complexity coincides with sample complexity in the bandit setting. For a discussion of regret guarantees, see~\cite{baudrydoes} instead. For detailed related works, refer to Appendix~\ref{apx:detailed_related_works}.} %
\citet{mei2024bandit} showed that \sgb enjoys $\mathcal{O}(\epsilon^{-1})$ sample complexity. However, \citet{baudrydoes} highlighted the dependence of this upper bound on a potentially large problem-dependent constant that was previously neglected and hides a problematic assumption on how often the learning algorithm samples the optimal arm. 

This could be part of a broader problem. In fact, unlike classic bandit algorithms that explicitly address the exploration-exploitation dilemma through optimism in the face of uncertainty~\cite{auer2002finite} or posterior sampling~\cite{agrawal2012analysis}, vanilla PG methods such as \sgb lack a direct mechanism to control exploration, relying solely on the policy's stochasticity. As the gradient updates drive the policy toward the boundary of the probability simplex, the gradient itself vanishes, possibly leading to premature convergence to sub-optimal policies.
To mitigate this problem, practitioners often resort to entropy regularization~\citep{haarnoja2018soft}, which, in the context of gradient bandits, has been studied only in combination with exact gradients~\cite{mei2022global}. Several works in the RL literature have argued that, although speeding up convergence by smoothing the objective landscape, entropy regularization provides insufficient support to exploration~\cite{ahmed2019understanding,bolland2025myth,MeiDXSS21,odonoghue2020makingsensereinforcementlearning}. 

Motivated by these limitations, we investigate the underlying geometry of the policy optimization landscape to establish a framework for exploration. Rather than merely regularizing performance with the policy entropy or some other regularizer, inspired by the literature on Natural Policy Gradient~\citep[\npg, ][]{kakade2001natural, mei2022role} and Fisher-non-degenerate policies \citep{liu2020improved,ding2022globaloptimumconvergencemomentumbased}, we seek a solution that preserves the well-conditioning of the Fisher information matrix. This naturally leads to the adoption of a \emph{log-barrier} regularization~\cite{agarwal2021theory, yuan2022general} to actively prevent the degeneration of the Fisher geometry near the boundaries of the probability simplex. Specifically, we show that, in the gradient bandit setting, applying a log-barrier to the spectrum of the Fisher Information Matrix is mathematically equivalent to applying a log-barrier to the parametric policy. Thanks to this equivalence, we provide strong geometric motivation for the log-barrier regularization, explicitly encouraging exploration in \sgb, and proving convergence without relying on strong implicit assumptions that may fail due to unlucky realizations.


\textbf{Contributions}. We adopt the MAB framework as a test bed to analyze the dynamics of policy optimization with log-barrier regularization. In particular:
\begin{itemize}[topsep=-4pt,itemsep=0pt,parsep=0pt,partopsep=0pt]
    \item We show a connection between \npg and log-barrier regularization, contributing to the understanding of the role of the Fisher information in exploration;
    \item We introduce Log-Barrier Stochastic Gradient Bandit (\lbsgb), a PG algorithm designed to encourage persistent exploration by keeping action probabilities in the simplex interior 
    \item We analyze the convergence guarantees of \lbsgb, establishing $\widetilde{\mathcal{O}}(\epsilon^{-1})$ sample complexity, comparable to state-of-the-art algorithms under the same assumptions;
    \item We show that \lbsgb eliminates the need for an implicit assumption regarding the sampling probability of the optimal arm during the learning process, at the cost of converging at a slower rate, $\mathcal{O}(\epsilon^{-5})$;
    \item We empirically show that log-barrier regularization improves the performance as the number of arms $K$ increases compared to (entropy-regularized) \sgb and \npg.
\end{itemize}
At the algorithmic level, \lbsgb modifies the \sgb objective by adding a log-barrier term on the policy, and updating the gradient with a deterministic penalty for small actions. The rest of the paper explains why this simple modification has a Fisher-geometric interpretation and yields stronger exploration guarantees. Complete proofs of all theoretical statements are provided in Appendix~\ref{apx:proofs}.

\section{Preliminaries}
\label{sec:preliminaries}

\textbf{Notation.}  For a measurable set $\mathcal{X}$, we denote by $\Delta(\mathcal{X})$ the set of probability measures over $\mathcal{X}$. For $P \in \Delta(\mathcal{X})$, we denote with $p$ its density function w.r.t. a reference measure that we assume to exist whenever needed. We will use $x \sim P$ to express that random variable $x$ is distributed according to $P$. For $n\in \mN$, we denote $\dsb{n} \coloneqq \{1, \dots, n\}$. For $a \in \mR$, we define $(a)^{+} = \max\{0, a\} .$ Given a symmetric matrix $A \in \mR^{d\times d}$, we denote its eigenvalues with $\mu_i $, where $i \in \dsb{d}$. Given two matrices $A$ and $B$, we say that $A \succeq B$ if the matrix $A-B$ is positive semidefinite.

\textbf{$K$-armed Bandit}. We formulate the stochastic $K$-armed bandit as a sequential decision-making problem where, at each time $t \in \dsb{T}$, the learner chooses an action $a_t \in \dsb{K}$ and receives a reward $R_t(a_t)$. This reward $R_{t}(a_t)$ is the $a_t$-th component of a random reward vector $\bm{R}_t \in \mR^K$, whose components are sampled i.i.d. from an unknown multivariate distribution $\nu$ supported on $[-\Rmax, \Rmax]^K$. 
Given $\nu_a$  as the $a$-th component of $\nu$, the mean reward of each action is defined as $r(a) \coloneqq \mE_{R_t(a)\sim\nu_a}\left[R_t(a)\right]$. The vector of mean rewards for each arm is denoted $\bm{r} \coloneqq (r(a))_{a\in \dsb{K}}$.
Following~\citet{mei2024bandit}, we make the following assumption on $\bm{r}$.

\begin{assumption}(No ties)
\label{asm:no_ties_r}
For all $a, a'\in \dsb{K}$, if $a\ne  a'$, then $r(a) \ne r(a')$.
\end{assumption}
Furthermore, we define the optimal arm as $a^* \coloneqq \argmax_{a \in \dsb{K}} r(a) $ which, under Assumption~\ref{asm:no_ties_r}, is unique and the optimal deterministic policy $\pi^* \coloneqq \argmax_{\pi} \pi^{\top}\bm{r}$. Finally, we denote the optimal reward gap as $\Delta^*\coloneqq \min_{a \ne a^*} \left|r(a^*) - r(a)\right|$, and the minimum reward gap as $\Delta = \min_{a \ne a'} \left|r(a) - r(a')\right|$. Note that $\Delta^*\ge\Delta>0$ under Assumption~\ref{asm:no_ties_r}. 

\textbf{Stochastic Gradient Bandit Algorithm}. The Stochastic Gradient Bandit (\sgb) algorithm \cite{sutton2018rl} parametrizes a stochastic policy $\pi_{\bt}$ over actions $a \in \dsb{K}$ using a softmax distribution:
\begin{align}
    \pt(a) = \frac{\exp\{\bt(a)\}}{\sum_{b \in \dsb{K}} \exp\{\bt(b)\}}, 
    \label{eq:softmax}
\end{align}
where $\bt \in \mR^K$ is a vector of parameters representing action preferences.
Starting from an initial parameter vector $\bt_1  = \bm{0}_K$, inducing a uniform policy, the algorithm seeks to learn a policy maximizing the expected reward $J(\bt) \coloneqq \pi_{\bt}^{\top} \bm{r}$, where $\bm{r} \in \mR^K$ is the vector of mean rewards. Since $\bm{r}$ is unknown, \sgb employs Stochastic Gradient Ascent (\texttt{SGA}): $\bt_{t+1} \leftarrow \bt_t + \alpha \nabla_{\bt_t} \left( \pi_{\bt_t}^{\top} \hat{\bm{r}}_t \right)$, where $\hat{\bm{r}}_t \in \mR^K$ is an unbiased importance-sampling estimator of $\bm{r}$ with components $\hat{r}_t(a) \coloneqq \frac{\mI\{a_t = a\}}{\pi_{\bt_t}(a)} R_t(a_t)$. The stochastic gradient $\nabla_{\bt} \left( \pi_{\bt_t}^{\top} \hat{\bm{r}}_t \right)$ is an unbiased estimator of $\nabla_{\bt_t} \left( \pi_{\bt_t}^{\top} \bm{r} \right) = \left(\text{diag}(\pi_{\bt_t}) - \pi_{\bt_t} \pi_{\bt_t}^{\top}\right) \bm{r}$~\cite{mei2024bandit}. 

\textbf{Natural Policy Gradient in Bandits}. The Natural Policy Gradient (\npg) algorithm~\cite{kakade2001natural} precondition the gradient update of \sgb with the inverse of the Fisher Information Matrix (FIM), which is defined as:
\begin{align*}
    \fim &= \mE_{a\sim \pt}\left[ \nabla_{\bt} \log\pi_{\bt}(a)\nabla_{\bt} \log\pi_{\bt}(a)^{\top} \right] = \text{diag}(\pi_{\bt}) - \pi_{\bt} \pi_{\bt}^{\top},
\end{align*}
The FIM acts as the Riemannian metric tensor of the probability simplex. Because the Euclidean metric between parameters does not reflect the true divergence between the associated distributions, a Euclidean step in $\bt$ can result in arbitrary changes to the actual policy $\pi_{\bt}$ depending on its proximity to the boundaries. \npg corrects this issue by updating the parameters in the direction of $\fim^{-1}\nabla_{\bt}J(\bt)$. This adjusts the optimization step according to the true local curvature of the policy space~\cite{amari1998natural}, ensuring the update follows the steepest ascent direction, and controls the actual statistical distance between successive policy distributions. In \sgb, the linearly dependent parameters of the softmax policy (Equation~\eqref{eq:softmax}) make the FIM singular, precluding explicit computation of the natural gradient. To obtain the \npg update rule, it is necessary to find the solutions of the linear system $\fim \cdot \bm{x} = \bm{g}(a_t)$, where $\bm{g}(a_t) = \nabla _{\bt}\log \pi_{\bt}(a_t)= \mathbf{e}_{a_t} - \pt$ being the score of the policy given the sampled arm $a_t$, and $\mathbf{e}_{a_t}$ denotes the one-hot encoding of action $a_t$~\citep[Appendix D.2,][]{chung2021beyond}. Solving the system yields an update rule, modifying only the parameter associated to the sampled action: $\bt_{t+1} \leftarrow \bt_{t} + \alpha \cdot R_t(a_t) \left( \mathbf{e}_{a_t}/\pi_{\bt_t}(a_t)\right)$. Details of \sgb and \npg are provided in Appendix~\ref{apx:algorithms}.

\section{Limitations of \sgb and \npg}
\label{sec:limitations}
In this section, we outline the theoretical limitations of both \sgb and \npg. We first review a structural flaw in existing convergence analyses for \sgb discovered by~\citet{baudrydoes}, who showed that these analyses rely on implicit exploration assumptions that fail in worst-case scenarios. Then, we examine \npg, reviewing its ``over-committal" behavior in the bandit setting highlighted by~\citet{chung2021beyond}.

\textbf{Limitations of \sgb.} The convergence analysis of \sgb in~\citet{mei2024bandit} relies on a hidden assumption regarding the probability of playing the optimal arm, originating from a flaw in the proof. As identified by~\citet{baudrydoes}, when this implicit assumption is violated, the resulting sample complexity result may become vacuous. Specifically, the analysis in~\citet{mei2024bandit} overlooks the impact of ``extreme" events, i.e., low-probability trajectories where an initial sequence of unfavorable rewards causes the sampling probability of the optimal arm $\pi_{\bt_t}(a^*)$ to vanish. In these scenarios, the constant governing the sample complexity result may become unbounded.  This discrepancy originates from a structural error in the convergence proof of \sgb~\citep[Theorem 5.5,][]{mei2024bandit}, in which the term $\zeta^* = \inf_{t \geq 1} \mathbb{E}[\pi^2_{\bt_{t}}(a^*)]$ is treated as a constant independent of the 
sub-optimality gap $\delta(\boldsymbol{\theta}) \coloneqq (\pi^* - \pi_{\bt})^{\top} \bm{r}$. In the stochastic setting, the sampling probability $\pi_{\bt}(a^*)$ is a trajectory-dependent random variable; therefore, its infimum cannot be assumed independent of the policy dynamics, as the history of rewards determines both the current parameters $\bt_t$ and the probability of the optimal arm. To address this analytical flaw, \citet{baudrydoes} propose shifting the dependence to the second moment of the reciprocal probability, defining $c^* \coloneqq \sup_t\mE \left[\pi_{\bt_t}(a^*)^{-2}\right]$, which yields a sample complexity of $\mathcal{O}\left(c^*\epsilon^{-1}\right)$. However, while~\citet{mei2024bandit} prove that the probability of the optimal arm remains strictly positive (Theorem 5.1), this does not preclude the policy from becoming arbitrarily close to zero on specific trajectories. Guaranteeing $c^*$ to be bounded requires a stronger condition, namely a uniform lower bound $\pi_{\bt_t}(a^*) \geq C > 0$ almost surely. Furthermore, even if a finite upper bound of $c^*$ were to exist, it remains theoretically uncharacterized in the standard \sgb framework. Without explicit control over ``extreme" events, where the policy approaches the simplex boundary~\cite{baudrydoes, fan2025fragility}, the second moment of the reciprocal probability may diverge, making the resulting convergence guarantee vacuous.

\textbf{Limitations of \npg.} While \npg benefits from a geometry-aware update direction, it is known to suffer from premature convergence to suboptimal arms due to its aggressive step sizes. Specifically,~\citet{mei2022role} proved that unregularized \npg has a strictly positive probability of converging to sub-optimal actions. This issue stems from the algorithm's tendency to drive the policy parameters toward the simplex boundaries too rapidly. As noted by~\citet{mei2022role}, this results in an ``over-committal'' behavior: the algorithm may repeatedly sample a potentially suboptimal arm, actively suppressing the exploration required to identify the true optimal action. To mitigate this effect,~\citet{mei2022role} and~\citet{chung2021beyond} demonstrate that subtracting a value baseline from the importance-sampling estimator can penalize these aggressive updates. However, while~\citet{mei2022role} provide a proof of global convergence for this baseline-corrected \npg (Theorem 1), their analysis relies on the exact same structural flaw identified for \sgb. Because the baseline only acts on the gradient's variance and does not structurally prevent the policy from approaching the simplex boundary on ``extreme'' trajectories, exploration can still vanish. Consequently, the resulting theoretical convergence bounds remain potentially vacuous in worst-case scenarios.

\textbf{Exploration and Geometry.} We argue that the failures of both \sgb and \npg stem from how they deal with the geometry of the probability simplex under the softmax parametrization. In fact, near the boundaries of the simplex, the softmax mapping becomes degenerate, and the policy becomes insensitive to parameter updates. This distortion is captured by the spectrum of the FIM, whose eigenvalues vanish as the policy approaches a deterministic vertex~\cite{ding2022globaloptimumconvergencemomentumbased}. While in \sgb this causes the policy to collapse into the boundary where gradients vanish, and exploration is extinguished (causing the theoretical divergence of $c^*$), \npg accounts for the curvature of the optimization landscape, but incurring the ``over-committal behavior due to the aggressiveness of its update rule. Because both algorithms rely on the policy's internal stochasticity to explore, a collapse toward the boundary of the probability simplex inherently extinguishes exploration. To overcome these limitations, we propose decoupling reward maximization from the exploration requirement by exploiting these information-geometric properties in different ways. By exploiting the Fisher geometry to prevent the policy from collapsing toward the simplex boundary, we can provide \sgb with an explicit exploration mechanism while entirely avoiding the over-committal behavior of \npg.

\section{How Log-barrier Helps Exploration}
\label{sec:comparison}

In this section, we highlight the fundamental role of Fisher information in driving exploration during policy optimization. We begin by examining a theoretical regularity condition that is typically taken as a prerequisite for global convergence in PG methods, and reinterpret it as an explicit exploration requirement. Building on this, we propose a method to structurally enforce this geometric condition rather than merely assuming it holds.

\textbf{Fisher-non-degeneracy}. While the aggressive behavior of \npg often leads to instability in practice, it serves as a fundamental pillar for establishing global convergence results of PG. These theoretical guarantees, however, typically necessitate a specific regularity condition on the policy's geometry to ensure the FIM remains a well-behaved metric for the update. Theoretical analyses of PG methods with generic policy parametrization often assume Fisher non-degeneracy to ensure global convergence rates~\cite{ding2022globaloptimumconvergencemomentumbased, fatkhullin2023stochastic,liu2020improved}. This condition requires well-behaved curvature of the parameter space so that the FIM is a strictly positive-definite preconditioner of the gradient of the performance index. 

\begin{definition}[Fisher-non-degeneracy]
    \label{def:fnd}
    A policy class parametrized by $\bt \in \mR^{K}$, is said to be \emph{Fisher-non-degenerate} if there exist a constant $\mu_{\text{F}}>0$ such that for all $\bt \in \mR^K$, the induced FIM satisfied $\fim \succeq \mu_{\text{F}} \bm{I}$, where $\bm{I}$ is the identity matrix.
\end{definition}

While \citet{ding2022globaloptimumconvergencemomentumbased} argues that the Fisher-non-degenerate setting implicitly guarantees that the agent effectively explores the action space, we explicitly formalize this connection. To mathematically characterize this property, we interpret the FIM as a measure of feature coverage.


\begin{restatable}[FIM as Covariance Matrix]{proposition}{fimCovariance}
    \label{prop:fim_covariance}
    Consider the softmax parametrization where $\mathbf{e}_a \in \mathbb{R}^K$ denotes the canonical basis vector for action $a$. The FIM $\fim$ is equivalent to the covariance matrix of the action features $\mathbf{e}_a$ under the policy distribution $\pi_{\bt}$.
\end{restatable}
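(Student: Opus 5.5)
The plan is to compute both sides explicitly and show they coincide. Let $a\sim\pi_{\bt}$ be a random action and define the random feature vector $\mathbf{e}_a\in\mathbb{R}^K$. Its mean is $\mathbb{E}_{a\sim\pi_{\bt}}[\mathbf{e}_a]=\sum_{b\in\dsb{K}}\pi_{\bt}(b)\mathbf{e}_b=\pi_{\bt}$. I will then use the standard decomposition of a covariance,
\begin{align*}
\mathrm{Cov}_{a\sim\pi_{\bt}}(\mathbf{e}_a)=\mathbb{E}_{a\sim\pi_{\bt}}\bigl[\mathbf{e}_a\mathbf{e}_a^{\top}\bigr]-\pi_{\bt}\pi_{\bt}^{\top},
\end{align*}
together with $\mathbf{e}_a\mathbf{e}_a^{\top}=\mathrm{diag}(\mathbf{e}_a)$. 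This gives $\mathbb{E}[\mathbf{e}_a\mathbf{e}_a^{\top}]=\mathrm{diag}(\pi_{\bt})$, and hence $\mathrm{Cov}(\mathbf{e}_a)=\mathrm{diag}(\pi_{\bt})-\pi_{\bt}\pi_{\bt}^{\top}$.

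For the Fisher side, I will start from its definition as the expected outer product of the score. For the softmax in Equation~\eqref{eq:softmax}, differentiating $\log\pi_{\bt}(a)=\bt(a)-\log\sum_b\exp\{\bt(b)\}$ gives $\nabla_{\bt}\log\pi_{\bt}(a)=\mathbf{e}_a-\pi_{\bt}$. This is the score $\bm{g}(a)$ already recalled in the preliminaries. Substituting it,
\begin{align*}
\fim=\mathbb{E}_{a\sim\pi_{\bt}}\bigl[(\mathbf{e}_a-\pi_{\bt})(\mathbf{e}_a-\pi_{\bt})^{\top}\bigr]=\mathbb{E}_{a\sim\pi_{\bt}}\bigl[(\mathbf{e}_a-\mathbb{E}[\mathbf{e}_a])(\mathbf{e}_a-\mathbb{E}[\mathbf{e}_a])^{\top}\bigr],
\end{align*}
which is exactly the definition of $\mathrm{Cov}_{a\sim\pi_{\bt}}(\mathbf{e}_a)$. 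The identity therefore holds directly at the level of definitions. The explicit expansion from the first paragraph independently confirms that both sides equal $\mathrm{diag}(\pi_{\bt})-\pi_{\bt}\pi_{\bt}^{\top}$, matching the closed form stated in the preliminaries.

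There is no real obstacle here. The only point needing care is recognising that the score is the feature centred at its policy mean, $\mathbf{e}_a-\mathbb{E}_{\pi_{\bt}}[\mathbf{e}_a]$. That step relies on the softmax normaliser's gradient being $\pi_{\bt}$.

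I will close with a remark that motivates what follows. Because $\fim$ is a covariance, $\bm{v}^{\top}\fim\bm{v}=\mathrm{Var}_{a\sim\pi_{\bt}}(\bm{v}(a))$ for any $\bm{v}\in\mathbb{R}^K$. So $\fim$ degenerates precisely along directions in which the policy produces no variability, that is, when probability mass collapses onto a vertex of the simplex. The direction $\mathbf{1}_K$ always has zero variance, which explains the singularity of $\fim$ noted earlier.
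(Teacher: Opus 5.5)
Your proposal is correct and follows the paper's proof essentially step for step: note that $\mathbb{E}_{a\sim\pi_{\bt}}[\mathbf{e}_a]=\pi_{\bt}$, compute the softmax score $\mathbf{e}_a-\pi_{\bt}$, and recognize the Fisher definition as the centred second moment of $\mathbf{e}_a$. Your explicit expansion to $\mathrm{diag}(\pi_{\bt})-\pi_{\bt}\pi_{\bt}^{\top}$, and your closing variance remark, are harmless extras that the paper does not spell out.
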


Under this interpretation, detailed in Appendix~\ref{apx:fim_covariance}, requiring the FIM $\fim$ to be strictly positive definite ensures that the policy maintains non-zero variance along all directions of the simplex. In the context of \sgb with softmax parametrization, this assumption is violated. The standard softmax parametrization (see Equation~\eqref{eq:softmax}) has $K$ linearly dependent parameters, making the FIM singular by construction. To resolve this singularity, for the remainder of the section, we consider a $(K-1)$-dimensional reparametrization of the softmax policy~\cite{metelli2022policy}:\footnote{We refer the reader to Appendix~\ref{apx:log_det_fim} for details about the reparametrized softmax policy.}
\begin{align*}
    \pi_{\bar{\bt}}(a) =
    \begin{cases}
         \frac{e^{\bar{\theta}(a)}}{1+\sum_b e^{\bar{\theta}(b)}} & \text{if } a \ne a_K \\
         1 - \sum_{i=1}^{K-1}\pi_{\bar{\bt}}(a_i) & \text{otherwise}
    \end{cases},
\end{align*} 
where $\bar{\bt} \in \mR^{K-1}$. However, even with the reparametrized policy, the Fisher-non-degeneracy assumption is violated when $\pi_{\bar{\bt}}$ approaches a deterministic policy~\cite{ding2022globaloptimumconvergencemomentumbased}. In fact, when the policy is close to the boundaries of the simplex, the variance of the score function vanishes. Consequently, the smallest eigenvalue of the FIM decays to zero, and $F(\bar{\bt})$ cannot be uniformly lower-bounded by a positive constant $\mu_{\text{F}}$.

\textbf{Spectral Regularization}. To overcome the ill-conditioning of the FIM in \sgb with softmax policy parametrization, rather than relying on Fisher-non-degeneracy or the \npg update rule, we define a Constrained Optimization Problem (COP) that imposes constraints on the FIM's eigenvalues. This COP can be written as: 
\begin{align*}
    \max_{\bar{\bt}\in\mR^{K-1}} J(\bar{\bt})\quad\text{s.t.}\quad\mu_{i}(F(\bar{\bt})) > 0\quad \forall i \in \dsb{K-1}.
\end{align*}
Note that the constraints on the eigenvalues are already implicit in the original problem, since the FIM is strictly positive definite under the reparametrized softmax. To enforce a finite lower bound on the eigenvalues, and to push the FIM away from the boundary of the positive semidefinite cone during the optimization, we induce some slack on the implicit constraints using a \emph{logarithmic barrier regularization}~\citep[Sec. 8.5.3,][]{boyd2004convex}. The resulting regularized objective is defined as:
\begin{align}
    \Phi_{\eta}(\bar{\bt}) = J(\bar{\bt}) + \frac{1}{\eta}\log \text{det}\left(F(\bar{\bt})\right) = J(\bar{\bt}) + \frac{1}{\eta}\sum_{i=1}^K\log \pi_{\bar{\bt}}(a_i),\label{eq:log_barrier_equivalence}
\end{align}
where we used the fact that $\sum_{i=1}^{K-1}\log\mu_i(F(\bar{\bt})) = \log \text{det}F(\bar{\bt})$, and the expression of the determinant of the FIM for the reparametrized softmax policy (a detailed derivation is provided in Appendix~\ref{apx:log_det_fim}). While previous works in the RL literature~\cite{agarwal2021theory, ding2022globaloptimumconvergencemomentumbased, yuan2022general} typically apply a log-barrier directly to the policy as an \emph{ad hoc} heuristic to encourage exploration, our formulation shows that it serves as the explicit structural constraint required to maintain the non-degeneracy of the Fisher geometry. By bounding the FIM eigenvalues away from zero, this approach satisfies the Fisher-non-degeneracy setting by design, providing a geometrically principled exploration mechanism.


This means that we can translate the Fisher-non-degeneracy assumption into an explicit optimization constraint. By bounding the minimum eigenvalue of the FIM away from zero, the algorithm ensures every direction of the feature space ($\bm{e}_a$) is sufficiently covered. This reveals a fundamental connection between the spectral-regularized objective and \npg: both algorithms exploit the Fisher geometry, but through different mechanisms. In \npg, preconditioning the gradient with the inverse FIM causes step sizes to aggressively blow up as the FIM eigenvalues decay near the simplex boundaries, leading to premature convergence. Conversely, the log-barrier approach, rather than inverting a nearly-singular FIM, leverages its eigenvalues as a repulsive penalty that avoids the aggressive, over-committal updates characteristic of \npg. This explicitly restricts the optimization trajectory to the non-degenerate interior regions, safely keeping the policy away from deterministic boundaries. Furthermore, since vanilla PG is known to mimic \npg under Fisher-non-degeneracy~\citep{ding2022globaloptimumconvergencemomentumbased}, this abstract assumption is transformed to a structural requirement for sufficient exploration.


\section{Log-Barrier Stochastic Gradient Bandit}
\label{sec:lb_sgb}

Building upon the spectral-regularization framework established in Section~\ref{sec:comparison}, we now translate these theoretical insights into a more practical algorithmic implementation. Here, we present Log-Barrier Stochastic Gradient Bandit (\lbsgb), a variant of \sgb that uses log-barrier regularization to keep action probabilities away from zero.

\textbf{The Optimization Framework}. While the theoretical derivation in Section~\ref{sec:comparison} relied on a $(K-1)$-dimensional reparametrized policy to establish the non-degeneracy of the FIM, our algorithmic implementation directly optimizes the standard, over-parametrized softmax policy $\pt$ governed by $\bt \in \mR^K$ (Equation~\eqref{eq:softmax}). Because the log-barrier acts as a repulsive penalty evaluating the probabilities of \emph{all} $K$ actions (Equation~\ref{eq:log_barrier_equivalence}), the dimensional reduction to $K-1$ parameters is unnecessary for the practical optimization process. 
Therefore, we define our framework by regularizing the standard performance index $J(\bt)$ with the barrier function $\mathcal{B}_{\eta}(\bt) \coloneqq \frac{1}{\eta}\sum_{a \in \dsb{K}}\log \pi_{\bt}(a)$. The strength of this regularization is governed by a barrier parameter $\eta > 0$, where $1/\eta$ acts as the penalty coefficient. As it will be discussed in Section~\ref{sec:convergence}, $\eta$ should be chosen large enough to control the bias of the learned policy w.r.t. the optimal policy, while still maintaining a sufficient repulsive force to enforce a minimum sampling probability for all arms. The resulting regularized objective is defined as:
\begin{align}
    \barrier \coloneqq J(\bt) + \frac{1}{\eta} \sum_{a \in \dsb{K}} \log \pi_{\bt}(a).
    \label{eq:barrier}
\end{align}
Note that, for any $\eta>0$, the regularized function does attain its maximum in the interior of the simplex (cf. Appendix \ref{apx:maxima}).

\textbf{The \lbsgb Algorithm}. To optimize the regularized objective in Equation~\eqref{eq:barrier}, the \lbsgb algorithm implements the \texttt{SGA} update rule $\bm{\theta}_{t+1} \leftarrow \bt_t + \alpha \widehat{\nabla}_{\bt} \barrierT$, where $\widehat{\nabla}_{\bt} \barrierT$ is the sample-based gradient of Equation~\eqref{eq:barrier}. The complete pseudo-code for \lbsgb is provided in Appendix~\ref{apx:algorithms} (Algorithm~\ref{alg:lb-sgb}). The gradient $\widehat{\nabla}_{\bt} \barrier$ is composed of two distinct terms. The first term represents the stochastic gradient of the performance index $\widehat{\nabla}_{\bt} J(\bt)$, using the unbiased importance sampling estimator $\nabla_{\bt} \left( \pi_{\bt_t}^{\top} \hat{\bm{r}}_t \right)$ presented in Section~\ref{sec:preliminaries}. In contrast, the gradient of the barrier term $\nabla_{\bt}\mathcal{B}_{\eta}(\bt) \coloneqq \frac{1}{\eta}(\bm{1} - K \pt)$ is deterministic, meaning that $\widehat{\nabla}_{\bt} \barrier$ is an unbiased estimator of $\nabla_{\bt} \barrier$ and has the same variance as the \sgb estimator. We refer to Appendix~\ref{apx:lg-sgb} for a complete derivation of $\gradBarrier$.


\textbf{Properties of the Barrier Function}. 
Standard convergence analyses for non-convex optimization often rely on global $L$-smoothness~\cite{mei2022global}; however, the softmax PG exhibits a non-uniform curvature and, to capture this behavior, we leverage the framework of non-uniform smoothness introduced by~\citet{mei2021leveraging}. We first characterize the local curvature of the optimization landscape, showing that the Hessian's spectral radius is adaptively bounded by the norm of the gradient of the regularized objective function.

\begin{restatable}[Non-uniform Smoothness]{lemma}{NUSmoothness}
    \label{lemma:non-uniform-smoothness}
    For all $\bt \in \mR^{K}$, and for all $\bm{r} \in \mR^{K}$ the spectral radius of the Hessian matrix $H(\bt)\in\mR^{K\times K}$ of $\barrier$ is upper bounded by a function of $\bt$. Precisely, for all $\bm{y}\in\mR^{K}$,
    \begin{align*}
        \Big| \bm{y}^{\top} H(\bt) \bm{y} \Big| \leq 3\left(\normGradPhi + \frac{5K}{\eta} \right)\|\bm{y}\|_2^2.
    \end{align*}
\end{restatable}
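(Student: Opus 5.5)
The plan is to split the Hessian of $\barrier$ into the Hessian of $J$ and the Hessian of the barrier $\mathcal{B}_\eta$. We bound the first by the gradient norm of $J$, using the non-uniform smoothness argument of \citet{mei2021leveraging}, and the second by a constant of order $K/\eta$. Finally, we convert $\|\nabla_{\bt} J(\bt)\|_2$ into $\normGradPhi$ plus a term of order $K/\eta$ via the triangle inequality.

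\textbf{Step 1 (performance term).} With $H_J(\bt)$ the Hessian of $J(\bt)=\pt^\top\bm r$, we show $|\bm y^\top H_J(\bt)\bm y|\le 3\|\nabla_{\bt}J(\bt)\|_2\|\bm y\|_2^2$, the known bound for softmax policies \citep{mei2021leveraging}. To verify it, differentiate $\nabla_{\bt}J(\bt)=(\text{diag}(\pt)-\pt\pt^\top)\bm r$, whose $a$-th entry is $\pt(a)(r(a)-\pt^\top\bm r)$. Writing $\bm g\coloneqq\nabla_{\bt} J(\bt)$, this yields
\begin{align*}
\bm y^\top H_J(\bt)\bm y=\sum_a g(a)\,y(a)^2-2(\pt^\top\bm y)(\bm g^\top\bm y).
\end{align*}
We bound each piece separately:
\begin{itemize}
\item $|\sum_a g(a)y(a)^2|\le\|\bm g\|_\infty\|\bm y\|_2^2\le\|\bm g\|_2\|\bm y\|_2^2$;
\item $|\pt^\top\bm y|\le\|\bm y\|_2$ and $|\bm g^\top\bm y|\le\|\bm g\|_2\|\bm y\|_2$ by Cauchy--Schwarz, using $\|\pt\|_2\le1$.
\end{itemize}
Together these give the constant $3$. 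The only delicate point is getting the derivative of $\pt\pt^\top\bm r$ in a symmetric form so that every term is controlled by $\bm g$ rather than by $\bm r$. Reorganising the expression so that $r(a)$ appears only through $g(a)$ is the main bookkeeping obstacle.

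\textbf{Step 2 (barrier term).} From $\nabla_{\bt}\mathcal{B}_\eta(\bt)=\frac1\eta(\bm 1-K\pt)$, the barrier Hessian is $-\frac K\eta(\text{diag}(\pt)-\pt\pt^\top)$. This matrix is, up to sign, a covariance matrix (Proposition~\ref{prop:fim_covariance}), hence positive semidefinite. Its largest eigenvalue is at most its trace $\sum_a\pt(a)(1-\pt(a))\le1$. Therefore $|\bm y^\top\nabla^2\mathcal{B}_\eta(\bt)\bm y|\le\frac K\eta\|\bm y\|_2^2$.

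\textbf{Step 3 (combine).} By the triangle inequality, $\|\nabla_{\bt}J(\bt)\|_2\le\normGradPhi+\frac1\eta\|\bm 1-K\pt\|_2$. Expanding the norm,
\begin{align*}
\|\bm 1-K\pt\|_2^2=K-2K+K^2\|\pt\|_2^2\le K^2-K\le K^2,
\end{align*}
so $\|\bm 1-K\pt\|_2\le K$. Summing the bounds from Steps 1 and 2 gives
\begin{align*}
|\bm y^\top H(\bt)\bm y|\le\Big(3\normGradPhi+\tfrac{3K}{\eta}+\tfrac{K}{\eta}\Big)\|\bm y\|_2^2\le3\Big(\normGradPhi+\tfrac{5K}{\eta}\Big)\|\bm y\|_2^2,
\end{align*}
which is the claim, with some slack in the constant.
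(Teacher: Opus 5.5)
Your proof is correct, but it is organized differently from the paper's.

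\textbf{The paper's route.} It never bounds the reward and barrier Hessians separately. It expands the full quadratic form $\bm{y}^\top H(\bt)\bm{y}$ with the formula of \citet{mei2021leveraging}. It then adds and subtracts $\frac{1}{\eta}(\bm{1}-K\pt)$ inside the coefficient vectors of $\bm{y}\odot\bm{y}$ and of $\bm{y}\,(\pt^\top\bm{y})$, so that $\nabla_{\bt}\barrier$ appears directly. The leftover multiples of $\frac{1}{\eta}\bm{1}$ and $\frac{K}{\eta}\pt$ are bounded by H\"older's inequality, giving $3\normGradPhi+\frac{5K}{\eta}$ before the final loosening.

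\textbf{Your route.} You use the known $3\|\nabla_{\bt}J(\bt)\|_2$ bound as a black box. You control the barrier Hessian spectrally (it is negative semidefinite, and the trace of $\text{diag}(\pt)-\pt\pt^\top$ is at most $1$). Only then do you pass from $\nabla_{\bt}J$ to $\nabla_{\bt}\barrier$, using the triangle inequality and $\|\bm{1}-K\pt\|_2\le K$.

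\textbf{What each buys.} Your modular route is cleaner and gives the sharper bound $3\normGradPhi+\frac{4K}{\eta}$. It also avoids the sign bookkeeping of the in-place regrouping, which the paper gets wrong as written. There, the coefficient of $\bm{y}\,(\pt^\top\bm{y})$ should be $2F(\bt)\bm{r}-\frac{K}{\eta}\pt$, not $+\frac{K}{\eta}\pt$. The slip is harmless, since only norms of the residual vectors enter.

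The paper's route, done exactly, exposes a cancellation between the barrier gradient and the barrier Hessian. The residual after the add-and-subtract is precisely $-\frac{1}{\eta}\|\bm{y}-(\pt^\top\bm{y})\bm{1}\|_2^2$, whose absolute value is at most $\frac{K}{\eta}\|\bm{y}\|_2^2$. So that approach could even yield $3\normGradPhi+\frac{K}{\eta}$.
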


Lemma~\ref{lemma:non-uniform-smoothness} characterizes the local geometry of the regularized objective. The Hessian bound has two terms: a gradient-dependent term, indicating that the landscape flattens near stationary points, and a baseline curvature $\mathcal{O}\left(K\\/\eta\right)$ induced by the log-barrier. This constant term modifies the optimization geometry to ensure the learned policy remains bounded away from the simplex boundary. Furthermore, the regularization term inevitably shifts the stationary points. Following~\citet{mei2022global}, we present a generalized Łojasiewicz inequality that accounts for this shift.

\begin{restatable}[Weak Non-Uniform Łojasiewicz]{lemma}{Lojasiewicz}
    \label{lemma:NL}
    Under Assumption~\ref{asm:no_ties_r}, we have,
    \begin{align*}
        \normGradPhi \geq \left( \pt(a^*)(r(a^*) - \pt^{\top}\bm{r}) - \frac{K-1}{\eta}\right)^+.
    \end{align*}
\end{restatable}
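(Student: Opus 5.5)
We bound the Euclidean norm of $\nabla_{\bt}\barrier$ from below by the absolute value of a single coordinate, namely the one associated with the optimal arm $a^*$. The gradient splits as
\begin{align*}
    \nabla_{\bt}\barrier = \left(\text{diag}(\pt) - \pt\pt^{\top}\right)\bm{r} + \frac{1}{\eta}\left(\bm{1} - K\pt\right),
\end{align*}
using the expressions from Section~\ref{sec:preliminaries} and Section~\ref{sec:lb_sgb}. Its $a^*$-th component is
\begin{align*}
    \left[\nabla_{\bt}\barrier\right]_{a^*} = \pt(a^*)\left(r(a^*) - \pt^{\top}\bm{r}\right) + \frac{1}{\eta}\left(1 - K\pt(a^*)\right).
\end{align*}
Since $\normGradPhi \geq \big|[\nabla_{\bt}\barrier]_{a^*}\big| \geq [\nabla_{\bt}\barrier]_{a^*}$, it suffices to lower bound this component.

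The first summand is nonnegative, because $r(a^*) \geq \pt^{\top}\bm{r}$ (a convex combination of rewards whose maximum is $r(a^*)$). For the barrier summand, use $\pt(a^*) \leq 1$, which gives $1 - K\pt(a^*) \geq 1 - K = -(K-1)$. Therefore
\begin{align*}
    \normGradPhi \geq \pt(a^*)\left(r(a^*) - \pt^{\top}\bm{r}\right) - \frac{K-1}{\eta}.
\end{align*}
Finally, the norm is nonnegative, so the bound holds with the positive part $(\cdot)^+$, which gives the statement.

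Assumption~\ref{asm:no_ties_r} is needed only to make $a^*$ unique and well defined. There is no real obstacle. The one point to check is that the crude bound $1-K\pt(a^*)\ge -(K-1)$ reproduces exactly the constant $\frac{K-1}{\eta}$ in the statement, which it does.
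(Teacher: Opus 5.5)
Your proof is correct and follows essentially the same route as the paper. You both lower bound the Euclidean norm of $\nabla_{\bt}\barrier$ by its $a^*$-th coordinate, then bound the barrier contribution $\frac{1}{\eta}(1-K\pt(a^*))$ from below by $-\frac{K-1}{\eta}$. The paper phrases that last step through $\left|1-K\pt(a^*)\right|\le K-1$, whereas you use the signed bound $1-K\pt(a^*)\ge 1-K$ directly, which is slightly cleaner but equivalent.
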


Lemma~\ref{lemma:NL} establishes a gradient domination condition adapted to the regularized objective. It explicitly relates the gradient norm to the sub-optimality of the policy, showing that the algorithm converges to a neighborhood of the optimal solution as the gradient approaches zero. The size of this neighborhood is determined by the term $K/\eta$, which quantifies the unavoidable bias introduced by the barrier function to maintain feasibility.
\section{Convergence of \lbsgb}
\label{sec:convergence}


In this section, we discuss the convergence of the \lbsgb algorithm. Based on the properties presented in Section~\ref{sec:lb_sgb}, we show the general convergence results under the same assumptions of \citet{mei2024bandit}. 
Then, we derive the worst-case sample complexity of \lbsgb without the assumption on $\pi_{\bt_t}(a^*)$. 
Before stating the convergence results, we begin by establishing the stochastic properties of the \lbsgb gradient estimator. We start showing that the second moment of the stochastic gradient is (partially) controlled by the norm of the true gradient. This result allows us to provide convergence guarantees with constant learning rates.

\begin{restatable}[Self-bounding Property]{lemma}{strongGrowth}
\label{lemma:strong_growth}
    Under Assumption \ref{asm:no_ties_r}, for all $t \geq 1$, almost surely, the stochastic gradient estimator of $\barrierT$ satisfies:
    \begin{align*}
        &\mE_{t}\left[ \sampleNormGradPhiT^2 \right] \leq \frac{16 \Rmax^3 K^{3/2}}{\Delta^2} \normGradPhiT
        + \frac{2K}{\eta} \left( \frac{4K}{\eta} + \frac{16 \Rmax^3 K^{3/2}}{\Delta^2} \right).
    \end{align*}
\end{restatable}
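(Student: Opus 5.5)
The stochastic gradient splits as $\widehat{\nabla}_{\bt}\barrierT = \widehat{\nabla}_{\bt}J(\bt_t) + \nabla_{\bt}\mathcal{B}_\eta(\bt_t)$, where the second term $\bm{b}_t \coloneqq \frac{1}{\eta}(\bm{1}-K\pi_{\bt_t})$ is deterministic given $\bt_t$. The plan has three steps: separate the two terms with an elementary inequality, control the stochastic term by the known \sgb self-bounding property, and convert $\|\nabla J\|$ into $\normGradPhiT$ plus a barrier correction.

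\textbf{Step 1 (separation).} Use $\|\bm{u}+\bm{v}\|_2^2 \le 2\|\bm{u}\|_2^2 + 2\|\bm{v}\|_2^2$ inside the conditional expectation $\mE_t$. This gives
\begin{align*}
\mE_t\big[\|\widehat{\nabla}_{\bt}\barrierT\|_2^2\big] \le 2\,\mE_t\big[\|\widehat{\nabla}_{\bt}J(\bt_t)\|_2^2\big] + 2\|\bm{b}_t\|_2^2 .
\end{align*}
One could instead expand the square exactly, using unbiasedness to get the cross term $2\langle \nabla J(\bt_t), \bm{b}_t\rangle$. The cruder bound already matches the stated constants, so I will use it.

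\textbf{Step 2 (barrier term).} Compute $\|\bm{1}-K\pi\|_2^2 = K - 2K + K^2\|\pi\|_2^2 \le K^2 - K \le K^2$, since $\|\pi\|_2^2\le 1$. Hence $\|\bm{b}_t\|_2 \le K/\eta$ and $2\|\bm{b}_t\|_2^2 \le 2K^2/\eta^2$.

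\textbf{Step 3 (stochastic \sgb term).} Invoke the self-bounding property of the importance-sampling gradient from \citet{mei2024bandit} (their Lemma 4.3), which holds under Assumption~\ref{asm:no_ties_r}:
\begin{align*}
\mE_t\big[\|\widehat{\nabla}_{\bt}J(\bt_t)\|_2^2\big] \le \frac{8\Rmax^3K^{3/2}}{\Delta^2}\,\|\nabla_{\bt}J(\bt_t)\|_2 .
\end{align*}
By the triangle inequality and Step 2, $\|\nabla J(\bt_t)\|_2 \le \normGradPhiT + \|\bm{b}_t\|_2 \le \normGradPhiT + K/\eta$.

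\textbf{Assembling.} Combining the three steps yields
\begin{align*}
\frac{16\Rmax^3K^{3/2}}{\Delta^2}\normGradPhiT + \frac{16\Rmax^3K^{3/2}}{\Delta^2}\frac{K}{\eta} + \frac{2K^2}{\eta^2}.
\end{align*}
This is dominated by the claimed bound, whose additive term is $\frac{8K^2}{\eta^2} + \frac{32\Rmax^3K^{5/2}}{\Delta^2\eta}$, so there is slack.

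\textbf{Main obstacle.} The delicate part is the cited \sgb inequality, particularly if the exact constant must be re-derived here. I would first try the following route.
\begin{itemize}
\item Upper-bound $\mE_t\|\widehat{\nabla}J\|_2^2$ by $\sum_a \pi(a)R_t(a)^2\|\mathbf{e}_a-\pi\|_2^2/\pi(a)^2\cdot\pi(a)^2$. This is of order $\Rmax^2\sum_a \pi(a)(1-\pi(a))$, up to a factor 2.
\item Lower-bound $\|\nabla J\|_2$ by a multiple of $\Delta\,\pi(a)(1-\pi(a))$-type quantities, using that $\nabla J(a)=\pi(a)(r(a)-\pi^\top\bm{r})$. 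The no-ties assumption forces at least one arm's reward to differ from the mean by order $\Delta$, relative to the mass $1-\max_a\pi(a)$.
\end{itemize}
Matching these two estimates, with the extra $\Rmax K^{3/2}/\Delta^2$ factors, is exactly the content of the lemma of \citet{mei2024bandit}, so I plan to cite it rather than re-derive it.
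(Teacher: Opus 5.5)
Your proposal is correct and follows essentially the same route as the paper. Both arguments use Young's inequality to separate the stochastic and deterministic parts, and then the self-bounding lemma for the importance-sampling gradient (constant $8\Rmax^3K^{3/2}/\Delta^2$, doubled to $16$). Both then apply the triangle inequality to replace $\|\nabla J\|_2$ by the regularized gradient norm plus the barrier-gradient norm.

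The only difference is your barrier-gradient bound of $K/\eta$, obtained from the exact identity $\|\bm{1}-K\pi\|_2^2 = K^2\|\pi\|_2^2 - K$. The paper instead uses the cruder $(\sqrt{K}+K)/\eta \le 2K/\eta$. Your additive term is therefore smaller, and it is dominated by the stated one.
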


Lemma~\ref{lemma:strong_growth} establishes that the magnitude of the noise vanishes as the gradient approaches zero. In fact, as the policy approaches stationarity, the dominant source of noise diminishes. However, the presence of the bias term $b(\eta)$ is due to regularization and represents a persistent noise floor.

\subsection{Convergence Analysis Assuming Bounded $c^*$}
\label{subsec:bounded_c_star}
We can now characterize the convergence guarantees of \lbsgb by analyzing the evolution of the sub-optimality gap. The following result characterizes the performance of \lbsgb under the same assumptions as~\citet{mei2024bandit}.

\begin{restatable}[Convergence Rate and Sample Complexity]{theorem}{ConvergenceRate}
\label{thr:convergence_rate}
Under Assumption \ref{asm:no_ties_r}, selecting the learning rate $\alpha = \mathcal{O}(1)$, the \lbsgb algorithm guarantees: 
\begin{align*}
    \mE\left[\left(\pi^* - \pi_{\bt_T} \right)^{\top} \bm{r}\right] &\leq \left(1-\frac{1}{2}\sqrt{\frac{2\alpha^2b}{\eta c^*}}\right)^T\mE\left[\left(\pi^* - \pi_{\bt_0} \right)^{\top} \bm{r}\right] + \sqrt{\frac{2bc^*}{\eta}},
\end{align*}
where $b = \mathcal{O}(K)$, and $c^* \coloneqq \sup_{t\geq 0}\mE\left[\frac{1}{\pi_{\bt_t}(a^*)^2}\right] >0 $. In particular, for sufficiently small $\epsilon > 0$ and selecting the barrier parameter as $\eta=\mathcal{O}\left(\epsilon^{-2}\right)$, the number of iterations sufficient to ensure $\mE\left[\left(\pi^* - \pi_{\bt_T} \right)^{\top} \bm{r}\right] \leq \epsilon$ is:\footnote{$\tilde{\mathcal{O}}$ hides logarithmic factors.}
    \begin{align*}
         T = \tilde{O}\left(c^*K^{3/2} \Rmax^3\Delta^{-2} \epsilon^{-1}\right).
    \end{align*}

\end{restatable}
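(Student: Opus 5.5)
We follow the standard constant-step-size analysis of \citet{mei2024bandit}, corrected as in \citet{baudrydoes}, with Lemmas~\ref{lemma:non-uniform-smoothness}, \ref{lemma:NL} and~\ref{lemma:strong_growth} in place of their unregularized counterparts.

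\textbf{Step 1: one-step ascent.} We first obtain a one-step ascent inequality for $\Phi_\eta$. Lemma~\ref{lemma:non-uniform-smoothness} bounds the Hessian along the segment $[\bt_t,\bt_{t+1}]$ by $3(\normGradPhiT + 5K/\eta)$. Taking $\alpha=\mathcal{O}(1)$ small enough keeps the update inside the region where this local bound stays within a constant factor of its value at $\bt_t$; this uses the boundedness of the stochastic gradient, since $\|\widehat\nabla J\|\le 2\Rmax$ and the barrier gradient has norm at most $K/\eta$. A second-order Taylor expansion then gives
\[
\mE_t[\Phi_\eta(\bt_{t+1})] \ge \Phi_\eta(\bt_t) + \alpha\normGradPhiT^2 - \tfrac{3\alpha^2}{2}\Bigl(\normGradPhiT+\tfrac{5K}{\eta}\Bigr)\mE_t\bigl[\sampleNormGradPhiT^2\bigr].
\]
Plugging in Lemma~\ref{lemma:strong_growth} and choosing $\alpha \lesssim \Delta^2/(K^{3/2}\Rmax^3)$ lets $\alpha\normGradPhiT^2/2$ absorb the gradient-proportional noise. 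What remains is an additive floor of order $\alpha^2 b/\eta$ with $b=\mathcal{O}(K)$. This gives
\[
\mE_t[\Phi_\eta(\bt_{t+1})] \ge \Phi_\eta(\bt_t) + \tfrac{\alpha}{2}\normGradPhiT^2 - \tfrac{\alpha^2 b}{\eta}.
\]

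\textbf{Step 2: recursion in the sub-optimality gap.} The quantity we track is $\delta_t=(\pi^*-\pi_{\bt_t})^\top\bm r$. It does not coincide with $\Phi^*_\eta-\Phi_\eta(\bt_t)$, because the barrier term $\frac1\eta\sum_a\log\pi(a)$ enters $\Phi_\eta$. I would work with the regularized gap and convert to $\delta_t$ at the end, or track $\delta_t$ directly by noting that the barrier's contribution to the ascent is bounded by $K/\eta$-type terms.

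Lemma~\ref{lemma:NL} gives $\normGradPhiT \ge \pi_{\bt_t}(a^*)\delta_t - (K-1)/\eta$. Squaring with $(x-y)^2\ge x^2/2-y^2$ yields
\[
\normGradPhiT^2\ge \tfrac12\pi_{\bt_t}(a^*)^2\delta_t^2 - \mathcal{O}(K^2/\eta^2).
\]
Next comes the correction of \citet{baudrydoes}. By Cauchy–Schwarz,
\[
\mE[\delta_t] \le \sqrt{\mE[\pi_{\bt_t}(a^*)^{-2}]}\,\sqrt{\mE[\pi_{\bt_t}(a^*)^2\delta_t^2]},
\]
so $\mE[\pi_{\bt_t}(a^*)^2\delta_t^2]\ge \mE[\delta_t]^2/c^*$. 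Taking total expectations then gives the scalar recursion
\[
x_{t+1}\le x_t - \tfrac{\alpha}{4c^*}x_t^2 + \tfrac{\alpha^2 b'}{\eta}, \qquad x_t=\mE[\delta_t].
\]

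\textbf{Step 3: solving the recursion.} Linearize around the fixed point $x_\infty=\sqrt{4bc^*/\eta}$ (up to the constants in the statement). Whenever $x_t\ge x_\infty/2$, the quadratic term dominates the linear contraction, so $x_{t+1}-x_\infty \le (1-\tfrac{\alpha x_\infty}{4c^*})(x_t-x_\infty)$. The contraction factor $\tfrac{\alpha x_\infty}{4c^*}$ is exactly of the form $\tfrac12\sqrt{2\alpha^2 b/(\eta c^*)}$. Unrolling gives the stated geometric rate plus the floor $\sqrt{2bc^*/\eta}$.

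\textbf{Step 4: sample complexity.} Setting $\sqrt{2bc^*/\eta}\le\epsilon/2$ forces $\eta\asymp bc^*\epsilon^{-2}$. The contraction factor is then $\Theta(\alpha\epsilon/c^*)$, so $T=\tilde{\mathcal{O}}(c^*\alpha^{-1}\epsilon^{-1})$. Substituting $\alpha^{-1}\asymp K^{3/2}\Rmax^3\Delta^{-2}$ gives the claim.

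\textbf{Main obstacle.} The hardest part is Step 1 together with the gap mismatch in Step 2. Non-uniform smoothness only controls the Hessian at $\bt$, so extending it along the step needs a Mei-style argument that the gradient norm varies by at most a constant factor over an $\mathcal{O}(1)$ step. Converting between the regularized gap and $\delta_t$ without losing the $\epsilon^{-1}$ rate is equally delicate. I would first try to absorb the barrier-value difference, which is of order $K\log(1/\pi_{\min})/\eta$, into the floor term, using the interior maximizer from Appendix~\ref{apx:maxima}.
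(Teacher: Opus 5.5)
Your outline matches the paper's: ascent on $\Phi_\eta$ from Lemmas~\ref{lemma:non-uniform-smoothness} and~\ref{lemma:strong_growth}, the weak \L{}ojasiewicz bound, the Cauchy--Schwarz step giving $\mE[\pi_{\bt_t}(a^*)^2\delta_t^2]\ge\mE[\delta_t]^2/c^*$, a scalar recursion, and $\eta\asymp bc^*\epsilon^{-2}$. The gap is the step you flag yourself: turning ascent on $\Phi_\eta$ into a recursion for $\delta_t$. The fix you say you would try first does not work. If you track the regularized gap $\Phi_\eta(\bt^*)-\Phi_\eta(\bt_t)$, you lose the gradient-domination step. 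Lemma~\ref{lemma:NL} lower-bounds $\|\nabla_{\bt}\Phi_\eta(\bt_t)\|_2$ by $\pi_{\bt_t}(a^*)\delta_t$, not by the regularized gap. The two gaps differ by the barrier term $\frac{1}{\eta}\sum_a\log\frac{\pi_{\bt^*}(a)}{\pi_{\bt_t}(a)}$, which is of order $\frac{K}{\eta}\log\frac{1}{\min_a\pi_{\bt_t}(a)}$. Nothing in the theorem controls the probabilities of suboptimal arms ($c^*$ involves only $a^*$), so this term cannot be absorbed into a constant floor. Converting ``at the end'' does not help either, because the comparison is needed at every step of the recursion.

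What closes the gap is your second option, made precise. Write $\delta_{t+1}-\delta_t=\Phi_\eta(\bt_t)-\Phi_\eta(\bt_{t+1})+\frac{1}{\eta}\sum_a\log\frac{\pi_{\bt_{t+1}}(a)}{\pi_{\bt_t}(a)}$. Then bound the per-step barrier increment by convexity of log-sum-exp: $\sum_a\log\frac{\pi_{\bt_{t+1}}(a)}{\pi_{\bt_t}(a)}\le\langle\mathbf{1}-K\pi_{\bt_t},\bt_{t+1}-\bt_t\rangle\le 2\alpha K(\sqrt{2}\Rmax+2K/\eta)$ almost surely (Lemma~\ref{lemma:barrier_difference}).

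This increment is first order in $\alpha$. After also bounding the cross terms $\frac{\alpha K}{\eta}\|\nabla_{\bt}\Phi_\eta(\bt_t)\|_2$ with Lemma~\ref{lemma:UB_gradient_norm} and $\delta_t\le 2\Rmax$, the recursion reads $x_{t+1}\le x_t-\frac{\alpha}{4c^*}x_t^2+\frac{\alpha b}{\eta}$ with $b=\mathcal{O}(K\Rmax)$. It is not the $\frac{\alpha^2 b'}{\eta}$ floor you wrote. This first-order floor is exactly what makes the bias $\sqrt{2bc^*/\eta}$ in the statement independent of $\alpha$. Your Step~2 recursion would instead give a fixed point $\sqrt{4\alpha b'c^*/\eta}$, so Steps~2 and~3 are inconsistent as written.

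A smaller point: the contraction you claim in Step~3 does not follow when $x_t<x_\infty$. There $x_t-x_\infty<0$, and the derived factor $1-\frac{\alpha}{4c^*}(x_t+x_\infty)$ is smaller than the one you claim, which gives a weaker bound. In that regime you instead need monotonicity of $x\mapsto x-\frac{\alpha}{4c^*}x^2$ on $[0,x_0]$, i.e.\ $\alpha\lesssim c^*/x_0$, to keep $x_{t+1}\le x_\infty$. This is the helper-sequence argument in the appendix, and it is the origin of the constraint $\alpha\le c^*/r_0$ in the paper's choice of learning rate.
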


Theorem~\ref{thr:convergence_rate} provides a last-iterate convergence guarantee for \lbsgb to an $\epsilon$-optimal policy under the ``exploration'' assumption. Our approach exploits a recurrence of the instantaneous regret~\cite{montenegro2024learning} that allows controlling the bias term $\beta = \sqrt{2bc^*/\eta}$. This analysis highlights a structural trade-off: while the log-barrier prevents the vanishing gradient problem by ensuring that the optimal action $a^*$ is sampled with sufficient frequency, it simultaneously prevents the policy from learning the optimal deterministic policy. Hence, it is necessary to carefully tune the barrier parameter $\eta$ to ensure sufficient exploration without preventing it from learning the optimal one. In the limit $\eta \rightarrow \infty$, the regularized objective $\barrier$ asymptotically recovers the unregularized performance index $\mathcal{J}(\bt)$, eliminating the bias $\beta$. While Theorem~\ref{thr:convergence_rate} matches the optimal rate of \sgb for a well-behaved $c^*$, Theorem~\ref{thr:worst_case} will show that \lbsgb guarantees convergence even when $c^*$ is potentially unbounded.

\subsection{Worst-case Sample Complexity}
\label{subsec:worst_case}

In the previous section, we established that \lbsgb matches the optimal $\widetilde{\mathcal{O}}\left(\epsilon^{-1}\right)$ sample complexity when the optimization landscape is well behaved, i.e., when $c^*$ is bounded. However, $c^*$ can grow arbitrarily large if the policy is driven toward a suboptimal deterministic vertex of the policy's simplex. Here we show that the log-barrier structurally prevents the policy from collapsing, guaranteeing global convergence without requiring any assumptions on $c^*$. In the following, we provide the sample complexity and regret results for \lbsgb in this setting.

\begin{restatable}[Worst-case Sample Complexity]{theorem}{worstCase}
    \label{thr:worst_case}
    Under Assumption~\ref{asm:no_ties_r}, for a target accuracy $\epsilon>0$, selecting a learning rate $\alpha = \mathcal{O}\left(\Delta^2K^{-7/2}\epsilon^{
    2}\right)$ and the barrier parameter $\eta = 2K\epsilon^{-1}$, the number of iterations $T$ required by the \lbsgb to guarantee $\frac{1}{T}\sum_{t=0}^{T-1} \mE\left[(\pi^* - \pi_{\bt_{t}})^{\top} \bm{r}\right] \leq \epsilon$ is:
    \begin{align*}
    T = \mathcal{O}\left(K^{11/2}\Delta^{-2}\epsilon^{-5}\right).
    \end{align*}
\end{restatable}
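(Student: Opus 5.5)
The plan is the standard nonconvex SGA ascent argument on the regularized objective $\barrier$, with the log-barrier replacing the missing assumption on $c^*$. Two ingredients make this work. First, $\barrier$ is bounded above by $\Rmax$, since the barrier term is nonpositive. Second, along the trajectory $\barrier$ is bounded below in expectation, because $\bt_1=\bm 0$ gives $\Phi_\eta(\bt_1)\ge -\Rmax - K\log K/\eta$. Hence the total expected ascent is at most $2\Rmax + K\log K/\eta$.

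\textbf{Step 1 (descent inequality).} Lemma~\ref{lemma:non-uniform-smoothness} only gives smoothness that depends on the gradient norm. I would therefore first show that one SGA step with small $\alpha$ stays in a region where the Hessian bound is controlled. The stochastic gradient is bounded almost surely by $\mathcal{O}(\Rmax\sqrt K\cdot\max_a \pi(a)/\pi(a)) + \mathcal{O}(K/\eta)$, where the importance-sampling factor cancels inside $\nabla(\pi^\top\hat{\bm r})$. This gives an effective constant $L_{\mathrm{eff}} = \mathcal{O}(\Rmax\sqrt K + K/\eta)$ along the segment $[\bt_t,\bt_{t+1}]$. A second-order Taylor expansion then yields
\begin{align*}
\mE_t[\Phi_\eta(\bt_{t+1})] \ge \Phi_\eta(\bt_t) + \alpha\|\nabla\Phi_\eta(\bt_t)\|^2 - \tfrac{L_{\mathrm{eff}}\alpha^2}{2}\mE_t\|\widehat\nabla\Phi_\eta(\bt_t)\|^2 .
\end{align*}

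\textbf{Step 2 (absorb noise).} I plug in Lemma~\ref{lemma:strong_growth}, which bounds the second moment by $A\|\nabla\Phi_\eta\| + B$ with $A=\mathcal{O}(\Rmax^3K^{3/2}\Delta^{-2})$ and $B=\mathcal{O}(K^{5/2}\Delta^{-2}/\eta)$. Using Young's inequality $A\|\nabla\Phi\|\le \|\nabla\Phi\|^2/(\alpha L_{\mathrm{eff}}) + \alpha L_{\mathrm{eff}}A^2/4$ and $\alpha L_{\mathrm{eff}}\le 1$, I get $\mE_t[\Phi_\eta(\bt_{t+1})]\ge \Phi_\eta(\bt_t) + \tfrac{\alpha}{2}\|\nabla\Phi_\eta(\bt_t)\|^2 - \mathcal{O}(\alpha^2(A+B)+\alpha^3A^2)$. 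Telescoping and dividing by $\alpha T$ gives
\[
\frac1T\sum_t\mE\|\nabla\Phi_\eta(\bt_t)\|^2 \le \frac{2(2\Rmax+K\log K/\eta)}{\alpha T} + \mathcal{O}(\alpha (A+B)).
\]

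\textbf{Step 3 (from gradients to suboptimality).} This is the step where $c^*$ would normally enter, and it is the main obstacle. Lemma~\ref{lemma:NL} gives $\pi_{\bt}(a^*)\,\delta(\bt) \le \|\nabla\Phi_\eta\| + (K-1)/\eta$. I need a deterministic lower bound on $\pi_{\bt_t}(a^*)$ instead of an expectation over inverse squares. I would obtain it from the gradient itself. Writing the component $a$ of $\nabla\Phi_\eta$ as $\pi(a)(r(a)-\pi^\top\bm r) + (1-K\pi(a))/\eta$, any arm with $\pi(a)\le 1/(2K)$ has a component of at least $1/(2\eta) - \pi(a)\cdot 2\Rmax$. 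Hence either $\pi(a^*) \ge \min\{1/(2K),\ 1/(8\Rmax\eta)\}$, or $\|\nabla\Phi_\eta\|\ge 1/(4\eta)$. With $\eta=2K/\epsilon$, the first case gives $\pi(a^*)\gtrsim \epsilon/K$, so $\delta \le (K/\epsilon)(\|\nabla\Phi_\eta\| + \epsilon/2)$. In the second case I use $\delta\le 2\Rmax\le 8\Rmax\eta\|\nabla\Phi_\eta\|$. In both cases $\delta(\bt_t)\le \epsilon/2 + \mathcal{O}(K\epsilon^{-1})\|\nabla\Phi_\eta(\bt_t)\|$, absorbing constants. By Jensen, averaging gives
\[
\frac1T\sum_t\mE[\delta(\bt_t)] \le \frac\epsilon2 + \mathcal{O}(K\epsilon^{-1})\sqrt{\tfrac1T\textstyle\sum_t\mE\|\nabla\Phi_\eta\|^2}.
\]

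\textbf{Step 4 (parameters).} It suffices to make the average squared gradient at most $\mathcal{O}(\epsilon^4/K^2)$. The noise term $\alpha(A+B)$ forces $\alpha = \mathcal{O}(\Delta^2K^{-7/2}\epsilon^{2})$, up to the $\Rmax$ and $\epsilon$ bookkeeping, which matches the stated choice. The optimization term then requires $T \gtrsim K^2\epsilon^{-4}\cdot \alpha^{-1}\cdot \mathcal{O}(1)$, which is where $K^{11/2}\Delta^{-2}\epsilon^{-5}$ should emerge once the $\log K$ and $\eta$-dependent pieces are tracked. I expect the exponent accounting to require care. In particular, I need to check that the $B$ term, which scales as $K^{5/2}/\eta$, does not dominate, and that the Step 1 local-smoothness control holds uniformly without any lower bound on $\pi$. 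For the latter, I would bound the Hessian on the segment by re-applying Lemma~\ref{lemma:non-uniform-smoothness} with the almost-sure gradient bound.
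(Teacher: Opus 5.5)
Your Step~3 has a genuine gap, and it is the step the whole theorem hinges on. Lemma~\ref{lemma:NL} gives $\delta(\bt)\le\bigl(\|\nabla_{\bt}\Phi_\eta(\bt)\|_2+(K-1)/\eta\bigr)/\pi_{\bt}(a^*)$. The only lower bound your case split provides is $\pi_{\bt}(a^*)\ge\min\{1/(2K),1/(8\Rmax\eta)\}=\epsilon/(16\Rmax K)$ for small $\epsilon$. Dividing the bias $(K-1)/\eta\approx\epsilon/2$ by this leaves an additive term of about $8\Rmax(K-1)$, not $\epsilon/2$, and that is vacuous because $\delta\le 2\Rmax$ always. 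In passing from ``$\delta\le(K/\epsilon)(\|\nabla\Phi_\eta\|+\epsilon/2)$'' to ``$\delta\le\epsilon/2+\mathcal{O}(K\epsilon^{-1})\|\nabla\Phi_\eta\|$'' you dropped the factor $K/\epsilon$ in front of $\epsilon/2$. Even in the branch $\pi_{\bt}(a^*)\ge 1/(2K)$ you only get $(K-1)\epsilon$.

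This is structural, not a matter of constants. The barrier can only force $\pi_{\bt}(a^*)=\Omega(1/\eta)$, and dividing the $\Theta(K/\eta)$ bias of Lemma~\ref{lemma:NL} by that always leaves $\Theta(K)$. Lemma~\ref{lemma:NL} is exactly the inequality that brings $c^*$ into Theorem~\ref{thr:convergence_rate}, so it cannot give a $c^*$-free bound.

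The missing idea is to use the $a^*$-coordinate of the gradient exactly, without the relaxation $|1-K\pi_{\bt}(a^*)|\le K-1$. From $\pi_{\bt}(a^*)\,\delta(\bt)=\partial_{\theta(a^*)}\Phi_\eta(\bt)-\frac{1}{\eta}+\frac{K}{\eta}\pi_{\bt}(a^*)$ you get $\delta(\bt)\le\bigl(\|\nabla_{\bt}\Phi_\eta(\bt)\|_2-1/\eta\bigr)^+/\pi_{\bt}(a^*)+K/\eta$. The $-1/\eta$ contributed by the barrier cancels the gradient whenever $\|\nabla_{\bt}\Phi_\eta(\bt)\|_2\le 1/\eta$. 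This yields $\delta\le K/\eta$ with no lower bound on $\pi_{\bt}(a^*)$ at all, which is the paper's Lemma~\ref{lemma:approx_global_optimality}. With this identity your case split does go through.

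Even after that repair, your route does not deliver $\epsilon^{-5}$, for two reasons.

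First, the Jensen step. Converting through $\frac{1}{T}\sum_t\mE\|\nabla\Phi_\eta\|\le\bigl(\frac{1}{T}\sum_t\mE\|\nabla\Phi_\eta\|^2\bigr)^{1/2}$ forces the average squared gradient down to $\mathcal{O}(\epsilon^4/K^2)$. The paper instead counts the set $I_{\mathrm{bad}}$, i.e.\ it uses $\delta(\bt_t)\le K/\eta+2\Rmax\,\mathbb{I}\{\|\nabla_{\bt}\Phi_\eta(\bt_t)\|_2>1/\eta\}\le K/\eta+2\Rmax\eta^2\|\nabla_{\bt}\Phi_\eta(\bt_t)\|_2^2$. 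This only needs $\mathcal{O}(\epsilon^3/K^2)$.

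Second, the noise floor. The floor of Step~2 must scale as $\alpha/\eta$, plus terms of order $\alpha^2$. Your stated floor $\mathcal{O}(\alpha(A+B))$ contains $\alpha A$, which does not decay with $\eta$. The paper obtains $\mathcal{O}(W\alpha/\eta)$ in Theorem~\ref{thr:convergence} by using the gradient-dependent per-step smoothness $\frac{5}{3}\|\nabla_{\bt}\Phi_\eta(\bt_t)\|_2+15K/\eta$ of Lemma~\ref{lemma:non_uniform_smoothness_iterates}. Its product with $A\|\nabla_{\bt}\Phi_\eta\|_2$ is then absorbed into $-\alpha\|\nabla_{\bt}\Phi_\eta\|_2^2$, and every leftover term carries a $1/\eta$. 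Keeping your Young bound as $L_{\mathrm{eff}}\alpha B+L_{\mathrm{eff}}^2\alpha^2A^2$ would also suffice, since the $A^2$ term is second order in $\alpha$. Your Step~1 is fine and needs no local argument: the bound of Lemma~\ref{lemma:UB_gradient_norm} holds at every $\bt$, so $\Phi_\eta$ is globally $\mathcal{O}(\Rmax+K/\eta)$-smooth.

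With your floor and your target one gets $\alpha\lesssim\Delta^2K^{-7/2}\epsilon^4$ and $T\gtrsim K^{11/2}\Delta^{-2}\epsilon^{-8}$. Even inserting the stated $\alpha\propto\epsilon^2$ into your $T\gtrsim K^2\epsilon^{-4}\alpha^{-1}$ gives $\epsilon^{-6}$. So in Step~4 the choice of $\alpha$ and the exponent $-5$ are asserted rather than derived.
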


The degradation from the optimal $\widetilde{\mathcal{O}}\left(\epsilon^{-1}\right)$ sample complexity to the worst-case $\mathcal{O}\left(\epsilon^{-5}\right)$ rate is due to how the optimization trajectories are handled in the proofs. When $c^*$ is assumed to be bounded, the analysis relies on the policy being well-behaved, which enables a last-iterate convergence guarantee. Without this assumption, it is necessary to explicitly control the iterations in which the policy is near suboptimal vertices of the simplex. Hence, unlike in Theorem~\ref{thr:convergence_rate}, in Theorem~\ref{thr:worst_case} we need to establish an average-iterate convergence guarantee by analyzing the instantaneous regret. This approach is necessary because we need to bound the frequency of ``bad'' iterations across the entire learning process, thereby worsening the sample complexity. In the appendix (Corollary~\ref{cor:regret}), we provide the regret bound for \lbsgb.

\subsection{Discussion}
\label{subsec:discussion}
Table~\ref{tab:discussion} summarizes our theoretical contributions in relation to existing state-of-the-art results. Specifically, assuming $c^* < \infty$, \lbsgb matches the optimal $\tilde{\mathcal{O}}\left(\epsilon^{-1}\right)$ sample complexity result up to logarithmic factors. This demonstrates that our log-barrier mechanism fully preserves the optimal learning efficiency of unregularized methods in favorable landscapes. When this assumption is not met, \lbsgb establishes an $\mathcal{O}\left(\epsilon^{-5}\right)$ sample complexity. We obtain this worst-case guarantee relying solely on the properties of the log-barrier regularization, without requiring any complex algorithmic tweaks. By comparison,~\citet{ding2022globaloptimumconvergencemomentumbased} achieves a $\tilde{\mathcal{O}}\left(\epsilon^{-4.5}\right)$ rate, but their method relies on variance reduction techniques and a decaying learning rate schedule, making a direct comparison with \lbsgb not really possible. Similarly,~\citet{zhang2021sample} require explicit action clipping to achieve their $\mathcal{O}\left(\epsilon^{-6}\right)$ bound. Finally, our result is strictly tighter than the $\mathcal{O}\left(\epsilon^{-6}\right)$ sample complexity established by~\citet{yuan2022general} for the pure log-barrier approach. Our result demonstrates that log-barrier regularization is sufficient to enforce exploration without additional algorithmic modifications. 


\begin{table}[t]
\caption{Comparison of sample complexities for different PG algorithms in the MAB framework. $^\dagger$ We define $c^* \coloneqq \sup_t\mE \left[\pi_{\bt_t}(a^*)^{-2}\right]$.}
\centering
\begin{small}
\begin{tabular}{lccc}
\toprule
Reference & Sample Complexity & Method & Assumption on $c^{*\dagger}$\\
\midrule
\citet{mei2024bandit} & $\mathcal{O}\left(\epsilon^{-1}\right)$ & Vanilla PG (\sgb)& \cmark\\
\citet{mei2022role} & $\mathcal{O}\left(\epsilon^{-1}\right)$ & \npg with Baseline& \cmark\\
\textbf{This work}  & $\bm{\widetilde{\mathcal{O}}\left(\epsilon^{-1}\right)}$ & \textbf{Log-barrier (\lbsgb)}  & \cmark\\
\midrule
\citet{zhang2021sample}  & $\mathcal{O}\left(\epsilon^{-6}\right)$ & Log-barrier, Clipping  &   \xmark\\
\citet{yuan2022general}  & $\mathcal{O}\left(\epsilon^{-6}\right)$ & Log-barrier  &   \xmark\\
\textbf{This work}  & $\bm{\mathcal{O}\left(\epsilon^{-5}\right)}$ & \textbf{Log-barrier (\lbsgb)} & \xmark\\
\bottomrule
\end{tabular}
\end{small}
\label{tab:discussion}
\end{table}
\begin{figure}[t]
    \centering
    \makebox[\textwidth][c]{
        \begin{tabular}{@{}c@{\hspace{0.2em}}c@{\hspace{0.5em}}c@{\hspace{0.5em}}c@{}}
            
            & $K=10$ & $K=100$ & $K=1000$ \\
            \noalign{\vspace{0.5em}}
            
            \rotatebox[origin=c]{90}{$\alpha = 0.01$} &
            \begin{subfigure}[c]{0.315\textwidth}
                \includegraphics[width=\textwidth]{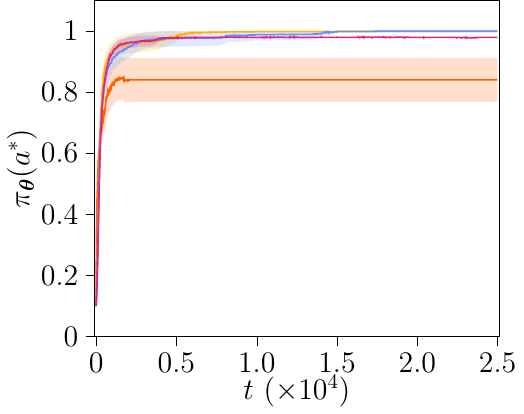} 
            \end{subfigure} &
            \begin{subfigure}[c]{0.315\textwidth}
                \includegraphics[width=\textwidth]{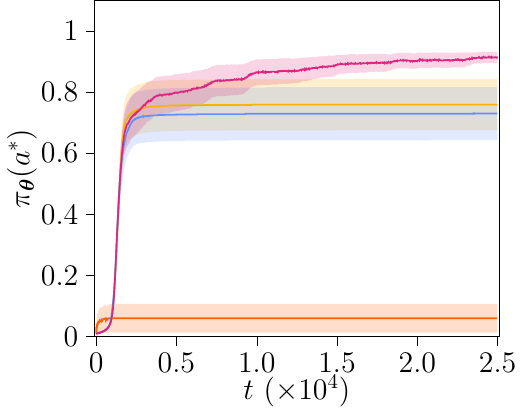}
            \end{subfigure} &
            \begin{subfigure}[c]{0.315\textwidth}
                \includegraphics[width=\textwidth]{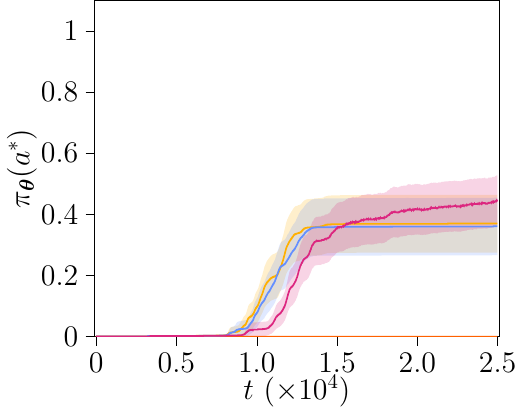}
            \end{subfigure} \\
            
            \noalign{\vspace{1.5em}}
            
            \rotatebox[origin=c]{90}{$\alpha = 0.1$} &
            \begin{subfigure}[c]{0.315\textwidth}
                \includegraphics[width=\textwidth]{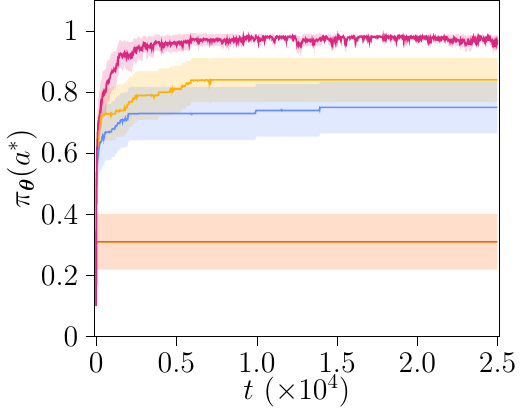} 
            \end{subfigure} &
            \begin{subfigure}[c]{0.315\textwidth}
                \includegraphics[width=\textwidth]{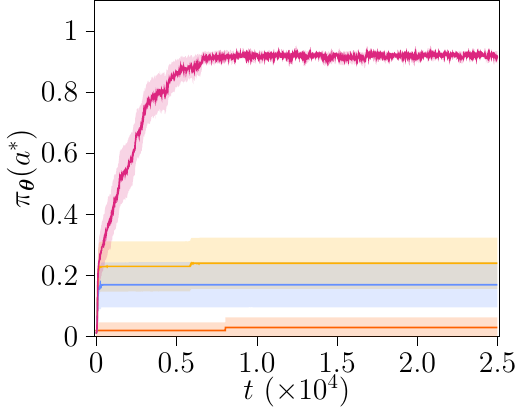}
            \end{subfigure} &
            \begin{subfigure}[c]{0.315\textwidth}
                \includegraphics[width=\textwidth]{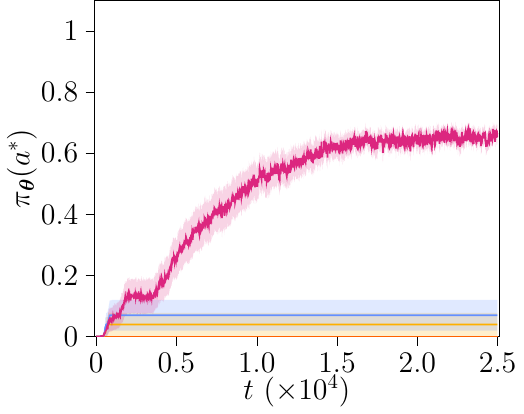}
            \end{subfigure} \\
            
        \end{tabular}
    }
    
    \vspace{1em} 
    
    \centerline{
        \legendline{lbsgb} \lbsgb \hspace{1.5em}
        \legendline{sgb} \sgb \hspace{1.5em}
        \legendline{ent} \ent \hspace{1.5em}
        \legendline{npg} \npg    
    }
    
    \vspace{0.5em} 
    
    \caption{Comparison between algorithms across $K=\{10, 100, 1000\}$. The top row uses a learning rate $\alpha = 0.01$, and the bottom row uses $\alpha = 0.1$. All experiments use $\Delta^* = 0.1$ (100 runs $\pm$ 95\% C.I.).}
    \label{fig:experiments_grid}
\end{figure}

\section{Experimental Results}
\label{sec:experiments}

In this section, we empirically validate the theoretical results derived in Section~\ref{sec:convergence}. We evaluate the performance of \lbsgb, comparing it against \sgb, its entropy-regularized version (\ent), and \npg. Specifically, we analyze the learning dynamics by tracking how the probability of the optimal action, $\pi_{\bt_t}(a^*)$, evolves over time. The results are aggregated over $N = 100$ independent runs. In the visualizations, the solid curves represent the sample mean, while the shaded regions denote the $95\%$ confidence intervals computed using $t$-intervals. Across all experiments, we fix the sub-optimality gap to $\Delta^* = 0.1$ and the maximum reward to $\Rmax = 1$. The exploration parameters for both \lbsgb and \ent are set according to the number of arms and learning rate. The number of steps per experiment is $T = 2.5\cdot10^4$. 
Figure~\ref{fig:experiments_grid} illustrates experiments for varying action space dimensions $K \in \{10, 100, 1000\}$. In the top row, the learning rate is $\alpha = 0.01$, while in the bottom row the learning rate is $\alpha = 0.1$. As the number of arms $K$ increases, the baselines (\sgb, \ent, and \npg) exhibit performance degradation, failing to learn the optimal arm. In contrast, \lbsgb consistently converges toward the optimal action, showing that it is significantly more robust with respect to the learning rate $\alpha$ and scales better with $K$. We refer to Appendix~\ref{apx:experimental_details} for additional results.
\section{Conclusion}
\label{sec:conclusions}

In this work, we showed that \lbsgb matches the optimal sample complexity in the gradient bandit setting when assumptions on the learning dynamics hold, demonstrating that the log-barrier regularization does not worsen the convergence of the vanilla algorithm. However, the convergence rate deteriorates in worst-case scenarios, where the policy must recover from vanishing action probabilities. We believe that implementing adaptive hyperparameters, such as dynamically tuning the barrier parameter $\eta$ throughout the learning process, would enable the algorithm to achieve faster rates. A fundamental limitation of our current framework lies in its extension to general RL. In the MAB setting, our formulation establishes that a log-barrier on the parametric policy corresponds to a spectral regularization of the FIM. However, in the RL setting, this equivalence no longer holds. Because in RL the FIM in Markov Decision Processes depends on both states and actions, true spectral regularization must account for the visitation dynamics of the joint space. In this context, applying a log barrier only to the policy ignores the state distribution entirely, rendering it a state-wise approximation of the true spectral regularization. Consequently, an interesting direction for future work would be to develop a provably correct and computationally tractable method to extend spectral regularization to the RL framework. This work thereby provides a theoretical foundation for PG-based RL methods with improved exploration.


\bibliographystyle{unsrtnat}
\bibliography{bibliography}

\newpage
\appendix
\onecolumn

\section{Detailed Related Works}
\label{apx:detailed_related_works}

\begin{table*}[t]
\caption{Comparison of convergence rates and learning rates for different PG algorithms. 
$^\dagger$ We define $c^* \coloneqq \sup_t\mE \left[\pi_{\bt_t}(a^*)^{-2}\right]$. $^\ddagger$ In~\citet{lu2024towards}, the authors show that \sgb achieves $\mathcal{O}\left(\epsilon^{-1}\right)$ or $\mathcal{O}\left(\epsilon^{-3}\right)$ depending on whether the initial learning rate is tuned to meet the strong growth condition or not. Nonetheless, it seems that their analysis has a flaw when computing the learning rate in the proof of their Theorem 5.}
\centering
\begin{small}
\resizebox{\textwidth}{!}{
\begin{tabular}{lcccc}
\toprule
Reference & Sample Complexity & Learning Rate $\alpha$ & Method & Assumptions\\
\midrule
\citet{mei2024bandit} & $\mathcal{O}\left(\epsilon^{-1}\right)$ & $\mathcal{O}\left(1\right)$ & Vanilla PG (\sgb)& No ties, $c^{*}<\infty^\dagger$\\
\citet{lu2024towards} &  $\mathcal{O}\left(\epsilon^{-3}\right)^\ddagger$ & $\mathcal{O}\left(\alpha^t\right)$ & Vanilla PG & $c^{*}<\infty$ \\
\textbf{This work}  & $\bm{\widetilde{\mathcal{O}}\left(\epsilon^{-1}\right)}$ & $\bm{\mathcal{O}\left(1\right)}$ & \textbf{Log-barrier} & \textbf{No ties, $\bm{c^{*}<\infty}$}\\
\midrule
\citet{ding2021beyond}  & $\mathcal{O}\left(\epsilon^{-2}\right)$ & $\mathcal{O}\left(1/t\right)$ & Entropy regularization&  $\inf_{t}\min_a \pi_{\bt}(a)>0$  \\
\citet{cen2021fastglobalconvergencenatural}  & $\mathcal{O}\left(\epsilon^{-2}\right)$ & $\mathcal{O}\left(1/\log t\right)$ & Entropy regularized \npg  &  Generative model \\
\citet{zhang2021sample}  & $\mathcal{O}\left(\epsilon^{-6}\right)$ & $\mathcal{O}\left(1/\log t\right)$ & Log-barrier, Clipping  &   \\
\citet{yuan2022general}  & $\widetilde{\mathcal{O}}\left(\epsilon^{-6}\right)$ & $\mathcal{O}\left(1/ t\right)$ & Log-barrier &    \\
\citet{ding2022globaloptimumconvergencemomentumbased} & $\mathcal{O}\left(\epsilon^{-4.5}\right)$ & $\mathcal{O}\left(1/\sqrt{t}\right)$ & Log-barrier, Momentum &  \\
\textbf{This work}  & $\bm{\mathcal{O}\left(\epsilon^{-5}\right)}$ & $\bm{\mathcal{O}\left(\epsilon^{2}\right)}$ & \textbf{Log-barrier} & \textbf{No ties}\\
\bottomrule
\end{tabular}}
\end{small}

\label{tab:convergence-comparison}
\end{table*}
In this section, we review the most relevant literature, focusing on sample-complexity and regret results for MABs and Markov Decision Processes (MDPs). In Table~\ref{tab:convergence-comparison}, we compare some of the works discussed in this section.

\textbf{Sample Complexity of \sgb}. The core analytical difficulty of \sgb lies in the geometry of the softmax policy, which, while satisfying smoothness assumptions, only satisfies a weak version of the Łojasiewicz inequality~\cite{mei2022global}. Furthermore, while it is standard practice to establish convergence to a stationary point under the assumption of bounded estimator variance and employing a decaying learning rate~\cite{robbins1951stochastic, zhang2020global}, converging to a stationary point is a weak requirement in the context of bandits. Since the gradient of the objective vanishes for any deterministic policy, the set of stationary points includes all vertices of the policy simplex, including all suboptimal deterministic policies~\cite{mei2024bandit}. This property implies that the gradient signal is not globally well-behaved and, as the sampling probability of the optimal arm approaches zero, the gradient signal vanishes, converging to a suboptimal policy~\cite{mei2020escaping}. Despite these difficulties, convergence to a global optimum is guaranteed~\cite{mei2024bandit}, even with no requirements on the learning rate $\alpha$~\cite{mei2024small}, establishing that \sgb needs $\mathcal{O}\left(\epsilon^{-1}\right)$ iterations to learn an $\epsilon$-optimal policy. However, as noted by~\citet{baudrydoes}, this sample complexity guarantee may become vacuous due to a hidden assumption, as the constant governing the $\mathcal{O}\left(\epsilon^{-1}\right)$ term depends on the inverse second moment of the probability assigned to the optimal arm. Furthermore,~\citet{mei2022role} showed that Natural PG (\npg,~\citet{kakade2001natural}) with an appropriate baseline also achieves  $\mathcal{O}\left(\epsilon^{-1}\right)$ sample complexity. Finally,~\citet{lu2024towards} proved that PG, in the gradient bandit setting, with exponentially decaying learning rate, achieves $\mathcal{O}\left(\epsilon^{-1}\right)$ or $\mathcal{O}\left(\epsilon^{-3}\right)$ depending on whether the initial learning rate is tuned to meet the strong growth condition or not.

\textbf{Regret of \sgb}. Unlike the theoretical analysis of traditional bandit algorithms, which exploit arm-specific statistics, \sgb is usually analyzed via trajectories determined by the global history of the reward received by the algorithm, and providing regret guarantees requires significant effort. Recently,~\citet{mei2024bandit} 
showed that with a constant learning rate $\alpha$, the \sgb algorithm can achieve $\mathcal{O}\left(\log T\right)$ regret. 
The robustness of this result has been questioned by \citet{baudrydoes}, who point out the existence of low-probability events that may make the regret linear under the current analysis. Consequently,~\citet{baudrydoes} provided an alternative proof of logarithmic regret in the $2$-armed case, conjecturing an extension to the $K$-armed setting. 

\textbf{Sample Complexity of PG in MDPs}.\footnote{Here, we distinguish between iteration complexity, defined as the number of policy updates required to reach an $\epsilon$-optimal policy, and sample complexity, which counts the total number of environment interactions. In the MAB setting, where each iteration involves exactly one sample to compute the stochastic gradient, these two measures are equivalent.} %
MDPs introduce additional complexities since it is necessary to consider the effect of the policy $\pi_{\bt}$ on the state distribution. While global convergence results for PG with exact gradient information in a tabular setting have been proved~\cite{mei2022global, agarwal2021theory}, applying these techniques in stochastic environments requires addressing the variance of trajectory-based estimators. Recent literature in the stochastic setting has shown that PG methods can achieve $\widetilde{\mathcal{O}}\left(\epsilon^{-2}\right)$ sample complexity via entropy regularization and decaying learning rates~\cite{ding2021beyond}. Similarly, entropy-regularized \npg achieves $\widetilde{\mathcal{O}}\left(\epsilon^{-2}\right)$ sample complexity, provided the regularization parameter is appropriately tuned~\cite{cen2021fastglobalconvergencenatural}. \citet{zhang2021sample} proved $\widetilde{\mathcal{O}}\left(\epsilon^{-6}\right)$ sample complexity and $\mathcal{O}\left(T^{5/6}\right)$ regret for the \texttt{REINFORCE} algorithm with log-barrier regularization, and~\citet{yuan2022general} proved that PG with log-barrier achieves $\widetilde{\mathcal{O}}\left(\epsilon^{-6}\right)$.~\citet{ding2022globaloptimumconvergencemomentumbased} demonstrated $\widetilde{\mathcal{O}}\left(\epsilon^{-9/2}\right)$ sample complexity for momentum-based PG and log-barrier regularization. Finally,~\citet{labbi2026beyond} proved that PG with $f$-divergence regularization achieves $\widetilde{\mathcal{O}}\left(\epsilon^{-1}\right)$ assuming non-vanishing action probabilities. Table~\ref{tab:convergence-comparison} summarizes the related works, providing a comparison in terms of sample complexity and required assumptions.
\section{Algorithmic Details}
\label{apx:algorithms}

\subsection{Stochastic Gradient Bandit}
In this section we report the \texttt{SGB} algorithm as it is reported in \cite{mei2024bandit}. In particular, we show the pseudo-code (Algorithm~\ref{alg:gradient_bandit}) in its basic version without baseline.

\begin{algorithm}[t]
  \caption{Stochastic Gradient Bandit (\texttt{SGB})}
  \label{alg:gradient_bandit}
  \begin{algorithmic}[1]
    \STATE {\bfseries Input:} Iterations $T$; step size $\alpha > 0$; action set $\dsb{K}$
    \STATE {\bfseries Initialize:} Parameters $\bt(a) \leftarrow 0$ for all $a \in \dsb{K}$, $t \leftarrow 1$
    \FOR{$t \in \dsb{T}$}
        \STATE Compute policy $\pi_t(a) \leftarrow \frac{e^{\bt_{t}(a)}}{\sum_{b \in \dsb{K}} e^{\bt_{t}(b)}}$ for all $a \in \dsb{K}$
        \STATE Select action $a_t \sim \pi_t$
        \STATE Observe reward $R_t(a_t)$
        \FOR{$a \in \dsb{K}$}
            \IF{$a = a_t$}
                \STATE $\bt(a) \leftarrow \bt(a) + \alpha \cdot (1 - \pi_t(a))R_t(a_t)$
            \ELSE
                \STATE $\bt(a) \leftarrow \bt(a) - \alpha \cdot \pi_t(a)R_t(a_t)$
            \ENDIF
        \ENDFOR
    \ENDFOR
  \end{algorithmic}
\end{algorithm}

\subsection{Stochastic Gradient Bandit with Entropy Regularization}
In this section we present the \texttt{SGB} algorithm with entropy regularization~\cite{mei2022global, agarwal2021theory}. The algorithm augments the instantaneous reward $R_t(a_t)$ with the (empirical) entropy regularization term $-\tau \log \pi_{\bt_t}(a_t)$, where $\tau$ is a hyper-parameter, which acts as an intrinsic exploration bonus. Algorithm~\ref{alg:gradient_bandit_ent} shows the pseudocode for \texttt{SGB} with entropy regularization.
\begin{algorithm}[t]
  \caption{\texttt{SGB} with Entropy Regularization}
  \label{alg:gradient_bandit_ent}
  \begin{algorithmic}[1]
    \STATE {\bfseries Input:} Iterations $T$; step size $\alpha > 0$; temperature $\tau \ge 0$; action set $\dsb{K}$
    \STATE {\bfseries Initialize:} Parameters $\bt(a) \leftarrow 0$ for all $a \in \dsb{K}$, $t \leftarrow 1$
    \FOR{$t \in \dsb{T}$}
        \STATE Compute policy $\pi_t(a) \leftarrow \frac{e^{\bt_{t}(a)}}{\sum_{b \in \dsb{K}} e^{\bt_{t}(b)}}$ for all $a \in \dsb{K}$
        \STATE Select action $a_t \sim \pi_t$
        \STATE Observe reward $R_t(a_t)$
        \STATE Compute regularized reward $\hat{r}_t \leftarrow R_t(a_t) - \tau \log \pi_{\bt_t}(a_t)$
        \FOR{$a \in \dsb{K}$}
            \IF{$a = a_t$}
                \STATE $\bt(a) \leftarrow \bt(a) + \alpha \cdot (1 - \pi_t(a))\hat{r}_t$
            \ELSE
                \STATE $\bt(a) \leftarrow \bt(a) - \alpha \cdot \pi_t(a)\hat{r}_t$
            \ENDIF
        \ENDFOR
    \ENDFOR
  \end{algorithmic}
\end{algorithm}

\subsection{Natural Policy Gradient}
\label{apx:npg_bandit}
In this section, we present the Natural Policy Gradient (\npg) variant of \texttt{SGB}, following the derivation of \cite{chung2021beyond}. The \npg updates the policy parameters $\bt$  along the steepest direction by preconditioning the standard gradient with the inverse Fisher Information Matrix, $\fim^{-1} \nabla_{\bt}\mathcal{J}(\bt)$, where $\fim$ serves as a metric tensor which accounts for the curvature of the policy space~\cite{kakade2001natural}. In \texttt{SGB}, the Fisher information matrix is defined as:
\begin{align*}
\fim = \mE_{a\sim \pt}\left[ \nabla_{\bt} \log\pi_{\bt}(a)\nabla_{\bt} \log\pi_{\bt}(a)^{\top} \right] = \text{diag}(\pi_{\bt}) - \pi_{\bt} \pi_{\bt}^{\top}, 
\end{align*}
which is singular, hence not invertible. Hence, we cannot directly compute the update direction inverting the FIM, but we can solve the linear system,
\begin{align*}
\fim\cdot x = \mathbf{e}_{a_t} -\pt
\end{align*}
obtaining:
\begin{align*}
x = \frac{1}{\pi_{\bt}(a_t)}\mathbf{e}_{a_t} +\beta\bm{1},
\end{align*}
where $\mathbf{e}_{a_t}$ is an indicator vector defined as the $a_t$-th standard basis vector, $\bm{1}$ is the all-ones vector, and $\beta \in \mR$ is a scalar parameter representing the null space of $\fim$. Since the softmax parametrization is shift-invariant, the choice of $\beta$ does not affect the resulting policy, hence we can select $\beta = 0$ without loss of generality. Substituting the natural gradient direction $x$ into the stochastic gradient ascent update, we obtain:
\begin{align*}
    \bt_{t+1} \leftarrow \bt_{t} + \alpha R_t(a_t) \left( \frac{1}{\pi_{\bt_t}(a_t)}\mathbf{e}_{a_t}\right).
\end{align*}

Algorithm~\ref{alg:npg_bandit} shows the pseudo-code for Natural \texttt{SGB}. Unlike the standard \texttt{SGB} update, which updates all arms at every step, increasing the probability of the selected arm and decreasing the others, the \npg update is sparse, updating only the selected arm.
\begin{algorithm}[t]
  \caption{Natural Policy Gradient (\npg)}
  \label{alg:npg_bandit}
  \begin{algorithmic}[1]
    \STATE {\bfseries Input:} Iterations $T$; step size \(\alpha > 0\); action set \(\dsb{K}\)
    \STATE {\bfseries Initialize:} Parameters \(\bt(a) \leftarrow 0\) for all \(a \in \dsb{K}\), \(t \leftarrow 1\)
    \FOR{\(t \in \dsb{T}\)}
        \STATE Compute policy \(\pi_t(a) \leftarrow \frac{e^{\bt_{t}(a)}}{\sum_{b \in \dsb{K}} e^{\bt_{t}(b)}}\) for all \(a \in \dsb{K}\)
        \STATE Select action \(a_t \sim \pi_t\)
        \STATE Observe reward \(R_t(a_t)\)
        \STATE \(\bt(a_t) \leftarrow \bt(a_t) + \alpha \cdot \frac{R_t(a_t)}{\pi_t(a_t)}\)
    \ENDFOR
  \end{algorithmic}
\end{algorithm}

\subsubsection{FIM as Covariance Matrix}
\label{apx:fim_covariance}
In the following proposition, we show that we can interpret the FIM as a covariance matrix of action features. 

\fimCovariance*
\begin{proof}
    First, observe that the expected value of the feature vectors $\mathbf{e}_a$ is the policy vector itself:
    \begin{align*}
        \mE_{a \sim \pi_{\bt}} [\mathbf{e}_a] = \sum_{a \in \mathcal{A}} \pi_{\bt}(a) \mathbf{e}_a = \pi_{\bt}.
    \end{align*}
    For the softmax parametrization, the score function is given by the centered feature vector:
    \begin{align*}
        \nabla_{\bt} \log \pi_{\bt}(a) = \mathbf{e}_a - \pi_{\bt}.
    \end{align*}
    Substituting this into the definition of the Fisher Information Matrix, we obtain:
    \begin{align*}
        \fim &= \mE_{a \sim \pi_{\bt}} \left[ \nabla_{\bt} \log \pi_{\bt}(a) \nabla_{\bt} \log \pi_{\bt}(a)^\top \right] \\
        &= \mE_{a \sim \pi_{\bt}} \left[ (\mathbf{e}_a - \pi_{\bt})(\mathbf{e}_a - \pi_{\bt})^\top \right] \\
        &= \mE_{a \sim \pi_{\bt}} \left[ (\mathbf{e}_a - \mE[\mathbf{e}_a])(\mathbf{e}_a - \mE[\mathbf{e}_a])^\top \right].
    \end{align*}
    This coincides with the definition of the covariance matrix $\text{Cov}_{\pi_{\bt}}(\mathbf{e}_a)$.
\end{proof}

\subsubsection{Log-Determinant of the FIM}
\label{apx:log_det_fim}
To explicitly connect the spectral constraint to the log-barrier formulation, we compute the determinant of the Fisher Information Matrix in the reparametrized space. Using the softmax policy with the parameter $\bar{\bt} \in \mR^{K-1}$, hence,
\begin{align}
    \pi_{\bar{\bt}}(a) =
    \begin{cases}
        \frac{e^{\bar{\theta}(a)}}{1+\sum_b e^{\bar{\theta}(b)}} & \text{if } a \ne a_K \\
        1 - \sum_{i=1}^{K-1}\pi_{\bar{\bt}}(a_i) & \text{otherwise}
    \end{cases},
\end{align}
the FIM is a $(K-1) \times (K-1)$ matrix given by $F(\bar{\bt}) = \text{diag}(\pi_{\bar{\bt}}) - \pi_{\bar{\bt}} \pi_{\bar{\bt}}^\top$, where, with slight abuse of notation, $\pi_{\bar{\bt}}$ denotes the vector of the first $K-1$ action probabilities.

We compute the determinant using the Matrix Determinant Lemma, which states that for a diagonal matrix $\mathbf{D}$ and vector $\mathbf{v}$, $\det(\mathbf{D} - \mathbf{v}\mathbf{v}^\top) = \det(\mathbf{D})(1 - \mathbf{v}^\top \mathbf{D}^{-1} \mathbf{v})$. Setting $\mathbf{D} = \text{diag}(\pi_{\bar{\bt}})$ and $\mathbf{v} = \pi_{\bar{\bt}}$, we obtain:
\begin{align*}
    \det(F(\bar{\bt})) &= \det(\text{diag}(\pi_{\bar{\bt}})) \left( 1 - \pi_{\bar{\bt}}^\top \text{diag}(\pi_{\bar{\bt}})^{-1} \pi_{\bar{\bt}} \right) \nonumber \\
    &= \left( \prod_{i=1}^{K-1} \pi_i \right) \left( 1 - \sum_{i=1}^{K-1} \pi_i \frac{\pi_i}{\pi_i} \right) \nonumber \\
    &= \left( \prod_{i=1}^{K-1} \pi_i \right) \left( 1 - \sum_{i=1}^{K-1} \pi_i \right).
\end{align*}
Then, the term in the second parenthesis simplifies exactly to the probability of the last arm, $1 - \sum_{i=1}^{K-1} \pi_i = \pi_K$ since $\sum_{i=1}^K \pi_i = 1$. Consequently, the determinant becomes the product of the probabilities of all arms:
\begin{align*}
    \det(F(\bar{\bt})) = \left( \prod_{i=1}^{K-1} \pi_i \right) \cdot \pi_K = \prod_{i=1}^{K} \pi_i.
\end{align*}
Finally, substituting this result into the log-determinant term yields:
\begin{align*}
    \log \det(F(\bar{\bt})) = \sum_{i=1}^{K-1} \log \pi_{\bar{\bt}}(a_i) + \log\left(1 - \sum_{i=1}^{K-1}\pi_{\bar{\bt}}(a_i)\right).
\end{align*}
If we rewrite this regularization term with the softmax parametrization in Equation~\eqref{eq:softmax}, we obtain exactly the objective function in Equation~\eqref{eq:barrier}.
This derivation confirms that constraining the log-determinant of the FIM is mathematically equivalent to the sum-log-barrier regularization employed in \lbsgb.

\subsection{Log-barrier Stochastic Gradient Bandit}
\label{apx:lg-sgb}
In this section, we present the details of the \lbsgb algorithm, whose pseudocode is shown in Algorithm~\ref{alg:lb-sgb}.

\begin{algorithm}[H]
  \caption{Log-barrier Stochastic Gradient Bandit (\lbsgb)}
  \label{alg:lb-sgb}
  \begin{algorithmic}[1]
    \STATE {\bfseries Input:} Iterations $T$; step size $\alpha > 0$; barrier parameter $\eta > 0$; action set $\dsb{K}$
    \STATE {\bfseries Initialize:} Parameters $\bt(a) \leftarrow 0$ for all $a \in \dsb{K}$, $t \leftarrow 1$
    \FOR{$t \in \dsb{T}$}
        \STATE Compute policy $\pi_t(a) \leftarrow \frac{e^{\bt_{t}(a)}}{\sum_{b \in \dsb{K}} e^{\bt_{t}(b)}}$ for all $a \in \dsb{K}$
        \STATE Select action $a_t \sim \pi_t$
        \STATE Observe reward $R_t(a_t)$
        \FOR{$a \in \dsb{K}$}
            \IF{$a = a_t$}
                \STATE $\bt(a) \leftarrow \bt(a) + \alpha \cdot\left( (1 - \pi_t(a))R_t(a_t) + \frac{1}{\eta} (1 - K \cdot\pi_t(a))\right)$
            \ELSE
                \STATE $\bt(a) \leftarrow \bt(a) - \alpha \cdot \left(\pi_t(a)R_t(a_t) - \frac{1}{\eta} (1 - K \cdot\pi_t(a)) \right)$
            \ENDIF
        \ENDFOR
    \ENDFOR
  \end{algorithmic}
\end{algorithm}

In the following, we derive the gradient and Hessian of the \lbsgb objective. We recall the regularized performance index defined in Equation~\eqref{eq:barrier}:
\begin{align}
    \barrier \coloneqq J(\bt) + \frac{1}{\eta} \sum_{a \in \dsb{K}} \log \pi_{\bt}(a).
\end{align}

To compute the gradient of the barrier term, we first need to compute the following quantities:
\begin{align*}
    \nabla_{\bt}\log \pt(a_i) &= \nabla_{\bt}\log \frac{e^{\theta_i}}{\sum_{j=1}^{K}e^{\theta_j}} = \mathbf{e}_i - \pt \\
    \nabla_{\bt}\left(\sum_{i=1}^{K}\log\pt(a_i) \right)&= \sum_{i=1}^{K}\left(\mathbf{e}_i - \pt \right) = \mathbf{1} - K\pt,
\end{align*}
where $\mathbf{e}_i$ is the vector with all zeros and a one in the $i$-th position, and $\mathbf{1}$ is the $K$-dimensional ones vector. Then, the gradient of the barrier function is:
\begin{align*}
    \nabla_{\bt}\barrier = (\text{diag}(\pi_{\bt}) - \pi_{\bt} \pi_{\bt}^{\top}) \bm{r} + \frac{1}{\eta}\left(\mathbf{1} - K\pt\right) = A(\bt) + \frac{1}{\eta}B(\bt).
\end{align*}

Let's now consider the Hessian of the regularized function:
\begin{align*}
    H(\bt) = \frac{\partial}{\partial \bt}\left( A(\bt) + \frac{1}{\eta}B(\bt) \right),
\end{align*}
where we consider the two terms separately. For the first term, we follow the proof of Lemma 2 in \cite{mei2021leveraging}, obtaining:
\begin{align}
    \label{eq:hessian_performance}
    \left[\frac{\partial}{\partial \bt} A(\bt)\right]_{i, j} = \delta_{ij}\pi_i(r_i - \pt^{\top}r) - \pi_i\pi_j(r_i - \pt^{\top}r) - \pi_i\pi_j(r_j - \pt^{\top}r),
\end{align}
where $\delta_{ij}=\begin{cases}&1, \quad\text{if}~i=j\\&0,\quad\text{otherwise} \end{cases}$ is the Kronecker delta. The second term, representing the Jacobian of the restorative force $B(\bt)$, can be written as:
\begin{align}
    \label{eq:hessian_barrier}
    \left[\frac{\partial}{\partial \bt} B(\bt)\right]_{i, j} = \frac{\partial}{\partial \theta_j}\left( 1-K\pi_i \right) = -K(\delta_{ij}\pi_i - \pi_i\pi_j).
\end{align}
Finally, the $ij$-th entry of the Hessian of the regularized objective is:
\begin{align}
    \label{eq:barrier_hessian}
    \left[H(\bt)\right]_{i, j} = \delta_{ij}\pi_i(r_i - \pt^{\top}r) - \pi_i\pi_j(r_i + r_j - 2\pt^{\top}r) - \frac{K}{\eta}\left(\delta_{ij}\pi_i - \pi_i\pi_j\right).
\end{align}
\section{Proofs}
\label{apx:proofs}
In this appendix, the proofs for the Lemmas and Theorems presented in Sections~\ref{sec:lb_sgb} and~\ref{sec:convergence} are provided.

\subsection{Proofs for Section~\ref{sec:lb_sgb}}
\label{apx:proof_sec_lb_sgb}
Here, we provide the proofs for the properties of the \lbsgb algorithm provided in Section~\ref{sec:lb_sgb}. Lemma~\ref{lemma:non-uniform-smoothness} describes the boundedness of the spectral radius of the barrier's Hessian, and Lemma~\ref{lemma:non_uniform_smoothness_iterates} generalizes the notion of non-uniform smoothness of the barrier between consecutive learning iterates. Lemma~\ref{lemma:ub_sample_gradient_norm} provides an upper bound for the sample gradient norm. Finally, Lemma~\ref{lemma:NL} provides the notion of non-uniform Łojasiewicz for our objective $\barrier$.

\NUSmoothness*
\begin{proof}
    Let $H(\bt)$ be the Hessian matrix of the barrier function $\barrier$, where each entry of this matrix is described by line~\eqref{eq:barrier_hessian}. To show that $\barrier$ is non-uniformly smooth, we need to show that the spectral radius of $H(\bt)$ is upper-bounded. For this purpose, pick $y\in\mR^{K}$. Then

    \begin{align*}
        \left| y^{\top}H(\bt)y \right| &= \sum_{i=1}^{K}\sum_{j=1}^{K}y_i\left[H(\bt)\right]_{i, j}y_j\\
        & = \sum_{i=1}^{K}\sum_{j=1}^{K}y_i\left(\left[A(\bt)\right]_{i, j}+\left[B(\bt)\right]_{i, j}\right)y_j
    \end{align*}
    Where the terms $A(\bt)$ and $B(\bt)$ are defined as:
    \begin{align*}
         A(\bt) \coloneqq \delta_{ij}\pi_i(r_i - \pt^{\top}\bm{r}) - \pi_i\pi_j(r_i + r_j - 2\pt^{\top}\bm{r}), \quad B(\bt) = - \frac{K}{\eta}\left(\delta_{ij}\pi_i - \pi_i\pi_j\right),
    \end{align*}
    See Appendix~\ref{apx:algorithms} for a detailed derivation of the Hessian of the barrier function $\barrier$. Now, analyzing the two terms separately:
    \begin{align*}
        \sum_{i=1}^{K}\sum_{j=1}^{K}y_i\left[A(\bt)\right]_{i, j}y_j = \left(\fim^{\top}\bm{r} \right)(y \odot y) - 2 \left(\fim^{\top}\bm{r} \right)y(\pt^{\top}y),
    \end{align*}
    from the proof of Lemma 2 in \cite{mei2021leveraging}. Considering the second term:
    \begin{align*}
        \sum_{i=1}^{K}\sum_{j=1}^{K}y_i\left[B(\bt)\right]_{i, j}y_j &= \sum_{i=1}^{K}\sum_{j=1}^{K}y_i\left[-K(\delta_{ij}\pi_i - \pi_i\pi_j)\right]_{i, j}y_j \\
        &=-K y^{\top} \fim y\\
        &=-K y^{\top}\left( \text{diag}(\pt) - \pt\pt^{\top} \right)y\\
        &=-K\left( y^{\top}\text{diag}(\pt)y - y^{\top}\pt\pt^{\top}y \right)\\
        &= -K\left(\pt^{\top}(y \odot y) - \pt^{\top}y\pt^{\top}y \right)
    \end{align*}

    Putting all together:
    \begin{align*}
        \left| y^{\top}H(\bt)y \right| &= \left| \left(\fim^{\top}\bm{r} \right)^{\top}(y \odot y) - 2 \left(\fim^{\top}\bm{r} \right)^{\top}y(\pt^{\top}y) -\frac{K}{\eta}\left(\pt^{\top}(y \odot y) - \pt^{\top}y(\pt^{\top}y) \right)\right| \\
        &= \left| \left(\left(\fim^{\top}\bm{r} \right)^{\top} - \frac{K}{\eta}\pt^{\top}\right)(y \odot y) - \left(2\left(\fim^{\top}\bm{r} \right)^{\top} + \frac{K}{\eta}\pt^{\top}\right)y(\pt^{\top}y) \right|\\
        &= \Bigg| \left(\left(\fim^{\top}\bm{r} \right)^{\top} + \frac{1}{\eta}\left(\bm{1} - K\pt^{\top}\right) - \frac{1}{\eta}\bm{1}\right)(y \odot y)  \\ & \qquad -\left(2\left(\fim^{\top}\bm{r} \right)^{\top} + \frac{2}{\eta}\left(\bm{1} - K\pt^{\top}\right)- \frac{1}{\eta}\bm{1} + \frac{3K}{\eta}\pt^{\top}\right)y(\pt^{\top}y)\Bigg|\\
        & \leq \left\| \left(\fim^{\top}\bm{r} \right)^{\top} + \frac{1}{\eta}\left(\bm{1} - K\pt^{\top}\right) \right\|_{\infty}\left\| y \odot y \right\| + \frac{1}{\eta} \|\bm{1}\|_{\infty}\|y\odot y\|_1 \\ & \qquad+
        2\left\| \left(\fim^{\top}\bm{r} \right)^{\top} + \frac{1}{\eta}\left(\bm{1} - K\pt^{\top}\right)\right\| \|y\|_{2}\|\pt\|_{1}\|y\|_{\infty} \\ & \qquad +\frac{1}{\eta} \left(2\|\bm{1}\|_2 +3K\|\pt\|_2 \right)\|y\|_{2}\|\pt\|_{1}\|y\|_{\infty}\\
        &\leq 3 \left\| \left(\fim^{\top}\bm{r} \right)^{\top} + \frac{1}{\eta}\left(\bm{1} - K\pt^{\top}\right)\right\|_2\|y\|_{2}^{2} + \frac{1}{\eta}\left( 1+2\|\bm{1}\|_2 + 3K\|\pt\|_2 \right)\|y\|_2^2\\
        & \leq 3 \left\| \left(\fim^{\top}\bm{r} \right)^{\top} + \frac{1}{\eta}\left(\bm{1} - K\pt^{\top}\right)\right\|_2\|y\|_{2}^{2} + \frac{3K + 2 \sqrt{K}+1}{\eta}\|y\|_{2}^{2}\\
        & \leq 3 \left\| \left(\fim^{\top}\bm{r} \right)^{\top} + \frac{1}{\eta}\left(\bm{1} - K\pt^{\top}\right)\right\|_2\|y\|_{2}^{2} + \frac{5K}{\eta}\|y\|_{2}^{2}\\
        & \leq 3\left(\normGradPhi + \frac{5K}{\eta} \right)\|y\|_2^2,
    \end{align*}
    where $\bm{1}$ is the $K$-dimensional ones vector and the Cauchy-Schwarz and H{\"o}lder's inequalities have been employed.
\end{proof}

\begin{lemma}[Upper Bound on the Sample Gradient Norm]
    \label{lemma:ub_sample_gradient_norm}
    Given $\bm{r} \in [-\Rmax, \Rmax]$,  $K > 1$, and $\eta > 0$, for all $t\ge 1$, the sample gradient of the barrier function is upper bounded almost surely as
    \begin{align}
        \sampleNormGradPhiT \leq \sqrt{2}\Rmax(1-\pi_{\bt_t}(a_t)) + \frac{2}{\eta}K
    \end{align}
\end{lemma}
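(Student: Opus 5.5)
The sample gradient splits into a stochastic reward part and a deterministic barrier part, so I will bound each separately and combine them with the triangle inequality:
\begin{align*}
\widehat{\nabla}_{\bt}\Phi_\eta(\bt_t) = \nabla_{\bt}\bigl(\pi_{\bt_t}^{\top}\hat{\bm{r}}_t\bigr) + \frac{1}{\eta}\bigl(\bm{1} - K\pi_{\bt_t}\bigr).
\end{align*}

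\textbf{Reward part.} With the importance-sampling estimator, $\pi_{\bt_t}^{\top}\hat{\bm{r}}_t = \pi_{\bt_t}(a_t)\,\hat r_t(a_t)$. Differentiating $\pi_{\bt}(a_t)$ only, as in the paper's SGB update, gives
\begin{align*}
\nabla_{\bt}\bigl(\pi_{\bt_t}^{\top}\hat{\bm{r}}_t\bigr) = R_t(a_t)\bigl(\mathbf{e}_{a_t} - \pi_{\bt_t}\bigr),
\end{align*}
which is the SGB update direction in Algorithm~\ref{alg:gradient_bandit}. Its squared norm is
\begin{align*}
R_t(a_t)^2\Bigl((1-\pi_{\bt_t}(a_t))^2 + \sum_{a\neq a_t}\pi_{\bt_t}(a)^2\Bigr).
\end{align*}
Since $\sum_{a\neq a_t}\pi_{\bt_t}(a)^2 \le \bigl(\sum_{a\neq a_t}\pi_{\bt_t}(a)\bigr)^2 = (1-\pi_{\bt_t}(a_t))^2$, this is at most $2R_t(a_t)^2(1-\pi_{\bt_t}(a_t))^2$. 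Using $|R_t(a_t)|\le \Rmax$, the norm is at most $\sqrt{2}\,\Rmax(1-\pi_{\bt_t}(a_t))$.

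\textbf{Barrier part.} By the triangle inequality, $\|\bm{1}-K\pi_{\bt_t}\|_2 \le \|\bm{1}\|_2 + K\|\pi_{\bt_t}\|_2 \le \sqrt{K} + K \le 2K$, using $\|\pi\|_2\le\|\pi\|_1 = 1$ and $\sqrt{K}\le K$. The barrier part therefore contributes at most $2K/\eta$.

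Adding the two bounds gives the claim almost surely, since every step uses only deterministic inequalities together with $|R_t|\le\Rmax$.

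The only point needing care is the sum-of-squares step in the reward part, where one uses that the probabilities of the other arms sum to $1-\pi_{\bt_t}(a_t)$. Everything else is a direct computation.
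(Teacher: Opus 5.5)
Your proof is correct and follows the paper's route. You split off the deterministic barrier term with the triangle inequality, bound the importance-sampling gradient $R_t(a_t)(\mathbf{e}_{a_t}-\pi_t)$ by $\sqrt{2}R_{\max}(1-\pi_t(a_t))$, and use $\sqrt{K}+K\le 2K$ for the barrier part. The only difference is that the paper cites the reward-part bound from Proposition~3.1 of Mei et al., while you derive it directly from $\sum_{a\neq a_t}\pi_t(a)^2\le(1-\pi_t(a_t))^2$, which makes your argument self-contained.
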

\begin{proof}
We have,
    \begin{align}
        \sampleNormGradPhiT &\leq \|\nabla_{\bt}(\pi_{\bt_t}^{\top}\hat{\bm{r}})\|_2 + \frac{1}{\eta}\left\|\bm{1} - K\pi_{\bt_t}^{\top} \right\|_2 \label{proof:sample_gradient_bound_0} \\
        & \leq \sqrt{2}R_{\max}(1-\pi_{\bt_t}(a_t)) + \frac{1}{\eta}\left(\|\bm{1}\| + K \|\pi_{\bt_t}\|\right) \label{proof:sample_gradient_bound_1}\\
        & \leq \sqrt{2}R_{\max}(1-\pi_{\bt_t}(a_t)) + \frac{1}{\eta}\left( \sqrt{K} + K\right)\nonumber\\
        & \leq \sqrt{2}R_{\max}(1-\pi_{\bt_t}(a_t)) + \frac{2}{\eta}K
    \end{align}
where, in line~\eqref{proof:sample_gradient_bound_0}, the triangle inequality has been applied, and the analytic gradient has been substituted, in line~\eqref{proof:sample_gradient_bound_1}, a bound for the sampled gradient's performance derived from line (54) of Proposition 3.1 of \citet{mei2024bandit} has been used.
\end{proof}

Since Lemma~\ref{lemma:non-uniform-smoothness} alone is not sufficient to guarantee convergence, we need to prove that the \lbsgb algorithm is able to control the variation of the objective between consecutive iterates using the learning rate $\alpha$.

\begin{restatable}[Non-uniform Smoothness Between Iterates]{lemma}{NUSmoothnessIterates}
\label{lemma:non_uniform_smoothness_iterates}
Using the \lbsgb algorithm with learning rate $\alpha \in \left(0, \frac{1}{6\left( \sqrt{2}\Rmax + \frac{2}{\eta}K \right)}\right)$, we have, for all $t \geq 1$, almost surely,
\begin{align*}
    &\Big| \barrierTnext - \barrierT - \langle \nabla_{\bt}\barrierT, \bt_{t+1} - \bt_t \rangle \Big| \nonumber \\
    &\leq \left( \frac{3 \normGradPhiT}{2 - 6\alpha \left( \sqrt{2}R_{\max} + \frac{2}{\eta}K \right)} + \frac{15K}{\eta} \right) \|\bt_{t+1} - \bt_t\|_2^2.
\end{align*}
\end{restatable}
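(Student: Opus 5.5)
The plan is a second-order Taylor expansion of $\Phi_\eta$ along the segment joining $\bt_t$ and $\bt_{t+1}$, combined with a self-bounding (Grönwall-type) argument that controls the gradient norm along that segment, in the spirit of \citet{mei2021leveraging}. Set $\bm d \coloneqq \bt_{t+1}-\bt_t = \alpha \widehat{\nabla}_{\bt}\barrierT$ and $\bt_s \coloneqq \bt_t + s\bm d$ for $s\in[0,1]$. Since $\Phi_\eta$ is smooth, the integral form of the remainder gives
\begin{align*}
\barrierTnext - \barrierT - \langle \nabla_{\bt}\barrierT, \bm d\rangle = \int_0^1 (1-s)\, \bm d^\top H(\bt_s)\bm d\, ds .
\end{align*}
By Lemma~\ref{lemma:non-uniform-smoothness}, the absolute value of the right-hand side is at most $\tfrac{3}{2}\bigl(M + \tfrac{5K}{\eta}\bigr)\|\bm d\|_2^2$, where $M\coloneqq \max_{s\in[0,1]}\|\nabla_{\bt}\Phi_\eta(\bt_s)\|_2$. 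This maximum is finite because the gradient norm is continuous on a compact set. What remains is to bound $M$ in terms of $\|\nabla_{\bt}\barrierT\|_2$.

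\textbf{Step 1: bound the step length.} By Lemma~\ref{lemma:ub_sample_gradient_norm}, almost surely
\begin{align*}
\|\bm d\|_2 \le \alpha c, \qquad c \coloneqq \sqrt{2}\Rmax + \tfrac{2K}{\eta}.
\end{align*}

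\textbf{Step 2: bound the gradient along the segment.} Since $H$ is symmetric, Lemma~\ref{lemma:non-uniform-smoothness} also bounds its operator norm, so $\|H(\bt)\bm y\|_2\le 3\bigl(\|\nabla_{\bt}\Phi_\eta(\bt)\|_2+\tfrac{5K}{\eta}\bigr)\|\bm y\|_2$. Hence, for every $s\in[0,1]$,
\begin{align*}
\|\nabla_{\bt}\Phi_\eta(\bt_s)\|_2 \le \|\nabla_{\bt}\barrierT\|_2 + \int_0^s \|H(\bt_u)\bm d\|_2\,du \le \|\nabla_{\bt}\barrierT\|_2 + 3\alpha c\Bigl(M+\tfrac{5K}{\eta}\Bigr).
\end{align*}
Taking the maximum over $s$ gives $M(1-3\alpha c)\le \|\nabla_{\bt}\barrierT\|_2 + \tfrac{15\alpha c K}{\eta}$. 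The step-size condition $\alpha<1/(6c)$ ensures $1-3\alpha c>1/2>0$, so we can divide by it.

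\textbf{Step 3: substitute and collect terms.} Plugging the bound on $M$ into the Taylor bound yields
\begin{align*}
\tfrac32\Bigl(M+\tfrac{5K}{\eta}\Bigr) \le \frac{3\|\nabla_{\bt}\barrierT\|_2}{2-6\alpha c} + \tfrac32\Bigl(\frac{15\alpha c}{1-3\alpha c}+5\Bigr)\frac{K}{\eta}.
\end{align*}
Using $\alpha c<1/6$ and $1-3\alpha c>1/2$, we get $\frac{15\alpha c}{1-3\alpha c}<5$, so the constant term is at most $\tfrac32\cdot 10\cdot\tfrac{K}{\eta}=\tfrac{15K}{\eta}$. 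This is exactly the claimed bound, with $2-6\alpha c = 2-6\alpha\bigl(\sqrt2\Rmax+\tfrac{2K}{\eta}\bigr)$.

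The main obstacle is Step 2. The Hessian bound is evaluated at intermediate points $\bt_s$, not at $\bt_t$, so the bound on $M$ is implicit: $M$ appears on both sides of the inequality. It becomes usable only because the step size is small enough ($3\alpha c<1/2$) to absorb the $M$ term on the left-hand side.
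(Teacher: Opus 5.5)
Your proof is correct and follows essentially the same route as the paper: a second-order Taylor expansion on the segment $[\bt_t,\bt_{t+1}]$, the Hessian bound of Lemma~\ref{lemma:non-uniform-smoothness}, the almost-sure step bound $\|\bt_{t+1}-\bt_t\|_2\le\alpha\bigl(\sqrt{2}\Rmax+\tfrac{2K}{\eta}\bigr)$, and a self-bounding control of the gradient along the segment made possible by $3\alpha\bigl(\sqrt{2}\Rmax+\tfrac{2K}{\eta}\bigr)<\tfrac12$. The only difference is technical. You absorb $M=\max_s\|\nabla_{\bt}\Phi_\eta(\bt_s)\|_2$ directly into the left-hand side, using its finiteness by compactness. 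The paper instead unrolls the integral inequality into the geometric series $\sum_i\bigl(3\alpha\|\widehat{\nabla}_{\bt}\Phi_\eta(\bt_t)\|_2\bigr)^i$ and kills the remainder via boundedness of the gradient. Your version is cleaner and gives the same constants.
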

\begin{proof}

    Denote $\bt_{\zeta} \coloneqq \bt_t + \zeta (\bt_{t+1} - \bt_t)$, with $\zeta \in [0, 1]$. According to Taylor's theorem, we have,
    \begin{align}
        \Big|\barrierTnext - \barrierT - \Big\langle \nabla_{\bt}\barrierT, \bt_{t+1} - \bt_{t} \Big\rangle \Big| & = \frac{1}{2} \Big|(\bt_{t+1} - \bt_t)^{\top} H(\bt_{\zeta})(\bt_{t+1} - \bt_t)\Big|\nonumber\\
        & \leq \frac{3}{2}\left(\normGradPhiZeta + \frac{5K}{\eta} \right)\|\bt_{t+1} - \bt_t\|_2^2\label{proof:ascent_lemma_0},
    \end{align}
    where Lemma~\ref{lemma:non-uniform-smoothness} has been applied in the last step. Now, we need to control the gradient at $\bt_{\zeta}$ with the gradient at $\bt_t$. Denote $\bt_{\zeta_1} \coloneqq \bt_t + \zeta_{1}(\bt_{\zeta} - \bt_{t})$ with some $\zeta_1 \in [0, 1]$. We have,
    \begin{align}
        \|\nabla_{\bt} \Phi_{\eta}(\bt_{\zeta}) - \nabla_{\bt}\barrierT\|_2 & = \Bigg\| \int_{0}^{1} \Big\langle \nabla^{2}_{\bt}\Phi_\eta(\bt_{\zeta_1}), \bt_{\zeta} - \bt_t \Big\rangle d\zeta_1\Bigg\|_2 \label{proof:ascent_lemma_1}\\
        & \leq \int_{0}^{1} \|\nabla^{2}_{\bt}\Phi_\eta(\bt_{\zeta_1})\|_2 \|\bt_{\zeta} - \bt_t\|_2d\zeta_1\label{proof:ascent_lemma_2}\\
        & \leq 3\int_{0}^{1} \left(\|\nabla_{\bt}\Phi_\eta(\bt_{\zeta_1})\|_2 + \frac{5K}{\eta} \right)\|\bt_{\zeta} - \bt_t\|_2d\zeta_1\label{proof:ascent_lemma_3} \\
        & = 3\int_{0}^{1} \left(\|\nabla_{\bt}\Phi_\eta(\bt_{\zeta_1})\|_2 + \frac{5K}{\eta} \right)\zeta\|\bt_{t+1} - \bt_t\|_2 d\zeta_1\label{proof:ascent_lemma_4}\\
        & \leq 3\int_{0}^{1} \left(\|\nabla_{\bt}\Phi_\eta(\bt_{\zeta_1})\|_2 + \frac{5K}{\eta} \right)\alpha\sampleNormGradPhiT d\zeta_1\label{proof:ascent_lemma_5},
    \end{align}
    where in line~\eqref{proof:ascent_lemma_1} the fundamental theorem of calculus is invoked,  line~\eqref{proof:ascent_lemma_2} applies Cauchy–Schwarz, line~\eqref{proof:ascent_lemma_3} applies Lemma~\ref{lemma:non-uniform-smoothness}, line~\eqref{proof:ascent_lemma_4} substitutes the definition of $\bt_{\zeta}$, and finally, line~\eqref{proof:ascent_lemma_5} substitutes the update rule and uses $\zeta \in [0,1]$. Therefore, we have:
    \begin{align}
        \normGradPhiZeta & \leq \normGradPhiT + \|\nabla_{\bt} \Phi_{\eta}(\bt_{\zeta}) - \nabla_{\bt}\barrierT\|_2 \label{proof:ascent_lemma_6}\\
        & \leq \normGradPhiT + 3\int_{0}^{1} \left(\|\nabla_{\bt}\Phi_\eta(\bt_{\zeta_1})\|_2 + \frac{5K}{\eta} \right)\alpha\sampleNormGradPhiT d\zeta_1 \nonumber\\
        & = \normGradPhiT + \frac{15\alpha K}{\eta} \sampleNormGradPhiT \nonumber\\&\quad + 3\alpha\sampleNormGradPhiT \int_{0}^{1} \|\nabla_{\bt}\Phi_\eta(\bt_{\zeta_1})\|_2 d\zeta_1,\label{proof:ascent_lemma_7}
    \end{align}
    where line~\eqref{proof:ascent_lemma_6} follows from the triangle inequality and line~\eqref{proof:ascent_lemma_7} is obtained applying line~\eqref{proof:ascent_lemma_5}. Denote $\bt_{\zeta_2} \coloneqq \bt_t + \zeta_2 (\bt_{\zeta_1} - \bt_t)$, with $\zeta_{2}\in[0, 1]$. Using similar calculations as in line~\eqref{proof:ascent_lemma_1}, we have,
    \begin{align}
    \|\nabla_{\bt}\Phi_\eta(\bt_{\zeta_1})\|_2 & \leq \normGradPhiT + \|\nabla_{\bt} \Phi_{\eta}(\bt_{\zeta_1}) - \nabla_{\bt}\barrierT\|_2 \nonumber\\
        & \leq \normGradPhiT + 3\int_{0}^{1} \left(\normGradPhiZetaTwo + \frac{5K}{\eta} \right)\alpha\sampleNormGradPhiT d\zeta_2 \nonumber\\
        & = \normGradPhiT + \frac{15\alpha K}{\eta} \sampleNormGradPhiT\nonumber\\&\quad + 3\alpha\sampleNormGradPhiT \int_{0}^{1} \normGradPhiZetaTwo d\zeta_2,\label{proof:ascent_lemma_8}.
    \end{align}
    Combining line~\eqref{proof:ascent_lemma_7} and line~\eqref{proof:ascent_lemma_8}, we have,
    \begin{align*}
        \normGradPhiZeta &\leq \left(1 + 3\alpha \sampleNormGradPhiT\right)\left( \normGradPhiT + \frac{15\alpha K}{\eta} \sampleNormGradPhiT \right) \\
        &\qquad + \left(3\alpha \sampleNormGradPhiT\right)^2\int_{0}^{1}\int_{0}^{1} \normGradPhiZetaTwo d\zeta_2 d\zeta_1,
    \end{align*}
    iterating as $\bt_{\zeta_{i}}=\zeta_{i-1}(\bt_{\zeta_{i-1}}-\bt_t)$, $\zeta_{i}\in[0,1]$, $i=2,\dots,n$, we obtain:
    \begin{equation}
        \begin{aligned}
        \normGradPhiZeta &\leq \left( \normGradPhiT + \frac{15\alpha K}{\eta} \sampleNormGradPhiT \right)\sum_{i=0}^{n-1}\left( 3\alpha \sampleNormGradPhiT \right)^{i} \\ 
        &\qquad+\left(3\alpha \sampleNormGradPhiT\right)^n\int_{0}^{1}\dots\int_{0}^{1} \|\nabla_{\bt}\Phi_\eta(\bt_{\zeta_n})\| d\zeta_n\dots d\zeta_1.
        \end{aligned}\label{proof:ascent_lemma_9}
    \end{equation}

    Now, we select the learning rate $\alpha$ and the barrier parameter $\eta$ such that the argument of the summation is less than 1, so that the sum converges and the remainder vanishes. Following Lemma~\ref{lemma:ub_sample_gradient_norm}, we have to select $\alpha$ such that,
    \begin{align*}
        3\alpha\left( \sqrt{2}\Rmax(1 - \pi_{\bt_t}(a_t)) + \frac{2}{\eta}K \right) \leq 3\alpha\left( \sqrt{2}\Rmax + \frac{2}{\eta}K \right) < 1 \Rightarrow \alpha < \frac{1}{3\left( \sqrt{2}\Rmax + \frac{2}{\eta}K \right)}.
    \end{align*}
    Now taking a limit $n\to\infty$ in Equation \eqref{proof:ascent_lemma_9}, since $3\alpha \sampleNormGradPhiT<1$ almost surely and the argument of the multiple integral is bounded (see Lemma \ref{lemma:UB_gradient_norm}),
    \begin{equation}
        \begin{aligned}
        \normGradPhiZeta &\leq \left( \normGradPhiT + \frac{15\alpha K}{\eta} \sampleNormGradPhiT \right)\sum_{i=0}^{\infty}\left( 3\alpha \sampleNormGradPhiT \right)^{i}.
        \end{aligned}\label{proof:ascent_lemma_9b}
    \end{equation}
    
    As $\eta \rightarrow \infty$, the learning rate $\alpha$ approaches its limit value. We can then select $\alpha$ to be a half its limit value, hence $\alpha \leq \frac{1}{6\left( \sqrt{2}\Rmax + \frac{2}{\eta}K \right)}$,  making the series in Equation~\eqref{proof:ascent_lemma_9b} to converge to $\frac{1}{1-3\alpha \sampleNormGradPhiT}$. Hence, we have,
    \begin{align}
        \normGradPhiZeta & \leq \frac{1}{1-3\alpha \sampleNormGradPhiT}\left( \normGradPhiT + \frac{15\alpha K}{\eta} \sampleNormGradPhiT \right)\nonumber\\
        & \leq \frac{1}{1-3\alpha\left( \sqrt{2}\Rmax + \frac{2}{\eta}K \right)}\left( \normGradPhiT + \frac{15\alpha K}{\eta} \left( \sqrt{2}\Rmax + \frac{2}{\eta}K \right) \right)\nonumber\\
        &\leq \frac{1}{1-3\alpha\left( \sqrt{2}\Rmax + \frac{2}{\eta}K \right)}\normGradPhiT + \frac{5}{\eta}K\frac{3\alpha\left(\sqrt{2}\Rmax + \frac{2}{\eta}K\right)}{1-3\alpha \left(\sqrt{2}\Rmax + \frac{2}{\eta}K\right)} \nonumber\\
        & \leq \frac{1}{1-3\alpha\left( \sqrt{2}\Rmax + \frac{2}{\eta}K \right)}\normGradPhiT + \frac{5}{\eta}K \label{proof:ascent_lemma_10},
    \end{align}
    where line~\eqref{proof:ascent_lemma_10} follows from the fact that $3\alpha\left(\sqrt{2}\Rmax + \frac{2}{\eta}K \right)< \frac{1}{2}$ due to the choice of $\alpha$.
    Finally, combining line~\eqref{proof:ascent_lemma_0} with line~\eqref{proof:ascent_lemma_10}, we have, 
    \begin{align*}
        &\Big|\barrierTnext - \barrierT - \Big\langle \nabla_{\bt}\barrierT, \bt_{t+1} - \bt_{t} \Big\rangle \Big|\\
        &\qquad\qquad \leq \frac{3}{2}\left(\frac{1}{1-3\alpha\left( \sqrt{2}\Rmax + \frac{2}{\eta}K \right)}\normGradPhiT + \frac{10K}{\eta} \right)\|\bt_{t+1} - \bt_t\|_2^2 \\
        &\qquad\qquad=\left(\frac{3}{2-6\alpha\left( \sqrt{2}\Rmax + \frac{2}{\eta}K \right)}\normGradPhiT + \frac{15K}{\eta} \right)\|\bt_{t+1} - \bt_t\|_2^2.
    \end{align*}
\end{proof}

\Lojasiewicz*
\begin{proof}
    Using the definition of the Jacobian of the barrier function, we have,
    \begin{align*}
        \|\nabla_{\bt} \barrier\|_2  & = \Big\| (\fim^{\top}\bm{r}) + \frac{1}{\eta} \left( \bm{1} - K\pt^{\top} \right) \Big\|_2\\
        & = \left( \sum_{a \in \dsb{K}}\left(\pt(a)(r(a) - \pt^{\top}\bm{r}) + \frac{1}{\eta}(1 - K\pt(a)) \right)^2\right)^{\frac{1}{2}}\\
        & \geq \left(\pt(a^*)(r(a^*) - \pt^{\top}\bm{r}) - \frac{1}{\eta}\left| 1 -K\pt(a^*) \right|\right)^+\\
        & \geq \left( \pt(a^*)(r(a^*) - \pt^{\top}\bm{r}) - \frac{1}{\eta}(K-1)\right)^+.
    \end{align*}
\end{proof}

\subsection{Proofs for Section~\ref{subsec:bounded_c_star}}
Here, we provide the proofs for the Lemmas and Theorems provided in Section~\ref{subsec:bounded_c_star}. We start showing the self-bounding property of the barrier function in Lemma~\ref{lemma:strong_growth}. Then, after analyzing the difference of the log-barrier between iterates in Lemma~\ref{lemma:barrier_difference}, we show why \lbsgb converges with constant learning rates in Lemma~\ref{lemma:constant_learning_rates}. Finally, after providing an upper bound on the true gradient norm in Lemma~\ref{lemma:UB_gradient_norm}, we provide the convergence rate of our algorithm in Theorem~\ref{thr:convergence_rate}.

\strongGrowth*
\begin{proof}
    Considering the expectation of $\sampleNormGradPhiT^2$, we have,
    \begin{align}
        \mE_{t}\left[ \sampleNormGradPhiT^2 \right] &= \mE_{t}\left[ \Big\| \nabla_{\bt}\big(\pi_{\bt_t}^{\top}\hat{\bm{r}}) + \frac{1}{\eta}\nabla_{\bt}\sum_{a\in\dsb{K}}\log \pi_{\bt_t}(a) \Big\|_2^2 \right]\nonumber\\
        & \leq  2\mE_{t}\left[ \Big\| \nabla_{\bt}\big(\pi_{\bt_t}^{\top}\hat{\bm{r}})\Big\|_2^2\right] + 2\mE_{t}\left[ \Big\| \frac{1}{\eta}\nabla_{\bt}\sum_{a\in\dsb{K}}\log \pi_{\bt_t}(a)\Big\|_2^2\right]\label{proof:self_bounding_2}\\
        & \leq \frac{16 \Rmax^3 K^{3/2}}{\Delta^2} \|\nabla_{\bt} (\pi_{\theta_t}^{\top}\bm{r}) \pm \frac{1}{\eta} \nabla_{\bt} \sum_{a\in \dsb{K}} \log \pi_{\bt_t}(a)\|_2 + 8\left(\frac{1}{\eta}K\right)^2\label{proof:self_bounding_3}\\
        & \leq  \frac{16 \Rmax^3 K^{3/2}}{\Delta^2} \normGradPhiT + \frac{2}{\eta}K\left( \frac{4}{\eta}K +  \frac{16 \Rmax^3 K^{3/2}}{\Delta^2} \right)\nonumber,
    \end{align}
    where %
    line~\eqref{proof:self_bounding_2} follows from Young's inequality, and line~\eqref{proof:self_bounding_3} is obtained applying Lemma 4.3 from \cite{mei2021leveraging} and bounding the deterministic term similarly as Lemma~\ref{lemma:ub_sample_gradient_norm}.
\end{proof}

\begin{lemma}[Difference Between Barriers Among Iterates]
\label{lemma:barrier_difference}
Given the learning rate $\alpha>0$ and $K$ arms, considering the update $\bt' \leftarrow \bt + \alpha \widehat{\nabla}_{\bt}\barrier$, the difference between the barrier terms among two consecutive iterates is almost surely bounded as
\begin{align}
    \sum_{a\in \dsb{K}}\left(\log\pi_{\bt'}(a) - \log\pi_{\bt}(a)\right)\leq 2\alpha K \sampleNormGradPhi
        \leq 2\alpha K \left( \sqrt{2}\Rmax + \frac{2}{\eta}K \right)
\end{align}
\end{lemma}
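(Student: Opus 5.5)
We bound each log-probability increment separately by controlling how far the log-partition function can move along one update. Write $\bm{d} \coloneqq \bt' - \bt = \alpha \widehat{\nabla}_{\bt}\barrier$. Since $\log \pi_{\bt}(a) = \theta(a) - \log\sum_{b} e^{\theta(b)}$, we have
\begin{align*}
\log\pi_{\bt'}(a) - \log\pi_{\bt}(a) = d(a) - \left(\log\sum_b e^{\theta'(b)} - \log\sum_b e^{\theta(b)}\right).
\end{align*}

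The log-sum-exp function $\mathrm{lse}(\bt)$ has gradient $\pi_{\bt}$, which lies in the simplex. By the mean value theorem, $\mathrm{lse}(\bt')-\mathrm{lse}(\bt) = \pi_{\tilde{\bt}}^{\top}\bm{d}$ for some intermediate $\tilde{\bt}$, and hence $|\mathrm{lse}(\bt')-\mathrm{lse}(\bt)| \le \|\bm{d}\|_\infty$. It follows that each increment satisfies $|\log\pi_{\bt'}(a)-\log\pi_{\bt}(a)| \le |d(a)| + \|\bm{d}\|_\infty \le 2\|\bm{d}\|_\infty \le 2\|\bm{d}\|_2$.

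Summing over the $K$ arms gives
\begin{align*}
\sum_{a}\left(\log\pi_{\bt'}(a)-\log\pi_{\bt}(a)\right) \le 2K\|\bm{d}\|_2 = 2\alpha K \sampleNormGradPhi,
\end{align*}
which is the first inequality. A slightly sharper alternative is to write the sum as $\bm{1}^{\top}\bm{d} - K(\mathrm{lse}(\bt')-\mathrm{lse}(\bt))$ and bound it by $\|\bm{d}\|_1 + K\|\bm{d}\|_\infty$. Since $\|\bm{d}\|_1 \le \sqrt{K}\|\bm{d}\|_2$, this also stays below $2K\|\bm{d}\|_2$. Either route works, so we use the simpler per-arm bound.

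For the second inequality we invoke Lemma~\ref{lemma:ub_sample_gradient_norm}. It gives, almost surely, $\sampleNormGradPhi \le \sqrt{2}\Rmax(1-\pi_{\bt}(a_t)) + \frac{2}{\eta}K \le \sqrt{2}\Rmax + \frac{2}{\eta}K$. Multiplying by $2\alpha K$ yields the stated bound.

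The only step that needs any care is the Lipschitz control of the log-partition term, namely that $\mathrm{lse}$ is $1$-Lipschitz with respect to $\|\cdot\|_\infty$. This follows because its gradient is a probability vector, so its $\ell_1$ norm equals one. Everything else is the triangle inequality together with norm comparisons in $\mathbb{R}^K$.
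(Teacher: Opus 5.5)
Your proof is correct and follows essentially the paper's route. Both write $\log\pi_{\bt}(a)=\theta(a)-\log\sum_b e^{\theta(b)}$, control the increment of the log-partition term through the fact that its gradient is the policy vector, and then invoke Lemma~\ref{lemma:ub_sample_gradient_norm} for the second inequality. The only difference is the tool: the paper uses convexity of LogSumExp (a one-sided supporting-hyperplane bound at $\bt$, followed by Cauchy--Schwarz with $\|\pi_{\bt}\|_2\le 1$), whereas your mean-value plus $\ell_1$/$\ell_\infty$ H\"older argument gives a two-sided per-arm bound. That is slightly more than needed, since the descent lemma only uses the upper bound on the barrier increase.
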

\begin{proof}
    \begin{align}
        \sum_{a\in \dsb{K}}\left(\log\pi_{\bt'}(a) - \log\pi_{\bt}(a)\right) & = \sum_{a\in \dsb{K}} \log \frac{\pi_{\bt'}(a)}{\pi_{\bt}(a)}\nonumber \\ 
        & = \sum_{a\in \dsb{K}} \log \frac{\frac{e^{\theta'(a)}}{\sum_b e^{\theta'(b)}}}{\frac{e^{\theta(a)}}{\sum_b e^{\theta(b)}}}\nonumber\\ 
        & = \sum_{a\in \dsb{K}} \log \frac{e^{\theta'(a)}}{e^{\theta(a)}} \frac{\sum_b e^{\theta(b)}}{\sum_b e^{\theta'(b)}}\nonumber\\ 
        & = \sum_{a\in \dsb{K}}(\theta'(a) - \theta(a))+K\log \frac{\sum_b e^{\theta(b)}}{\sum_b e^{\theta'(b)}} \nonumber\\
        & = \alpha \sum_{a\in \dsb{K}}\frac{\hat{\partial}}{\partial \theta(a)}\Phi_{\eta}(\theta(a)) \nonumber\\&\quad+K\left(\log\left(\sum_{b}e^{\theta(b)}\right)-\log\left(\sum_{b}e^{\theta'(b)}\right)\right)\nonumber\\
        &\leq \alpha \left\langle\widehat{\nabla}_{\bt}\barrier, \bm{1} \right\rangle +K \left\langle\bt - \bt', \pi(\bt) \right\rangle\label{proof:barrier_difference_1}\\
        & = \alpha \left\langle \widehat{\nabla}_{\bt}\barrier, \bm{1} \right\rangle -K \left\langle\alpha \widehat{\nabla}_{\bt} \Phi_{\eta}(\bt), \pi(\bt) \right\rangle\nonumber\\
        & \leq \alpha \sampleNormGradPhi \|\bm{1}\|_2 + \alpha K \sampleNormGradPhi \|\pt\|_2\nonumber\\
        & \leq \alpha \sqrt{K} \sampleNormGradPhi + K\sampleNormGradPhi \nonumber\\
        & \leq 2\alpha K\sampleNormGradPhi,\nonumber\\
    \end{align}
    where line~\eqref{proof:barrier_difference_1} follows from the convexity of the LogSumExp, and $\pi(\bt)$ is the respective policy. Finally, applying Lemma~\ref{lemma:ub_sample_gradient_norm},
    \begin{align*}
        \sum_{a\in \dsb{K}}\left(\log\pi_{\bt'}(a) - \log\pi_{\bt}(a)\right) & \leq 2\alpha K \sampleNormGradPhi\\
        \leq 2\alpha K \left( \sqrt{2}\Rmax + \frac{2}{\eta}K \right)
    \end{align*}
\end{proof}

In order to show the global convergence of the \lbsgb with constant learning rate, we need to provide a descent lemma leveraging the non-smoothness and self-bounding properties.

\begin{lemma}[Descent Lemma]
    \label{lemma:descent_lemma}
    We have, for all $t \geq 1$, almost surely,
    \begin{align}
        \pi_{t}^{\top}\bm{r} - \mE_{t}[\pi_{t+1}^{\top}\bm{r}] &\leq \left(\alpha^2 \frac{80}{3}\frac{\Rmax^3 K^{3/2}}{\Delta^2}-\alpha\right)\normGradPhiT^2 + \beta\left( \bt_t, \frac{1}{\eta}\right)
\end{align}
    where the bias term $\beta$ is defined as,
    \begin{align*}
        \beta\left( \bt_t, \frac{1}{\eta}\right) \leq \frac{\alpha}{\eta}K\left( 2\normGradPhiT + 2\sqrt{2}\Rmax + \frac{5K}{\eta} \right)
    \end{align*}
\end{lemma}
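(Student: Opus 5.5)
The plan is to apply the non-uniform smoothness between iterates (Lemma~\ref{lemma:non_uniform_smoothness_iterates}) to the regularized objective $\Phi_\eta$. Then I take the conditional expectation $\mE_t$ and convert the resulting ascent inequality on $\Phi_\eta$ into one on $J(\bt)=\pi_{\bt}^\top\bm{r}$, paying for the barrier part with Lemma~\ref{lemma:barrier_difference}. Throughout I assume $\alpha \le \frac{1}{6(\sqrt{2}\Rmax + 2K/\eta)}$, as Lemma~\ref{lemma:non_uniform_smoothness_iterates} requires. Under this choice $2-6\alpha(\sqrt{2}\Rmax+2K/\eta)\ge 1$, so the curvature coefficient there is at most $3\normGradPhiT + 15K/\eta$.

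\textbf{Step 1 (ascent on $\Phi_\eta$).} Write $\bt_{t+1}-\bt_t=\alpha\widehat{\nabla}_{\bt}\Phi_\eta(\bt_t)$. Lemma~\ref{lemma:non_uniform_smoothness_iterates} gives, almost surely,
\begin{align*}
\Phi_\eta(\bt_t)-\Phi_\eta(\bt_{t+1}) \le -\alpha\langle \nabla_{\bt}\Phi_\eta(\bt_t),\widehat{\nabla}_{\bt}\Phi_\eta(\bt_t)\rangle + \alpha^2\Big(3\normGradPhiT+\tfrac{15K}{\eta}\Big)\|\widehat{\nabla}_{\bt}\Phi_\eta(\bt_t)\|_2^2 .
\end{align*}
Taking $\mE_t$, I use two facts. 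First, the estimator is unbiased, because the barrier gradient $\frac1\eta(\bm 1-K\pi_{\bt})$ is deterministic, so the inner-product term becomes $-\alpha\normGradPhiT^2$. Second, $\mE_t\|\widehat{\nabla}\Phi_\eta\|^2$ is controlled by the self-bounding property (Lemma~\ref{lemma:strong_growth}), i.e.\ $\le S\normGradPhiT + b$ with $S=16\Rmax^3K^{3/2}/\Delta^2$ and $b=\frac{2K}{\eta}(\frac{4K}{\eta}+S)$.

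\textbf{Step 2 (removing the barrier).} I use the identity
\[
J(\bt_t)-\mE_t J(\bt_{t+1}) = \Phi_\eta(\bt_t)-\mE_t\Phi_\eta(\bt_{t+1}) + \tfrac1\eta\,\mE_t\Big[\textstyle\sum_a\big(\log\pi_{\bt_{t+1}}(a)-\log\pi_{\bt_t}(a)\big)\Big].
\]
By Lemma~\ref{lemma:barrier_difference}, the last term is at most $\frac{2\alpha K}{\eta}\mE_t\|\widehat\nabla\Phi_\eta\|_2$. I bound this either directly by $\frac{2\alpha K}{\eta}(\sqrt2\Rmax+\frac{2K}{\eta})$, or, more sharply, by Lemma~\ref{lemma:ub_sample_gradient_norm} together with the triangle inequality between the sample and true gradient. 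This produces the $\frac{\alpha}{\eta}K(2\normGradPhiT+2\sqrt2\Rmax+\dots)$ part of $\beta$.

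\textbf{Step 3 (collecting terms), the main obstacle.} Expanding $\alpha^2(3\normGradPhiT+\frac{15K}{\eta})(S\normGradPhiT+b)$ gives:
\begin{itemize}
\item a quadratic term $3S\alpha^2\normGradPhiT^2$;
\item cross terms $\alpha^2\frac{15K}{\eta}S\normGradPhiT$ and $3\alpha^2 b\normGradPhiT$;
\item a pure bias term $\alpha^2\frac{15K}{\eta}b$.
\end{itemize}
Every term other than the quadratic one carries a factor $K/\eta$. The difficulty is to push each of them into the form $\frac{\alpha}{\eta}K(\cdot)$ claimed for $\beta$, so that the leftover quadratic coefficient matches the stated $\frac{80}{3}\frac{\Rmax^3K^{3/2}}{\Delta^2}\alpha^2$.

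My approach is to spend one factor of $\alpha$ using $\alpha(\sqrt2\Rmax+2K/\eta)\le\frac16$, which gives $\alpha K/\eta\le 1/12$ and $\alpha\Rmax\le 1/(6\sqrt2)$. For the cross terms I would use Young's inequality, $x\normGradPhiT\le \frac{\lambda}{2}\normGradPhiT^2+\frac{x^2}{2\lambda}$, placing a small fraction of $S\alpha^2\normGradPhiT^2$ into the quadratic coefficient. That is the likely source of the extra $\frac{80}{3}-48$-type adjustment in the constant; alternatively one may use a tighter pre-factor from Lemma~\ref{lemma:non_uniform_smoothness_iterates} when $\alpha$ is smaller. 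The remaining term is then bounded linearly in $\normGradPhiT$ times $\alpha K/\eta$.

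I expect the exact constants to need some bookkeeping, possibly an extra smallness condition on $\alpha$ relative to $\Delta^2/(\Rmax^3K^{3/2})$ for the term $\alpha S$ that appears when trading $\alpha^2 S$ for $\alpha$. The structure of the final bound is exactly as claimed: a $(c\alpha^2-\alpha)\normGradPhiT^2$ term plus a bias $\beta=\mathcal{O}\big(\frac{\alpha K}{\eta}(\normGradPhiT+\Rmax+K/\eta)\big)$.
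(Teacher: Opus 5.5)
Your architecture matches the paper's. You apply smoothness between iterates to $\Phi_\eta$, use unbiasedness of the estimator, and invoke the self-bounding bound $\mathbb{E}_t\|\widehat{\nabla}\Phi_\eta(\bm{\theta}_t)\|_2^2\le S\|A\|_2+d$, where $A=\nabla\Phi_\eta(\bm{\theta}_t)$, $S=16R_{\max}^3K^{3/2}/\Delta^2$ and $d=\frac{2K}{\eta}(\frac{4K}{\eta}+S)$ (your $b$). The barrier-difference lemma then brings you back to $J$.

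\textbf{The gap is in the quadratic constant, which is part of the statement.} With $\alpha\le\frac{1}{6(\sqrt{2}R_{\max}+2K/\eta)}$ the pre-factor is $3$. The quadratic coefficient you get is therefore $3S\alpha^2=48\,\frac{R_{\max}^3K^{3/2}}{\Delta^2}\alpha^2$, strictly weaker than the claimed $\frac{80}{3}\,\frac{R_{\max}^3K^{3/2}}{\Delta^2}\alpha^2$.

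Your proposed repair cannot close this gap. Young's inequality on the cross terms adds mass to the $\|A\|_2^2$ coefficient, so it can only push $48$ up, never down to $\frac{80}{3}$; your ``$\frac{80}{3}-48$ adjustment'' has the wrong sign. Young also turns the linear cross terms into constants of order $\alpha^2K^2S/\eta^2$, which do not match the stated $\beta$.

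The fix is the alternative you mention only in passing. The paper takes $\alpha$ small enough that $6\alpha(\sqrt{2}R_{\max}+2K/\eta)\le\frac15$; it uses the rate from the constant-learning-rate lemma, $\alpha\le\frac{3\sqrt{2}}{160(\sqrt{2}R_{\max}+2K/\eta)}$. Then $\frac{3}{2-6\alpha(\sqrt{2}R_{\max}+2K/\eta)}\le\frac53$, and $\frac53\cdot 16=\frac{80}{3}$.

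\textbf{The bookkeeping for $\beta$ also differs.} With the $\frac53$ pre-factor, the paper keeps the cross terms linear in $\|A\|_2$ and trades $\alpha^2$ for $\alpha$:
\begin{itemize}
\item $\frac53\alpha^2 d\,\|A\|_2\le\frac{\alpha K}{\eta}\|A\|_2$ and $\alpha^2 S\,\frac{15K}{\eta}\|A\|_2\le\frac{\alpha K}{\eta}\|A\|_2$; these two terms produce the $2\|A\|_2$ in $\beta$.
\item $\alpha^2\,\frac{15K}{\eta}\,d\le\frac{\alpha K^2}{\eta^2}$.
\item The barrier-difference term is bounded crudely, almost surely, by $\frac{2\alpha K}{\eta}(\sqrt{2}R_{\max}+\frac{2K}{\eta})$.
\end{itemize}
Summing gives exactly $\frac{\alpha}{\eta}K(2\|A\|_2+2\sqrt{2}R_{\max}+\frac{5K}{\eta})$.

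So the $\|A\|_2$-dependence of $\beta$ should come from the cross terms, not from a ``sharper'' barrier-difference bound. If you route $\|A\|_2$ through the barrier term while the cross terms also contribute linearly, you overshoot the coefficient $2$.

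These absorptions require $\alpha S$ and $\alpha d$ to be small; for instance $\alpha\le\frac{1}{30(4K/\eta+S)}$ suffices. Your anticipated extra condition $\alpha\lesssim\Delta^2/(R_{\max}^3K^{3/2})$ is therefore genuinely needed. The paper's stated condition $\alpha\le\Delta^2/(16R_{\max}^3K^{3/2})$ is looser than the constants actually require. The lemma as stated silently inherits all of these learning-rate restrictions.
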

\begin{proof}
    According to Lemma~\ref{lemma:non_uniform_smoothness_iterates}, we have,
    \begin{align}
        &\Big|\barrierTnext - \barrierT - \Big\langle \nabla_{\bt}\barrierT, \bt_{t+1} - \bt_{t} \Big\rangle \Big|  \nonumber\\
        &\qquad\qquad\leq\left(\frac{3}{2 - 6\alpha \left( \sqrt{2}\Rmax + \frac{2}{\eta}K \right)}\normGradPhiT + \frac{15}{\eta}K\right) \|\bt_{t+1} - \bt_{t}\|_2^2\nonumber\\
        &\qquad\qquad \leq\left( \frac{5}{3}\normGradPhiT + \frac{15}{\eta}K \right)\|\bt_{t+1} - \bt_{t}\|_2^2\label{proof:constant_lr_3},
    \end{align}
    where line~\eqref{proof:constant_lr_3} follows from the bound on learning rate $\alpha$ in line~\eqref{proof:constant_lr_2}.
    We have,
    \begin{align}
        \barrierT - \barrierTnext = (\pi_{\bt_{t}}^{\top}\bm{r}) - (\pi_{\bt_{t+1}}^{\top}\bm{r}) - \frac{1}{\eta}\sum_{a\in \dsb{K}}\left(\log\pi_{\bt_{t+1}}(a) - \log\pi_{\bt_{t}}(a)\right),
    \end{align}
    and, applying Lemma~\ref{lemma:barrier_difference} with $b\left( \bt_t, \frac{1}{\eta}\right) \coloneqq \frac{2\alpha K}{\eta} \sampleNormGradPhi$
    and applying Lemma~\ref{lemma:non_uniform_smoothness_iterates} and line~\eqref{proof:constant_lr_3}, we have,
    \begin{align}
         (\pi_{\bt_{t}}^{\top}\bm{r})-(\pi_{\bt_{t+1}}^{\top}\bm{r}) & \leq -\alpha \left\langle \nabla\Phi_\eta(\bt_t), \bt_{t+1} - \bt_{t} \right\rangle \nonumber\\&\quad+ \left( \frac{5}{3}\normGradPhiT + \frac{15}{\eta}K \right)\|\bt_{t+1} - \bt_{t}\|_2^2 + b\left( \bt_t, \frac{1}{\eta}\right)\nonumber\\
         &\leq -\alpha \left\langle \nabla\Phi_\eta(\bt_t), \widehat{\nabla}\Phi_\eta(\bt_t) \right\rangle \nonumber\\
         &\quad + \left( \frac{5}{3}\normGradPhiT + \frac{15}{\eta}K \right)\alpha^2 \sampleNormGradPhiT^2 + b\left( \bt_t, \frac{1}{\eta}\right)\label{proof:constant_lr_4},
    \end{align}
    where in line~\eqref{proof:constant_lr_4} the update rule has been used. Taking the expectation, we have,
    \begin{align}
         (\pi_{\bt_{t}}^{\top}\bm{r})-\mE_t\left[(\pi_{\bt_{t+1}}^{\top}\bm{r})\right] & \leq -\alpha \left\langle \normGradPhiT, \mE\left[\widehat{\nabla}\Phi_\eta(\bt_t)\right] \right\rangle\nonumber\\
         &\quad + \left( \frac{5}{3}\normGradPhiT + \frac{15}{\eta}K \right)\alpha^2 \mE_{t}\left[\sampleNormGradPhiT^2\right] + \mE_{t}\left[b\left( \bt_t, \frac{1}{\eta}\right)\right]\nonumber\\
         & =-\alpha \normGradPhiT^2+ \left( \frac{5}{3}\normGradPhiT + \frac{15}{\eta}K \right)\alpha^2 \mE_{t}\left[\sampleNormGradPhiT^2\right]\nonumber\\&\quad + \mE_{t}\left[b\left( \bt_t, \frac{1}{\eta}\right)\right]\label{proof:constant_lr_5},
    \end{align}
    where line~\eqref{proof:constant_lr_5} follows from the unbiasedness of the gradient estimator of the barrier function. Let's denote $A = \nabla_{\bt}\Phi_\eta(\bt_t)$, $\hat{A} = \widehat{\nabla}_{\bt}\Phi_\eta(\bt_t)$, $c = \frac{15}{\eta}K$, and $d = \frac{2}{\eta}K\left( \frac{4}{\eta}K +  \frac{16 \Rmax^3 K^{3/2}}{\Delta^2} \right)$. Now, we have,
    \begin{align}
         (\pi_{\bt_{t}}^{\top}\bm{r})-\mE_t\left[(\pi_{\bt_{t+1}}^{\top}\bm{r})\right] & \leq -\alpha \|A\|_2^2 + \left(\frac{5}{3}\|A\|_2 + c\right)\alpha^{2}\mE_{t}[\|\hat{A}\|_2^2] + \mE_{t}\left[b\left( \bt_t, \frac{1}{\eta}\right)\right]\nonumber\\
         &\leq -\alpha \|A\|_2^2 + \left(\frac{5}{3}\|A\|_2 + c\right)\alpha^{2}\left( \frac{16 \Rmax^3 K^{3/2}}{\Delta^2} \|A\|_2 + d \right)\nonumber\\&\quad + \mE_{t}\left[b\left( \bt_t, \frac{1}{\eta}\right)\right]\label{proof:constant_lr_6}\\
         & = -\alpha \|A\|_2^2 + \alpha^2 \frac{80}{3}\frac{\Rmax^3 K^{3/2}}{\Delta^2}\|A\|_2^2 + \alpha^2\frac{5}{3}\|A\|_2\cdot d \\
         &\quad + \alpha^2 \frac{16 \Rmax^3 K^{3/2}}{\Delta^2}\cdot c \cdot \|A\|_2\nonumber+ \alpha^2\cdot c\cdot d +\mE_{t}\left[b\left( \bt_t, \frac{1}{\eta}\right)\right]\nonumber\\
         & = -\alpha \|A\|_2^2 + \alpha^2 \frac{80}{3}\frac{\Rmax^3 K^{3/2}}{\Delta^2}\|A\|_2^2 + \beta\left( \bt_t, \frac{1}{\eta}\right)\nonumber\\
         & = \left(-\alpha + \alpha^2 \frac{80}{3}\frac{\Rmax^3 K^{3/2}}{\Delta^2}\right)\|A\|_2^2+ \beta\left( \bt_t, \frac{1}{\eta}\right),\label{eq:constant_lr_proof}
    \end{align}
         where line~\eqref{proof:constant_lr_6} follows from the application of Lemma~\ref{lemma:strong_growth} We now have to characterize the bias term $\beta$. Let's analyze each component. The term $\alpha^2\frac{5}{3}\|A\|_2 \cdot d$ is $\mathcal{O}(\eta^{-1})$ by definition of $d$, hence it can be upper-bounded by,
         \begin{align*}
             \alpha^2\frac{5}{3}\|A\|_2 \cdot d\leq \frac{\alpha}{\eta}K\|A\|_2.
         \end{align*}
         The term $\alpha^2 \frac{16 \Rmax^3 K^{3/2}}{\Delta^2}\cdot c$ is $\mathcal{O}(\eta^{-1})$ and being $\alpha^2 \frac{16 \Rmax^3 K^{3/2}}{\Delta^2}<1$ we can upper-bound it as,
         \begin{align*}
             \alpha^2 \frac{16 \Rmax^3 K^{3/2}}{\Delta^2}\cdot c\cdot\|A\|_2 \leq \frac{\alpha}{\eta}K\|A\|_2.
         \end{align*}
         The term $\alpha^2\cdot c \cdot d$ is $\mathcal{O}(\eta^{-2})$ and can be upper-bounded as,
         \begin{align*}
             \alpha^2\cdot c \cdot d \leq \frac{\alpha}{\eta^2}K^2,
         \end{align*}
         when $\alpha \leq \frac{\Delta^2}{16\Rmax^3K^{3/2}}$.
         Finally, it is necessary to bound the term $\mE_{t}\left[b\left( \bt_t, \frac{1}{\eta}\right)\right]$. Applying the upper bound from Lemma~\ref{lemma:ub_sample_gradient_norm}, we have almost surely:
         \begin{align}
            b\left( \bt_t, \frac{1}{\eta}\right) \leq \frac{2\alpha K}{\eta} \left( \sqrt{2}\Rmax + \frac{2}{\eta}K \right).
         \end{align}
         Taking the expectation yields:
         \begin{align}
            \mE_{t}\left[b\left( \bt_t, \frac{1}{\eta}\right)\right]
            & \leq \frac{2\alpha K}{\eta} \left( \sqrt{2}\Rmax + \frac{2}{\eta}K \right).
        \end{align}
        Finally, putting all the bias terms together, we obtain,
        \begin{align*}
             \beta\left( \bt_t, \frac{1}{\eta}\right) &\leq \frac{\alpha}{\eta}K\normGradPhiT + \frac{\alpha}{\eta}K\normGradPhiT + \frac{\alpha}{\eta^2}K^2 + \frac{2\alpha K}{\eta} \left( \sqrt{2}\Rmax + \frac{2}{\eta}K \right)\\
             & = \frac{\alpha}{\eta}K\left( 2\normGradPhiT + 2\sqrt{2}\Rmax + \frac{5K}{\eta} \right).
        \end{align*}
\end{proof}

In the following lemma, we show that, with a proper choice of the learning rate $\alpha$, we can prove that applying the \lbsgb update rule, we are progressing toward the optimal policy, up to a constant bias term, which is partly dependent on the true gradient norm of the objective function, and controllable by choosing the barrier parameter $\eta$. Furthermore, Lemma~\ref{lemma:constant_learning_rates} provides a mechanism to choose the learning rate $\alpha$ coherent with the requirement from Lemma~\ref{lemma:non_uniform_smoothness_iterates}.
\begin{lemma}[Constant Learning Rates]
    \label{lemma:constant_learning_rates}
    Selecting the learning rate $\alpha = \frac{3\Delta^2}{160K^{3/2}\Rmax^2(\sqrt{2}\Rmax + \frac{2}{\eta}K)}$, we have, for all $t \geq 1$, almost surely,
    \begin{align}
        \pi_{t}^{\top}\bm{r} - \mE_{t}[\pi_{t+1}^{\top}] \leq -\frac{\alpha}{2}\normGradPhiT^2 + \beta\left( \bt_t, \frac{1}{\eta}\right),
    \end{align}
    where the bias term $\beta$ is defined as,
    \begin{align*}
        \beta\left( \bt_t, \frac{1}{\eta}\right) \leq \frac{\alpha}{\eta}K\left( 2\normGradPhiT + 2\sqrt{2}\Rmax + \frac{5K}{\eta} \right)
    \end{align*}
\end{lemma}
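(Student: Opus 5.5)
The plan is to derive Lemma~\ref{lemma:constant_learning_rates} directly from the Descent Lemma (Lemma~\ref{lemma:descent_lemma}), which already gives
\begin{align*}
    \pi_{t}^{\top}\bm{r} - \mE_{t}[\pi_{t+1}^{\top}\bm{r}] \leq \left(\alpha^2 \frac{80}{3}\frac{\Rmax^3 K^{3/2}}{\Delta^2}-\alpha\right)\normGradPhiT^2 + \beta\left( \bt_t, \frac{1}{\eta}\right),
\end{align*}
with exactly the bias bound claimed in the statement. It therefore suffices to show that the chosen $\alpha$ makes the quadratic coefficient at most $-\alpha/2$. This is equivalent to
\begin{align*}
    \alpha\,\frac{80}{3}\frac{\Rmax^3K^{3/2}}{\Delta^2}\leq \frac{1}{2}, \qquad\text{that is,}\qquad \alpha\leq\frac{3\Delta^2}{160\Rmax^3K^{3/2}}.
\end{align*}

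The first step is to check this inequality for the prescribed value $\alpha=\frac{3\Delta^2}{160K^{3/2}\Rmax^2(\sqrt{2}\Rmax+\frac{2}{\eta}K)}$. Since $\sqrt{2}\Rmax+\frac{2K}{\eta}\geq\sqrt{2}\Rmax\geq\Rmax$, the denominator is at least $160K^{3/2}\Rmax^3$, which gives the required bound.

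The second step is to verify that this $\alpha$ satisfies the auxiliary requirements used inside the Descent Lemma and in Lemma~\ref{lemma:non_uniform_smoothness_iterates}. These are:
\begin{itemize}
    \item the step-size condition $\alpha\leq\frac{1}{6(\sqrt{2}\Rmax+\frac{2}{\eta}K)}$, which produced the factor $\frac{3}{2-6\alpha(\cdot)}\leq\frac{5}{3}$;
    \item the condition $\alpha\leq\frac{\Delta^2}{16\Rmax^3K^{3/2}}$, used in bounding the bias components.
\end{itemize}
For the first condition, one needs $\frac{3\Delta^2}{160K^{3/2}\Rmax^2}\leq\frac{1}{6}$. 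This follows from $\Delta\leq 2\Rmax$ and $K\geq 2$, since then $\frac{3\Delta^2}{160K^{3/2}\Rmax^2}\leq\frac{12}{160\cdot 2^{3/2}}<\frac{1}{6}$. The second condition is implied by the first step, since $\frac{3}{160}<\frac{1}{16}$.

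The main obstacle is this bookkeeping of constants: making sure that every place where the earlier lemmas silently assumed a bound on $\alpha$ is covered by the explicit choice. If $\Rmax$ is allowed to be small relative to $\Delta$-independent constants, one would normalize so that $\Rmax\geq\Delta/2$, which always holds because rewards lie in $[-\Rmax,\Rmax]$.

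Once these checks are done, substituting into the Descent Lemma yields the claimed inequality, with $\beta$ unchanged.
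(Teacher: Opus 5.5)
Your proposal follows the paper's proof: bound the prescribed $\alpha$ using $\Delta\le 2\Rmax$ and $K\ge 2$ to get $\alpha(\sqrt{2}\Rmax+\frac{2}{\eta}K)\le\frac{3\sqrt{2}}{160}$, then plug it into the Descent Lemma. There, as you check explicitly and the paper leaves implicit, $\alpha\frac{80}{3}\frac{\Rmax^3K^{3/2}}{\Delta^2}=\frac{\Rmax}{2(\sqrt{2}\Rmax+2K/\eta)}\le\frac{1}{2}$, so the quadratic coefficient is at most $-\alpha/2$.

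One small correction: the factor $\frac{3}{2-6\alpha(\sqrt{2}\Rmax+\frac{2}{\eta}K)}\le\frac{5}{3}$ needs $\alpha(\sqrt{2}\Rmax+\frac{2}{\eta}K)\le\frac{1}{30}$, not merely $\le\frac{1}{6}$ (which only gives $\le 3$). Your computed bound $\frac{12}{160\cdot 2^{3/2}}=\frac{3\sqrt{2}}{160}\approx 0.027$ does satisfy this, exactly as the paper's bound does.
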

\begin{proof}
    Using the learning rate,
    \begin{align}
        \alpha &= \frac{3\Delta^2}{160 K^{3/2}\Rmax^2\left( \sqrt{2}\Rmax + \frac{2}{\eta}K\right)}\nonumber\\
        &= \frac{3}{160\left( \sqrt{2}\Rmax + \frac{2}{\eta}K\right)}\cdot \frac{\Delta^2}{\Rmax^2}\cdot \frac{1}{K^{3/2}}\nonumber\\
        &\leq \frac{3}{160\left( \sqrt{2}\Rmax + \frac{2}{\eta}K\right)} \cdot 4 \cdot \frac{1}{2\sqrt{2}}\label{proof:constant_lr_1}\\
        &\leq \frac{3\sqrt{2}}{160\left( \sqrt{2}\Rmax + \frac{2}{\eta}K\right)}\label{proof:constant_lr_2},
    \end{align}
    where line~\eqref{proof:constant_lr_1} follows from $\Delta \leq 2\Rmax$ and $K > 2$, we have that $\alpha \in \left( 0, \frac{1}{6\left( \sqrt{2}\Rmax + \frac{2}{\eta}K \right)}\right)$. Now, plugging the selected learning rate $\alpha$ in the result of Lemma~\ref{lemma:descent_lemma}, we have,
    \begin{align*}
        \pi_{t}^{\top}\bm{r} - \mE_{t}[\pi_{t+1}^{\top}] &\leq \left(\alpha^2 \frac{80}{3}\frac{\Rmax^3 K^{3/2}}{\Delta^2}-\alpha\right)\normGradPhiT^2 + \beta\left( \bt_t, \frac{1}{\eta}\right)\\
        &\leq -\frac{\alpha}{2}\normGradPhiT^2 + \beta\left( \bt_t, \frac{1}{\eta}\right)
    \end{align*}
\end{proof}

\begin{lemma}[Upper Bound on Gradient Norm]
\label{lemma:UB_gradient_norm}
For all $t\ge 1$, almost surely, the norm of the gradient function $\normGradPhiT$ is upper bounded as,
\begin{align}
    \normGradPhiT \leq 2\left(r(a^*) - \pi_{\bt_t}^{\top}\bm{r}\right) + \frac{2}{\eta}K,
\end{align}
where $r(a^*)$ is the reward of the optimal arm.
\end{lemma}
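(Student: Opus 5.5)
The plan is to split $\nabla_{\bt}\barrierT$ into its two components,
\[
\nabla_{\bt}\barrierT = A(\bt_t) + \tfrac{1}{\eta}B(\bt_t), \qquad A(\bt_t) = (\text{diag}(\pi_{\bt_t}) - \pi_{\bt_t}\pi_{\bt_t}^{\top})\bm{r}, \qquad B(\bt_t) = \bm{1} - K\pi_{\bt_t},
\]
and apply the triangle inequality, bounding each piece separately. Nothing about the stochastic iterates enters: the bound is deterministic for every $\bt$, so it holds almost surely along the trajectory.

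\textbf{Barrier term.} As in Lemma~\ref{lemma:ub_sample_gradient_norm},
\[
\|\bm{1} - K\pi_{\bt_t}\|_2 \le \|\bm{1}\|_2 + K\|\pi_{\bt_t}\|_2 \le \sqrt{K} + K \le 2K .
\]
This gives the $\frac{2}{\eta}K$ contribution.

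\textbf{Reward term.} Write $\bar r \coloneqq \pi_{\bt_t}^{\top}\bm{r}$ and $\delta \coloneqq r(a^*) - \bar r \ge 0$. The $a$-th component of $A(\bt_t)$ is $\pi_{\bt_t}(a)(r(a) - \bar r)$. I will use $\|A\|_2 \le \|A\|_1 = \sum_a \pi_{\bt_t}(a)\,|r(a) - \bar r|$, and split the sum according to the sign of $r(a) - \bar r$. Because $\sum_a \pi_{\bt_t}(a)(r(a) - \bar r) = 0$, the positive and negative parts have equal mass, so
\[
\|A\|_1 = 2\sum_{a:\, r(a) > \bar r} \pi_{\bt_t}(a)\,(r(a) - \bar r).
\]
Each such term satisfies $r(a) - \bar r \le r(a^*) - \bar r = \delta$. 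Since the weights $\pi_{\bt_t}(a)$ sum to at most one, $\|A\|_1 \le 2\delta$, and therefore $\|A\|_2 \le 2\bigl(r(a^*) - \pi_{\bt_t}^{\top}\bm{r}\bigr)$.

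Combining the two bounds via the triangle inequality gives the statement. I do not expect a real obstacle. The only point needing care is the zero-mean cancellation that turns the $\ell_1$ norm into twice the positive part; this is what avoids an $\Rmax$-dependent bound and yields a bound in terms of the sub-optimality gap.
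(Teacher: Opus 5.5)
Your proposal is correct and matches the paper's proof: triangle inequality with the barrier part bounded by $\frac{1}{\eta}(\sqrt{K}+K)\le \frac{2}{\eta}K$, then $\|\cdot\|_2\le\|\cdot\|_1$ on the reward part. The paper likewise uses the zero-mean identity to rewrite the $\ell_1$ norm as twice the positive deviations and bounds each of these by $r(a^*)-\pi_{\bt_t}^{\top}\bm{r}$.
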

\begin{proof}
    Let $\nabla_{\bt}B(\bt_t) = \nabla_{\bt}\frac{1}{\eta}\sum_{a\in \dsb{K}}\log\pi_{\bt_t}(a)$. We have,
    \begin{align*}
        \normGradPhiT &= \left\| \nabla_{\bt}\left( \pi_{\bt_t}^{\top}\bm{r} \right) + \nabla_{\bt}B(\bt_t)\right\|_2\\
        &\leq \|\nabla_{\bt}\left( \pi_{\bt_t}^{\top}\bm{r} \right)\|_2 + \|\nabla_{\bt}B(\bt_t)\|_2\\
        &\leq \|\nabla_{\bt}\left( \pi_{\bt_t}^{\top}\bm{r} \right)\|_2 + \frac{2}{\eta}K,
    \end{align*}
    where the triangle inequality and similar calculations as Lemma~\ref{lemma:ub_sample_gradient_norm} have been used. Focusing only on the first term, we can upper-bound its $L_2$ norm using the $L_1$ norm. We have,
    \begin{align*}
        \|\nabla_{\bt}\left( \pi_{\bt_t}^{\top}\bm{r} \right)\|_2 &\leq \|\nabla_{\bt}\left( \pi_{\bt_t}^{\top}\bm{r} \right)\|_1 \\
        &= \sum_{a \in \dsb{K}} \left| \pi_{\bt_t}(a)\left(r(a)-\pi_{\bt_t}^{\top}\bm{r}\right) \right|\\
        &= \sum_{a \in \dsb{K}} \pi_{\bt_t}(a) \left| r(a)-\pi_{\bt_t}^{\top}\bm{r} \right|.
    \end{align*}
    Let $S^+ = \{a \in \dsb{K} \mid r(a) \ge \pi_{\bt_t}^{\top}\bm{r}\}$ be the set of actions with rewards above the expected value. Because the expected deviation is strictly zero, i.e., $\sum_{a \in \dsb{K}} \pi_{\bt_t}(a)(r(a) - \pi_{\bt_t}^{\top}\bm{r}) = 0$, the sum of the absolute deviations is exactly twice the sum of the positive deviations. Therefore,
    \begin{align*}
        \sum_{a \in \dsb{K}} \pi_{\bt_t}(a) \left| r(a)-\pi_{\bt_t}^{\top}\bm{r} \right| &= \sum_{a \in S^+} \pi_{\bt_t}(a) \left( r(a)-\pi_{\bt_t}^{\top}\bm{r} \right) -\sum_{a \in S^-} \pi_{\bt_t}(a) \left( r(a)-\pi_{\bt_t}^{\top}\bm{r} \right)\\
        &= 2 \sum_{a \in S^+} \pi_{\bt_t}(a) \left( r(a)-\pi_{\bt_t}^{\top}\bm{r} \right)\\
        &\leq 2 \sum_{a \in S^+} \pi_{\bt_t}(a) \left( r(a^*)-\pi_{\bt_t}^{\top}\bm{r} \right)\\
        &= 2 \left( r(a^*)-\pi_{\bt_t}^{\top}\bm{r} \right) \sum_{a \in S^+} \pi_{\bt_t}(a)\\
        &\leq 2 \left( r(a^*)-\pi_{\bt_t}^{\top}\bm{r} \right),
    \end{align*}
    where we used the fact that, $\sum_{a \in S^+} \pi_{\bt_t}(a) \left( r(a)-\pi_{\bt_t}^{\top}\bm{r} \right) + \sum_{a \in S^-} \pi_{\bt_t}(a) \left( r(a)-\pi_{\bt_t}^{\top}\bm{r} \right) = 0$, $r(a) \le r(a^*)$ for all $a$, and $\sum_{a \in S^+} \pi_{\bt_t}(a) \le 1$. Hence, substituting this back into the original triangle inequality, we have,
    \begin{align*}
        \normGradPhiT \leq 2\left(r(a^*) - \pi_{\bt_t}^{\top}\bm{r}\right) + \frac{2}{\eta}K.
    \end{align*}
\end{proof}

\ConvergenceRate*
\begin{proof}
    Now, recalling that, from the proof of Lemma~\ref{lemma:descent_lemma},
    \begin{align*}
         \beta\left( \bt_t, \frac{1}{\eta}\right) &\leq \frac{\alpha}{\eta}K\left( 2\normGradPhiT + 2\sqrt{2}\Rmax + \frac{5K}{\eta} \right)\nonumber\\ 
         &\leq \frac{\alpha}{\eta}K\left(4r(a^*) - 4\pi_{\bt_t}^{\top}\bm{r} + 2\sqrt{2}\Rmax + \frac{9K}{\eta}\right)
     \end{align*}
     where we applied Lemma~\ref{lemma:UB_gradient_norm} for the second inequality. Since we want to progress toward the optimal policy, we need the first term of the right-hand side to be negative. Letting $\alpha = \frac{3\Delta^2}{160 K^{3/2}\Rmax^2\left( \sqrt{2}\Rmax + \frac{2}{\eta}K\right)}$, similarly to Lemma~\ref{lemma:constant_learning_rates}, we have,
    \begin{align*}
        \left(\alpha^2 \frac{80}{3}\frac{\Rmax^3 K^{3/2}}{\Delta^2}-\alpha\right) \leq -\frac{\alpha}{2},
    \end{align*}
    which implies,
    \begin{align}
        \pi_{t}^{\top}\bm{r} - \mE_{t}[\pi_{t+1}^{\top}] &\leq -\frac{\alpha}{2}\normGradPhiT^2 + \beta\left( \bt_t, \frac{1}{\eta}\right)\nonumber\\
        & \leq -\frac{\alpha}{2}\left(\left[\pi_{\bt_t}(a^*)\left(r(a^*) - \pi_{\bt_t}^{\top}\bm{r}\right) - \frac{K-1}{\eta}\right]_+\right)^2 +\beta\left( \bt_t, \frac{1}{\eta}\right)\label{proof:convergence_1_orig}\\
        & \leq -\frac{\alpha}{2}\pi_{\bt_t}(a^*)^{2}\left(r(a^*) - \pi_{\bt_t}^{\top}\bm{r}\right)^2 + \frac{\alpha}{\eta}\pi_{\bt_t}(a^*)\left(r(a^*) - \pi_{\bt_t}^{\top}\bm{r}\right)(K-1) \nonumber\\
        &\quad +\beta\left( \bt_t, \frac{1}{\eta}\right)\label{proof:convergence_1}
    \end{align}
    where line~\eqref{proof:convergence_1_orig} follows from Lemma~\ref{lemma:NL}, and line~\eqref{proof:convergence_1} applies Lemma~\ref{lemma:positive_part_bound} to strictly lower-bound the squared positive part operator. Substituting the upper bound for $\beta\left(\bt_t, \frac{1}{\eta}\right)$, we have,
    \begin{align}
         \pi_{t}^{\top}\bm{r} -\mE_{t}[\pi_{t+1}^{\top}] & \leq -\frac{\alpha}{2}\cdot\pi_{\bt_t}(a^*)^{2}\left( r(a^*) - \pi_{\bt_t}^{\top}\bm{r} \right)^2 \nonumber\\
        &\quad+\frac{\alpha}{\eta}\cdot\pi_{\bt_t}(a^*)\left( r(a^*) - \pi_{\bt_t}^{\top}\bm{r} \right)(K-1)\nonumber\\
        &\quad +\frac{\alpha}{\eta}K\left(4r(a^*) - 4\pi_{\bt_t}^{\top}\bm{r} + 2\sqrt{2}\Rmax + \frac{9K}{\eta}\right).
    \end{align}
    Rearranging the terms, and letting $\alpha\left(\pi_{\bt_t}(a^*)(K-1)+4K\right) \leq \alpha\left((K-1)+4K\right) = \alpha(5K-1) \coloneqq B$ and $C \coloneqq \alpha K \left(2\sqrt{2}\Rmax + \frac{9K}{\eta}\right)$, we have,
    \begin{align*}
        \pi_{t}^{\top}\bm{r} - \mE_{t}[\pi_{t+1}^{\top}] &\leq -\frac{\alpha}{2}\cdot\pi_{\bt_t}(a^*)^{2}\left(r(a^*) - \pi_{\bt_t}^{\top}\bm{r}\right)^2 + \frac{1}{\eta}B \left(r(a^*) - \pi_{\bt_t}^{\top}\bm{r}\right) + \frac{1}{\eta}C\\
        & = -\frac{\alpha}{2}\cdot\pi_{\bt_t}(a^*)^{2}\left(r(a^*) - \pi_{\bt_t}^{\top}\bm{r}\right)^2 + \frac{1}{\eta}B \left(r(a^*) - \pi_{\bt_t}^{\top}\bm{r}\right) + \frac{1}{\eta}C\\
        & \leq -\frac{\alpha}{2}\pi_{\bt_t}(a^*)^{2}\left(r(a^*) - \pi_{\bt_t}^{\top}\bm{r}\right)^2 + \frac{1}{\eta}B \left(r(a^*) - \pi_{\bt_t}^{\top}\bm{r}\right) + \frac{1}{\eta}C,
    \end{align*}
     with the terms $\alpha, B, C$ being independent on the iteration $t$. Denoting the sub-optimality gap as $\delta(\bt_t)\coloneqq (\pi^* - \pi_{\bt_t})^{\top}\bm{r}$, we have,
    \begin{align*}
         \mE_t\left[\delta(\bt_{t+1})\right] - \delta(\bt_t) & =   \mE\left[(\pi^* - \pi_{\bt_{t+1}})^{\top}\bm{r}\right]-(\pi^* - \pi_{\bt_t})^{\top}\bm{r}\\
        & = \pi_{\bt_t}^{\top}\bm{r} - \mE_{t}[\pi_{\bt_{t+1}}^{\top}]\\
        & \leq -\frac{\alpha}{2}\cdot\pi_{\bt_t}(a^*)^{2}\cdot\delta(\bt_t)^2 + \frac{1}{\eta}B \cdot\delta(\bt_t) + \frac{1}{\eta}C.
    \end{align*}
    Taking the expectation, since the previous inequality holds for all $t\ge1$ almost surely and all the involved quantities are uniformly bounded, we have,
    \begin{align}
         \mE\left[\delta(\bt_{t+1})\right] - \mE\left[ \delta(\bt_t) \right] & \leq -\frac{\alpha}{2}\mE\left[\pi_{\bt_t}(a^*)^{2}\cdot\delta(\bt_t)^2\right] + \mE\left[\frac{1}{\eta}B \cdot\delta(\bt_t)\right] + \frac{1}{\eta}C\label{eq:complete_recurrence}\\
        &\leq -\frac{\alpha}{2} \frac{\mE\left[\delta(\bt_t)\right]^2}{\mE\left[\frac{1}{\pi_{\bt_t}(a^*)^2}\right]} + \mE\left[\frac{1}{\eta}B \cdot\delta(\bt_t)\right] + \frac{1}{\eta}C\label{proof:convergence_3}\\
        &\leq -\frac{\alpha}{2c_t^*} \mE\left[\delta(\bt_t)\right]^2 + \frac{1}{\eta}B\cdot \mE\left[\delta(\bt_t)\right] + \frac{1}{\eta}C\nonumber
    \end{align}
    where line~\eqref{proof:convergence_3} follows from \citet{baudrydoes} Appendix B.1, and $c_t^* \coloneqq \mE\left[\frac{1}{\pi_{\bt_t}(a^*)^2}\right]$. Letting $r_t \coloneqq \mE\left[ \delta(\bt_t) \right]$, and noting that $B \leq \alpha(5K-1) \coloneqq \alpha \tilde{B}$ since $\pi_{\bt_t}(a^*)\leq 1$, and letting $C = \alpha K \left(2\sqrt{2}\Rmax + \frac{9K}{\eta}\right) \coloneqq \alpha \tilde{C}$, we can then rewrite the previous inequality as follows:
    \begin{align*}
        r_{t+1} &\leq r_t - \frac{\alpha}{2c_t^*} r_t^2 + \alpha\frac{1}{\eta}\tilde{B}\cdot r_t + \alpha \frac{1}{\eta} \tilde{C} \\
        &\leq r_t - \frac{\alpha}{2c_t^*} r_t^2 + \alpha\left(\frac{1}{\eta}\tilde{B}\cdot 2\Rmax  +  \frac{1}{\eta}\tilde{C}\right)\\
        & \leq r_t - \frac{\alpha}{2\sup_{t\geq 0} c_t^*} r_t^2 + \frac{\alpha}{\eta}\left(\tilde{B}\cdot 2\Rmax  +  \tilde{C}\right)
    \end{align*}
    Where the second inequality holds since $r_t \leq 2\Rmax$. Letting $b \coloneqq \left(\tilde{B}\cdot 2\Rmax  +  \tilde{C}\right)$ and $c^* \coloneqq \sup_{t\geq 0} c_t^*$, following the analysis of the recurrence as done in Section~\ref{subsec:recurrences}, and choosing the learning rate $\alpha$ as,
    \begin{align}
        \alpha = \min \left\{\frac{3\Delta^2}{160K^{3/2}\Rmax^2(\sqrt{2}\Rmax + \frac{2}{\eta}K)}, \frac{c^*}{r_0}, \sqrt{\frac{\eta c^*}{2b}} \right\}. \label{eq:lr}
    \end{align}
    We can derive the convergence rate setting $T \leftarrow t+1$:
    \begin{align*}
        \eDeltaT &\leq \left(1-\frac{1}{2}\sqrt{\frac{2\alpha^2  b}{\eta c^*}}\right)^{T}\eDeltaZero + \sqrt{\frac{2b c^*}{\eta}}\\
        &\leq C^{T}\eDeltaZero + \beta
    \end{align*}
    
    To find the iteration complexity, let $\delta_0 \coloneqq \eDeltaZero$. First, we need to choose $\eta$ to achieve $\eDeltaT \leq \epsilon$ assuming $c^* \geq 0$ is small and known. We have,
    \begin{align}
        \sqrt{\frac{2b c^*}{\eta}} \leq \frac{\epsilon}{2} \Rightarrow \eta \geq 8bc^* \epsilon^{-2}\label{eq:eta_choice}
    \end{align}
    Then, we have,
    \begin{align}
        \left(1-\frac{1}{2}\sqrt{\frac{2\alpha^2  b}{\eta c^*}}\right)^{T}\eDeltaZero \leq \frac{\epsilon}{2} \Rightarrow T \geq \frac{\log \frac{2\delta_0}{\epsilon}}{\log \frac{1}{1-\frac{1}{2}\sqrt{\frac{2\alpha^2  b}{\eta c^*}}}} \geq \sqrt{\frac{2 \eta c^*}{\alpha^2 b}} \log \frac{2\delta_0}{\epsilon},
    \end{align}
    where the fact that $\log\frac{1}{1-x} \geq x$ has been used. Now, substituting line~\eqref{eq:eta_choice} in the iteration complexity, we have,
    \begin{align}
        T \geq \sqrt{\frac{2 \eta c^*}{\alpha^2 b}} \log \frac{2\delta_0}{\epsilon} = \frac{4c^*\epsilon^{-1}}{\alpha}\log \frac{2\delta_0}{\epsilon}. \label{eq:iteration_complexity}
    \end{align}
    Now, we need to substitute in line~\eqref{eq:iteration_complexity} the constraints on the learning rate $\alpha$ in line~\eqref{eq:lr} to get the iteration complexity.
    
    \paragraph{Case 1.} Let $\alpha = \frac{3\Delta^2}{160K^{3/2}\Rmax^2(\sqrt{2}\Rmax + \frac{2}{\eta}K)}$. Substituting the value of $\eta$ with its minimum requirement, hence, $\eta = 8bc^* \epsilon^{-2}$ from line~\eqref{eq:eta_choice}, we have:
    \begin{align*}
        \alpha &= \frac{3\Delta^2}{160K^{3/2}\Rmax^2\left(\sqrt{2}\Rmax + \frac{K\epsilon^2}{4bc^*}\right)}
    \end{align*}
    
    In the asymptotic regime where $\epsilon \to 0$, the term $\frac{K\epsilon^2}{4bc^*}$ vanishes. For a sufficiently small target error $\epsilon < 1$, since $b \geq K$, we can securely bound $\frac{K\epsilon^2}{4bc^*} \leq \frac{1}{c^*}$, obtaining the following lower bound for $\alpha$:
    \begin{align*}
        \alpha &\geq \frac{3\Delta^2}{160K^{3/2}\Rmax^2\left(\sqrt{2}\Rmax + \frac{1}{c^*}\right)}
    \end{align*}
    
    Substituting this lower bound back into the iteration complexity in line~\eqref{eq:iteration_complexity}, a sufficient condition for the number of iterations is:
    \begin{align*}
        T &\geq \frac{640 K^{3/2} \Rmax^2 \left(\sqrt{2}\Rmax + \frac{1}{c^*}\right) c^*}{3 \Delta^2 \epsilon} \log \frac{2\delta_0}{\epsilon}\\
          &= \frac{640 K^{3/2} \Rmax^2 (\sqrt{2}\Rmax c^* + 1)}{3 \Delta^2 \epsilon} \log \frac{2\delta_0}{\epsilon}.
    \end{align*}
    
    \paragraph{Case 2.} Let $\alpha \coloneqq \frac{c^*}{\delta_0}$. Since $c^* \geq 1$, this choice of learning rate is for sure greater than  $\alpha = \frac{\Delta^2}{30K^{3/2}\Rmax^2(\sqrt{2}\Rmax + \frac{2}{\eta}K)}$.
    
    \paragraph{Case 3.} Let $ \alpha = \sqrt{\frac{\eta c^*}{2b}}$. Also, in this case, $\eta c^* >1$ and, since $\eta$ is chosen to be big in order to contrast the bias, this choice of learning rate is for sure greater than the first requirement.
\end{proof}

\subsection{Proofs for Section~\ref{subsec:worst_case}}
Here, we provide the proofs for the Theorems presented in Section~\ref{subsec:worst_case}. First, in Theorem~\ref{thr:convergence} we show that \lbsgb converges to a stationary point of $\barrier$, provided that the learning rate $\alpha$ is properly tuned. Then, we provide a Performance Difference Lemma for the MAB setting in Lemma~\ref{lemma:performance_difference}, along with Lemma~\ref{lemma:approx_global_optimality} showing that approximate first-order stationary points of the regularized objective are approximately globally optimal. Finally, in Theorem~\ref{thr:worst_case} we provide the sample-complexity of \lbsgb without any assumption on the learning objective, and Corollary~\ref{cor:regret} provides a horizon-dependent regret bound for \lbsgb. 

\begin{restatable}[Local Convergence]{theorem}{Convergence}
    \label{thr:convergence}
    Under Assumption \ref{asm:no_ties_r}, and after $T_0 = \mathcal{O}\left(\frac{\eta}{K^{5/2}\alpha^2}\right)$, and choosing the learning rate $\alpha \leq \mathcal{O}\left( \Delta^2/K^{3/2} \right)$, the \lbsgb algorithm guarantees:
    \begin{align*}
        \frac{1}{T}\sum_{t=0}^{T-1}\mE\left[\normGradPhiT^2\right] \leq \mathcal{O}\left( \frac{K^{5/2}\alpha}{\eta\Delta^2}\right).
    \end{align*}
\end{restatable}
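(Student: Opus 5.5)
The approach is the standard nonconvex SGD telescoping argument, applied to the regularized objective $\Phi_\eta$ rather than to $J$. First, combine Lemma~\ref{lemma:non_uniform_smoothness_iterates} with the update $\bt_{t+1}-\bt_t=\alpha\widehat{\nabla}_{\bt}\barrierT$ to get an ascent inequality
\begin{align*}
\barrierT - \mE_t\left[\barrierTnext\right] \leq -\alpha\normGradPhiT^2 + \alpha^2\left(\tfrac{5}{3}\normGradPhiT + \tfrac{15K}{\eta}\right)\mE_t\left[\sampleNormGradPhiT^2\right],
\end{align*}
where the inner-product term uses unbiasedness of the estimator. Next, plug in the self-bounding Lemma~\ref{lemma:strong_growth}, with $G:=16\Rmax^3K^{3/2}/\Delta^2$ and $d:=\frac{2K}{\eta}(\frac{4K}{\eta}+G)$. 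The only quadratic term in $\normGradPhiT$ is $\frac{5}{3}\alpha^2 G\normGradPhiT^2$. Choosing $\alpha\le 3/(20G)=\mathcal{O}(\Delta^2/K^{3/2})$ absorbs it into $-\frac{\alpha}{2}\normGradPhiT^2$, exactly as in Lemma~\ref{lemma:constant_learning_rates}. The leftover terms are $\alpha^2(\frac{5}{3}d+\frac{15K}{\eta}G)\normGradPhiT+\alpha^2\frac{15K}{\eta}d$.

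To control these leftovers, I would bound $\normGradPhiT\le 2\cdot 2\Rmax+2K/\eta$ almost surely via Lemma~\ref{lemma:UB_gradient_norm}. This makes every residual term at most $\alpha^2\,\mathcal{O}(K\,G/\eta)=\mathcal{O}(\alpha^2K^{5/2}/(\eta\Delta^2))$, treating $\Rmax$ as a constant and $\eta\gtrsim K$. Note that this does not require the $\beta$ bias term from Lemma~\ref{lemma:descent_lemma}, since we work directly with $\Phi_\eta$ and not with $J$. The result is the one-step inequality
\begin{align*}
\frac{\alpha}{2}\normGradPhiT^2 \le \mE_t\left[\barrierTnext\right]-\barrierT + c_1\frac{\alpha^2K^{5/2}}{\eta\Delta^2}.
\end{align*}
Taking total expectations, summing over $t=0,\dots,T-1$ and dividing by $\alpha T/2$ gives
\begin{align*}
\frac1T\sum_{t}\mE\normGradPhiT^2\le \frac{2(\sup\Phi_\eta-\Phi_\eta(\bt_0))}{\alpha T}+ 2c_1\frac{\alpha K^{5/2}}{\eta\Delta^2}.
\end{align*}

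The main obstacle is the telescoped range $\sup\Phi_\eta-\Phi_\eta(\bt_0)$. Since $\bt_0=\bm{0}$, we have $\Phi_\eta(\bt_0)=\bar r - \frac{K}{\eta}\log K$. The barrier term is nonpositive, so $\sup\Phi_\eta\le \Rmax$, and the range is at most $2\Rmax+\frac{K\log K}{\eta}$. This range is bounded, which is why no assumption on $c^*$ is needed. The first term is therefore $\mathcal{O}(1/(\alpha T))$, and it falls below the bias level $\alpha K^{5/2}/(\eta\Delta^2)$ once $T\ge T_0=\mathcal{O}(\eta\Delta^2/(K^{5/2}\alpha^2))$, which matches the stated $T_0$ up to $\Delta$ factors. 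One care point is that Lemma~\ref{lemma:non_uniform_smoothness_iterates} requires $\alpha<1/(6(\sqrt2\Rmax+2K/\eta))$. I would check that this is implied by $\alpha=\mathcal{O}(\Delta^2/K^{3/2})$, using $\Delta\le2\Rmax$ as in Lemma~\ref{lemma:constant_learning_rates}.
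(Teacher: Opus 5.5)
Your proposal is correct and follows essentially the same route as the paper. The non-uniform smoothness between iterates, together with unbiasedness and the self-bounding lemma, gives an ascent inequality on $\Phi_\eta$. The quadratic term is absorbed by taking $\alpha=\mathcal{O}(\Delta^2/K^{3/2})$, the residual is made constant via Lemma~\ref{lemma:UB_gradient_norm}, and telescoping against the bounded range of $\Phi_\eta$ yields the warm-up $T_0$. The only difference is cosmetic: the paper bounds that range by $2R_{\max}$ using the fact that the uniform initial policy maximizes the barrier, which removes your extra $K\log K/\eta$ term.
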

\begin{proof}
    Considering the descent lemma in line~\eqref{proof:constant_lr_3}, for any $t \in \dsb{T}$, we have,
    \begin{align}
    \barrierTnext &\geq \barrierT + \left\langle \gradBarrierT, \bt_{t+1} -  \bt_{t}\right\rangle - \left(\frac{5}{3}\normGradPhiT + \frac{15}{\eta}K\right) \| \bt_{t+1} - \bt_{t} \|_2^2\nonumber\\
    & \geq \barrierT + \alpha \left\langle \gradBarrierT, \sampleGradBarrierT\right\rangle - \alpha^2\left(\frac{5}{3}\normGradPhiT + \frac{15}{\eta}K\right)\sampleNormGradPhiT^2\nonumber,
    \end{align}
    where the last inequality follows from the fact that the parameter update is $\bt_{t+1} \leftarrow \bt_t + \alpha \sampleGradBarrierT$.
    
    In the following, we use the notation $\mE_{t}[\cdot]$ to denote the conditional expectation with respect to the history up to the $t$-th time step not included. Formally, consider the filtration defined by the $\sigma$-algebra $\mathcal{F}_t = \sigma(\bt_0, \mathcal{D}_0, \mathcal{D}_1, \dots, \mathcal{D}_t)$ encoding the stochasticity up to time step $t$ included. The stochasticity comes from the samples, excluded the initial parameter $\bt_0$, and the parameter $\bt_{t}$ is deterministically determined by the realization of the samples collected in the first $t-1$ time steps: $\mE_{t}[\cdot] = \mE_{t}[\cdot | \mathcal{F}_{t-1}]$. We will use the fact that $\mE_{t}[X] = X$ for $\mathcal{F}_{t-1}$-measurable $X$. Now, for any $t \in \dsb{T}$:
    \begin{align}
    \barrierTnext 
    &\geq \barrierT + \alpha \mE_t \left[\left\langle \gradBarrierT, \sampleGradBarrierT]\right\rangle\right] - \alpha^2\mE_t\left[\left(\frac{5}{3}\normGradPhiT + \frac{15}{\eta}K\right)\sampleNormGradPhiT^2\right] \nonumber \\
    &\geq \barrierT  + \alpha \mE_t\left[\normGradPhiT\right]^2 \nonumber \\
    &\quad - \alpha^2\left(\frac{5}{3}\normGradPhiT + \frac{15}{\eta}K\right) \left[ \frac{16 \Rmax^3 K^{3/2}}{\Delta^2} \normGradPhiT + \frac{2}{\eta}K\left( \frac{4}{\eta}K + \frac{16 \Rmax^3 K^{3/2}}{\Delta^2} \right)\right]\nonumber\\
    &\geq \barrierT  + \alpha \normGradPhiT^2 - \alpha^2\left(\frac{5}{3}\normGradPhiT + \frac{15}{\eta}K\right) \left[ \frac{16 \Rmax^3 K^{3/2}}{\Delta^2} \normGradPhiT + \frac{S}{\eta} \right]\nonumber\\
    & = \barrierT  + \alpha\left( 1 - \frac{80\Rmax^3 K^{3/2}}{3\Delta^2} \alpha\right)\normGradPhiT^2\nonumber\\
    &\quad- \left[ \left(\frac{5}{3}\frac{S}{\eta} + \frac{240\Rmax^3 K^{5/2}}{\eta\Delta^2}\right)\normGradPhiT + \frac{15}{\eta^2}KS\right]\alpha^2\nonumber\\
    & \geq \barrierT + \alpha\left( 1 - \frac{80\Rmax^3 K^{3/2}}{3\Delta^2} \alpha\right)\normGradPhiT^2\nonumber\\
    &\quad - \left[ \left(\frac{5}{3}\frac{S}{\eta} + \frac{240\Rmax^3 K^{5/2}}{\eta\Delta^2}\right)\left(2\Rmax + \frac{2}{\eta}K\right) + \frac{15}{\eta^2}KS\right]\alpha^2\nonumber\\
    &= \barrierT + \alpha\left( 1 - \frac{80\Rmax^3 K^{3/2}}{3\Delta^2} \alpha\right)\normGradPhiT^2 - \frac{W}{\eta}\alpha^2\nonumber
    \end{align}
    where in the second inequality Lemma~\ref{lemma:strong_growth} has been exploited, $S \coloneqq 2K\left( \frac{4}{\eta}K + \frac{16 \Rmax^3 K^{3/2}}{\Delta^2} \right)$, the last inequality follows from Lemma~\ref{lemma:UB_gradient_norm}, and finally $W \coloneqq \left(\frac{5}{3}S + \frac{240\Rmax^3 K^{5/2}}{\Delta^2}\right)\left(2\Rmax + \frac{2}{\eta}K\right) + \frac{15KS}{\eta}$. Now, applying the law total expectation and telescopic sum for all $t \leq T$ and letting $L \coloneqq \frac{80\Rmax^3 K^{3/2}}{3\Delta^2}$, we have,
    \begin{align}
    \Phi_{\eta}(\bt^*) - \Phi_{\eta}(\bt_0) & \geq \Phi_{\eta}(\bt_T) - \Phi_{\eta}(\bt_0)\nonumber\\
    & = \sum_{t=0}^{T}\mE\left[ \barrierTnext - \barrierT \right]\nonumber\\
    & \geq \sum_{t=0}^{T} \alpha\left( 1 - L \alpha\right)\mE\left[\normGradPhiT^2\right] - \frac{WT}{\eta}\alpha^2\label{proof:ascent_lemma_phi},
    \end{align}
    where $\bt^*\in\arg\max_{\bt\in\mathbb{R}^K}\Phi_\eta(\bt)$ (cf. Appendix \ref{apx:maxima}).
    Now, isolating the sum of the squared gradient norms and dividing everything by $T$, we have,
    \begin{align*}
    \frac{1}{T}\sum_{t=0}^{T} \mE\left[\normGradPhiT^2\right] \leq \frac{\frac{\Phi_{\eta}(\bt^*) - \Phi_{\eta}(\bt_0)}{T} + \frac{W}{\eta}\alpha^2}{\alpha\left( 1 - L \alpha\right)},
    \end{align*}
    which requires the learning rate to be $\alpha < \frac{1}{L}= \frac{3\Delta^2}{80\Rmax^3 K^{3/2}}$ in order to converge. Now, we want to prove that, after some warm-up time, it holds that:
    \begin{align*}
    \frac{1}{T}\sum_{t=0}^{T} \mE\left[\normGradPhiT^2\right] \leq \frac{2W\alpha}{\eta}.
    \end{align*}
    Noting that, when $\alpha \leq 1/2L$, we have $\alpha(1-L\alpha)
    \ge \alpha/2$, we want to find $T_0$ such that,
    \begin{align*}
    \frac{2\left(\Phi_{\eta}(\bt^*) - \Phi_{\eta}(\bt_0)\right)}{\alpha T} < \frac{W\alpha}{\eta},
    \end{align*}
    which holds when:
    \begin{align*}
    T_0 > \mathcal{O}\left(\frac{\Psi\eta}{W\alpha^2}\right),
    \end{align*}
    where $\Psi \coloneqq \Phi_{\eta}(\bt^*) - \Phi_{\eta}(\bt_0)\le 2R_{\max} + \mathcal{B}_\eta(\bt^*)-\mathcal{B}_\eta(\bt_0)\le 2R_{\max}$, since the initial policy is uniform, hence has the largest barrier.
\end{proof}
    
\begin{restatable}[Performance Difference Lemma]{lemma}{perfDiff}
    \label{lemma:performance_difference}
    Let $J(\bt) = \pt^{\top} \bm{r}$ and $A^{\pt}(a) = r(a) - \pt^{\top}\bm{r}$. Given any policy $\pi_{\bt'}$ with performance $J(\bt')$, we have:
    \begin{align*}
    J(\bt') - J(\bt) = \mE_{a \sim \pi_{\bt'}}\left[A^{\pt}(a)\right]
    \end{align*}
\end{restatable}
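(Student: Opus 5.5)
This is a direct algebraic identity in the bandit setting, so I will prove it by expanding the right-hand side and using only the definitions. No earlier results are needed.

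By the definition $A^{\pt}(a) = r(a) - \pt^{\top}\bm{r}$ and linearity of expectation under $a\sim\pi_{\bt'}$,
\begin{align*}
\mE_{a \sim \pi_{\bt'}}\left[A^{\pt}(a)\right] = \sum_{a\in\dsb{K}} \pi_{\bt'}(a)\, r(a) - \left(\pt^{\top}\bm{r}\right)\sum_{a\in\dsb{K}}\pi_{\bt'}(a).
\end{align*}
The first sum equals $\pi_{\bt'}^{\top}\bm{r} = J(\bt')$. In the second term, $\pt^{\top}\bm{r}$ is a constant that does not depend on the sampled action, so it factors out of the sum. Since $\pi_{\bt'}$ is a softmax distribution, its entries sum to one, and the second term reduces to $\pt^{\top}\bm{r} = J(\bt)$. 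This gives $J(\bt') - J(\bt)$.

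There is no real obstacle. The only point to check is that $\pt^{\top}\bm{r}$ is a deterministic scalar for fixed $\bt$, which makes factoring it out legitimate. It is also worth noting, for later use in Lemma~\ref{lemma:approx_global_optimality}, that the choice $\pi_{\bt'}=\pi^*$ gives $\delta(\bt) = \mE_{a\sim\pi^*}[A^{\pt}(a)] = r(a^*) - \pt^{\top}\bm{r}$. That lemma is what relates the sub-optimality gap to the gradient components $\pt(a)A^{\pt}(a)$.
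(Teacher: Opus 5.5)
Your proof is correct and follows the same route as the paper's: expand the expectation as $\sum_{a}\pi_{\bt'}(a)\bigl(r(a)-\pt^{\top}\bm{r}\bigr)$, split the sum, factor out the constant $\pt^{\top}\bm{r}$, and use that $\pi_{\bt'}$ sums to one. The argument uses only the normalization of $\pi_{\bt'}$, not the softmax form. It therefore also covers the deterministic $\pi^*$, which is not a softmax policy for any finite parameter, and this justifies your remark about applying the identity with $\pi^*$.
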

\begin{proof}
    We have,
    \begin{align*}
    \mE_{a \sim \pi_{\bt'}}\left[A^{\pt}(a)\right] &= \sum_{a \in \dsb{K}}\pi_{\bt'}(a)A^{\pt}(a) \\
    & =  \sum_{a \in \dsb{K}} \pi_{\bt'}(a) \left(r(a) - \pt^{\top}\bm{r}\right) \\
    & = \sum_{a \in \dsb{K}} \pi_{\bt'}(a) r(a) - \sum_{a \in \dsb{K}} \pi_{\bt'}(a) \pt^{\top}\bm{r} \\
    & = \pi_{\bt'}^{\top} \bm{r} - \pt^{\top}\bm{r}\sum_{a \in \dsb{K}}\pi_{\bt'}(a) \\
    & = J(\bt') - J(\bt)
    \end{align*}
\end{proof}

In the next Lemma, we follow~\citet{agarwal2021theory} proving that approximate first-order stationary points of the regularized objective are approximately globally optimal, provided that the barrier parameter $\eta$ is sufficiently big.

\begin{restatable}[Approximate Global Optimality]{lemma}{approxOpt}
    \label{lemma:approx_global_optimality}
    Given a parameter $\bt$ such that:
    \begin{align*}
    \normGradPhi \leq \epsilon_{\mathrm{opt}},
    \end{align*}
    and $\epsilon_{\mathrm{opt}} \le 1/\eta$, we have,
    \begin{align}
    J^* - J(\bt) \leq \frac{K}{\eta}.
    \end{align}
\end{restatable}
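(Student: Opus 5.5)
The plan is to follow the argument of \citet{agarwal2021theory}. I will read off the coordinate of $\nabla_{\bt}\barrier$ corresponding to the optimal arm $a^*$, and then convert the resulting bound into a bound on the sub-optimality using Lemma~\ref{lemma:performance_difference}.

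First, from the gradient derived in Appendix~\ref{apx:lg-sgb}, the $a$-th component of the gradient is
\begin{align*}
\frac{\partial \barrier}{\partial \theta(a)} = \pt(a)\,A^{\pt}(a) + \frac{1}{\eta}\left(1 - K\pt(a)\right),
\end{align*}
where $A^{\pt}(a) = r(a) - \pt^{\top}\bm{r}$. Every coordinate is bounded in absolute value by the Euclidean norm. Hence the assumption $\normGradPhi \leq \epsilon_{\mathrm{opt}}$ gives, for $a = a^*$,
\begin{align*}
\pt(a^*)\,A^{\pt}(a^*) \leq \epsilon_{\mathrm{opt}} - \frac{1}{\eta} + \frac{K}{\eta}\pt(a^*) \leq \frac{K}{\eta}\pt(a^*).
\end{align*}
The last inequality is exactly where the hypothesis $\epsilon_{\mathrm{opt}} \le 1/\eta$ is used: it cancels the constant $+1/\eta$ coming from the barrier gradient.

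Second, under the softmax parametrization $\pt(a^*) > 0$ for every finite $\bt$, so I can divide by it and obtain $A^{\pt}(a^*) \leq K/\eta$.

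Third, I apply Lemma~\ref{lemma:performance_difference} with $\pi_{\bt'}$ replaced by the deterministic optimal policy $\pi^*$, which places all its mass on $a^*$. Strictly speaking $\pi^*$ is not a softmax policy. However, the lemma's proof only uses that $\pi_{\bt'}$ is a probability vector, so it applies verbatim, and it yields
\begin{align*}
J^* - J(\bt) = \mE_{a\sim\pi^*}\left[A^{\pt}(a)\right] = A^{\pt}(a^*) \leq \frac{K}{\eta}.
\end{align*}

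There is no serious obstacle here. The only points needing care are the sign bookkeeping of the barrier term $\frac{1}{\eta}(1-K\pt(a^*))$, which is where the condition on $\epsilon_{\mathrm{opt}}$ enters, and the justification of the division by $\pt(a^*)$, which rests on strict positivity of softmax probabilities.
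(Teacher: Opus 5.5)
Your proposal is correct and is essentially the paper's proof. Both read off the coordinate $\pi_{\bt}(a)A^{\pi_{\bt}}(a)+\frac{1}{\eta}(1-K\pi_{\bt}(a))$ of the gradient, use $\epsilon_{\mathrm{opt}}\le 1/\eta$ to absorb the $+1/\eta$ barrier term, divide by $\pi_{\bt}(a)>0$, and finish with the performance difference lemma; the paper does this for every positive-advantage action (its footnote notes that $a^*$ alone suffices), while you go directly to $a^*$ and also check that the lemma applies to the non-softmax $\pi^*$.
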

\begin{proof}
    Exploiting the Performance Difference Lemma in Lemma~\ref{lemma:performance_difference} and assuming $A^{\pt}(a) \le \frac{2K}{\eta}$ for all $a\in \dsb{K}$, we have,
    \begin{align*}
    J^* - J(\bt) & = \mE_{a \sim \pi^*}\left[A^{\pt}(a)\right] \\
    & = \sum_{a \in \dsb{K}} \pi^*(a) A^{\pt}(a) \\
    & \leq \sum_{a \in \dsb{K}} \pi^*(a)\frac{K}{\eta}\\
    & = \frac{K}{\eta}.
    \end{align*}
    Now, we need to show that, under our assumption that $\normGradPhi \leq \epsilon_{\mathrm{opt}}$, $A^{\pt}(a) \le \frac{2K}{\eta}$ holds for all $a\in\dsb{K}$.\footnote{Under Assumption \ref{asm:no_ties_r}, it would suffice to show this for the unique optimal action, but we keep the proof general.} To do this, we need to bound $A^{\pt}(a)$ for any $a \in \dsb{K}$ where $A^{\pt}(a)$ is positive, else the claim would be trivially true. Let $a$ denote any of these positive-advantage actions in the following. Considering the gradient of our regularized objective function, we have,
    \begin{align}
    \frac{\partial \Phi_{\eta}(\theta_a)}{\partial \theta_a} = \pt(a)A^{\pt}(a) + \frac{1}{\eta} \left(1 - K \pt(a)\right)\label{eq:partial_derivative}.
    \end{align}
    Under our gradient assumption, we have,
    \begin{align*}
    \epsilon_{\mathrm{opt}} \ge \normGradPhi \ge  \left|\frac{\partial \Phi_{\eta}(\theta_a)}{\partial \theta_a}\right|.
    \end{align*}
    Since $\pi_{\bt}(a)>0$ (always true for the softmax parametrization),
    we can solve for $A^{\pt}(a)$ in line~\eqref{eq:partial_derivative},
    \begin{align*}
    A^{\pt}(a) & = \frac{1}{\pt(a)}\left(\frac{\partial \Phi_{\eta}(\theta_a)}{\partial \theta_a} - \frac{1}{\eta} \left(1 - K \pt(a)\right) \right) \\
    &\le \frac{1}{\pt(a)}\left(\left|\frac{\partial \Phi_{\eta}(\theta_a)}{\partial \theta_a}\right| - \frac{1}{\eta} \right) + \frac{K}{\eta}\\
    &\le \frac{1}{\pt(a)}\left(\epsilon_\mathrm{opt} - \frac{1}{\eta} \right) + \frac{K}{\eta}\\
    & \le \frac{K}{\eta},
    \end{align*}
    where we have used our assumption $\epsilon_{\mathrm{opt}} \leq 1/\eta$.
\end{proof}

We can now derive the worst-case sample complexity for \lbsgb leveraging the techniques provided in Corollary 4.11 by~\citet{yuan2022general}. The proof sketch is as follows: we count the number of iterations in which the assumption of Lemma~\ref{lemma:approx_global_optimality} is violated, showing that this number of iterations is bounded by the sum of the squared gradient's norm from Theorem~\ref{thr:convergence}.

\worstCase*
\begin{proof}
    We start defining the set of ``bad'' iterations following~\citet{zhang2021sample}:
    \begin{align*}
    I_{\text{bad}} \coloneqq \left\{ t \in \dsb{T}~~\big|~~ \|\nabla_{\bt}\Phi_\eta(\bt_t)\| > \frac{1}{\eta} \right\}.
    \end{align*}
    By Lemma \ref{lemma:approx_global_optimality}, 
    \begin{align*}
    J^* - \frac{1}{T}\sum_{t=0}^{T-1}J(\bt_t) & = \frac{1}{T} \sum_{t \in I_{\text{bad}}} \left(J^* - J(\theta_t)\right) +  \sum_{t \notin I_{\text{bad}}}\left(J^* - J(\theta_t)\right) \\
    & \leq \frac{|I_{\text{bad}}|}{T} 2\Rmax + \frac{1}{T}\sum_{t \notin I_{\text{bad}}}\left(J^* - J(\theta_t)\right) \\
    & \leq \frac{|I_{\text{bad}}|}{T} 2\Rmax + \frac{T - |I_{\text{bad}}|}{T} \frac{K}{\eta} \\
    & \leq \frac{|I_{\text{bad}}|}{T} 2\Rmax + \frac{K}{\eta}.
    \end{align*}
    Now, we need to find an upper bound for $|I_{\text{bad}}|$. We have,
    \begin{align*}
    \sum_{t=0}^{T-1} \normGradPhiT^2 \geq \sum_{t \in I_{\text{bad}}}\normGradPhiT^2 \geq \frac{|I_{\text{bad}}|}{\eta^2},
    \end{align*}
    from the definition of the set $I_{\text{bad}}$. Thus, we have,
    \begin{align*}
    \frac{|I_{\text{bad}}|}{T} \leq \frac{\eta^2}{T} \sum_{t=0}^{T-1} \normGradPhiT^2.
    \end{align*}
    Now, plugging this upper bound into our previous inequality, we obtain,
    \begin{align*}
    J^* - \frac{1}{T}\sum_{t=0}^{T-1}J(\bt_t) \leq \frac{2\eta^2\Rmax}{T} \sum_{t=0}^{T-1} \normGradPhiT^2 + \frac{K}{\eta}.
    \end{align*}
    Taking the total expectation, we have
    \begin{align}
    J^* - \frac{1}{T}\sum_{t=0}^{T-1}\mE\left[J(\bt_t)\right] \leq \frac{2\eta^2\Rmax}{T} \sum_{t=0}^{T-1} \mE\left[\normGradPhiT^2\right] + \frac{K}{\eta}\label{proof:worst_case_1}.
    \end{align}
    In order to ensure that the average regret is bounded by $\epsilon$, i.e., $J^* - \frac{1}{T}\sum_{t=0}^{T-1}\mE\left[J(\bt_t)\right] \leq \epsilon$, we require both terms on the right-hand side to be bounded by $\epsilon/2$. From the second term, we obtain the requirement for $\eta$:
    \begin{align*}
    \frac{K}{\eta} \leq \frac{\epsilon}{2} \implies \eta \geq 2K\epsilon^{-1}.
    \end{align*}
    To keep the first term as small as possible, we choose the tightest lower bound. Fixing $\eta = 2K\epsilon^{-1}$ in the first term and bounding it by $\epsilon/2$ yields:
    \begin{align*}
    \frac{8 K^2\Rmax}{\epsilon^2}\frac{1}{T}\sum_{t=0}^{T-1} \mE\left[\normGradPhiT^2\right] \leq \frac{\epsilon}{2},
    \end{align*}
    which requires us to let:
    \begin{align*}
    \frac{1}{T}\sum_{t=0}^{T-1} \mE\left[\normGradPhiT^2\right] \leq \frac{\epsilon^3}{16 K^2\Rmax} 
    \end{align*}
    From Theorem~\ref{thr:convergence}, this is equivalent to:
    \begin{align*}
        \frac{W\alpha}{\eta} \leq \frac{K^{-2} \epsilon^{3}}{16\Rmax},
    \end{align*}
    Where $W = \mathcal{O}\left(K^{5/2}\Delta^{-2}\right)$. Since we fixed $\eta = 4K\epsilon^{-1}$ has now been fixed, we obtain the requirement on the learning rate $\alpha$:
    \begin{align*}
    \alpha = \mathcal{O}\left(\Rmax^{-1}\Delta^2 K^{-7/2} \epsilon^{2}\right).
    \end{align*}
    This inequality holds after $T_0$ iterations, where:
    \begin{align*}
    T_0 = \mathcal{O}\left(\frac{\Rmax\Delta^{2}\eta}{K^{5/2}\alpha^2}\right) = \mathcal{O}\left(\Rmax^3\Delta^{-2}K^{11/2}\epsilon^{-5}\right),
    \end{align*}
    which is hence the required sample complexity.
\end{proof}

\begin{restatable}[Regret]{corollary}{Regret}
    \label{cor:regret}
    Under Assumption~\ref{asm:no_ties_r}, for a given horizon $T \geq 1$, selecting a learning rate $\alpha = \mathcal{O}\left(K^{-13/10}\Delta^{6/5}T^{-2/5}\right)$ and the barrier parameter $\eta = \mathcal{O}\left(K^{-1/10}\Delta^{2/5}T^{1/5}\right)$, the \lbsgb ensures sub-linear regret. Specifically, we have,
    \begin{align*}
        \mE\left[\sum_{t=0}^{T-1}\left(\pi^* - \pi_{\bt_t}\right)^{\top}\bm{r}\right] = \mathcal{O}\left(K^{11/10}\Delta^{-2/5}T^{4/5}\right)
    \end{align*}
\end{restatable}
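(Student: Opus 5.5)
We obtain the regret bound by re-running the argument of Theorem~\ref{thr:worst_case} with the horizon $T$ fixed in advance. We then balance the resulting terms in $\alpha$ and $\eta$ as functions of $T$, instead of fixing a target accuracy $\epsilon$. The starting point is inequality~\eqref{proof:worst_case_1}, multiplied by $T$:
\begin{align*}
\mE\left[\sum_{t=0}^{T-1}(\pi^*-\pi_{\bt_t})^{\top}\bm{r}\right] \leq 2\eta^2\Rmax\sum_{t=0}^{T-1}\mE\left[\normGradPhiT^2\right] + \frac{KT}{\eta}.
\end{align*}
This inequality rests on the split into ``bad'' iterations (where $\normGradPhiT>1/\eta$), which we count via the squared gradient norms, and ``good'' iterations, which Lemma~\ref{lemma:approx_global_optimality} makes $K/\eta$-optimal. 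It holds for any $\alpha,\eta$ compatible with the step-size constraint $\alpha\le 1/(2L)$.

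Next, we bound the sum of squared gradient norms using the telescoping inequality~\eqref{proof:ascent_lemma_phi} from the proof of Theorem~\ref{thr:convergence}. We use it directly rather than its averaged, post-warm-up form, because the warm-up time $T_0$ becomes an explicit additive term. With $\alpha\le 1/(2L)$ and $\Psi\le 2\Rmax$, this gives
\begin{align*}
\sum_{t=0}^{T-1}\mE\left[\normGradPhiT^2\right] \leq \frac{4\Rmax}{\alpha} + \frac{2W\alpha T}{\eta},
\end{align*}
where $W=\mathcal{O}(K^{5/2}\Delta^{-2})$. Substituting this in, the regret is bounded by
\begin{align*}
\mathcal{O}\left(\frac{\eta^2}{\alpha} + W\eta\alpha T + \frac{KT}{\eta}\right),
\end{align*}
with $\Rmax$ treated as a constant.

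It then remains to optimize over $\alpha$ for fixed $\eta$. Balancing $\eta^2/\alpha$ against $W\eta\alpha T$ gives $\alpha\asymp\sqrt{\eta/(WT)}$, and both terms become $\eta^{3/2}\sqrt{WT}$. Balancing this against $KT/\eta$ gives $\eta^{5/2}\asymp K\sqrt{T/W}$, that is, $\eta\asymp K^{2/5}W^{-1/5}T^{1/5}$. Plugging in $W\sim K^{5/2}\Delta^{-2}$ yields $\eta\asymp K^{-1/10}\Delta^{2/5}T^{1/5}$, matching the statement. The corresponding step size is $\alpha\asymp \sqrt{\eta/(WT)}\sim K^{-13/10}\Delta^{6/5}T^{-2/5}$, and the regret is $KT/\eta = \mathcal{O}(K^{11/10}\Delta^{-2/5}T^{4/5})$.

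The main obstacle is checking admissibility rather than the algebra. First, the chosen $\alpha$ must satisfy $\alpha\le 3\Delta^2/(160\Rmax^3K^{3/2})$ and the condition of Lemma~\ref{lemma:non_uniform_smoothness_iterates}. Because $\alpha$ decays as $T^{-2/5}$, this holds for $T$ large enough. For small $T$, the trivial bound $2\Rmax T$ already lies below the claimed rate up to constants. Second, $W$ contains $\eta$-dependent pieces through $S$ and the factor $2K/\eta$. These are dominated by the $K^{5/2}\Delta^{-2}$ term as soon as $\eta\gtrsim 1$, which again holds for $T$ large. Finally, Lemma~\ref{lemma:approx_global_optimality} must be applied with $\epsilon_{\mathrm{opt}}=1/\eta$, which is exactly the threshold defining $I_{\text{bad}}$.
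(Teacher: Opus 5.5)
Your proposal is correct and follows essentially the same route as the paper: multiply the bad/good-iterate bound from the proof of Theorem~\ref{thr:worst_case} by $T$, bound $\sum_{t}\mathbb{E}\big[\|\nabla\Phi_\eta(\theta_t)\|_2^2\big]$ by $\mathcal{O}(R_{\max}/\alpha + W\alpha T/\eta)$ using the telescoped descent inequality, then balance first in $\alpha$ and then in $\eta$. Your admissibility checks go beyond what the paper does, with one small correction: the $\eta$-dependent parts of $W$ (e.g.\ the term $\frac{2K}{\eta}\cdot\frac{240R_{\max}^3K^{5/2}}{\Delta^2}$) are negligible only once $\eta\gtrsim K$, not $\eta\gtrsim 1$; this is harmless, because the final bound is of order $KT/\eta$ and is therefore dominated by the trivial $2R_{\max}T$ whenever $\eta\lesssim K$, a regime your small-$T$ fallback already covers.
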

\begin{proof}
    We start by extracting the average sub-optimality bound from the proof of Theorem~\ref{thr:worst_case} prior to fixing $\eta$. Letting $\mE\left[\mathcal{R}(T)\right] \coloneqq T J^* - \sum_{t=0}^{T-1}\mE\left[J(\bt_t)\right]$, we have,
    \begin{align}
      J^* - \frac{1}{T}\sum_{t=0}^{T-1}\mE\left[J(\bt_t)\right] \leq \frac{8\Rmax\eta^2}{T} \sum_{t=0}^{T-1} \mE\left[\normGradPhiT^2\right] + \frac{2K}{\eta}. \label{proof:regret_1}
    \end{align}
    
    Multiplying by $T$, the expected cumulative regret is bounded by,
    \begin{align*}
        \mE\left[\mathcal{R}(T)\right] \leq 8\Rmax\eta^2 \sum_{t=0}^{T-1} \mE\left[\normGradPhiT^2\right] + \frac{2KT}{\eta}.
    \end{align*}
    
    From Theorem~\ref{thr:convergence}, the sum of the expected squared gradient norms is bounded as,
    \begin{align*}
        \sum_{t=0}^{T-1} \mE\left[\normGradPhiT^2\right] = \mathcal{O}\left( \frac{\Rmax}{\alpha} + \frac{K^{5/2}\alpha T}{\eta \Delta^2} \right).
    \end{align*}
    
    Now, plugging this back into the regret bound, we obtain,
    \begin{align}
        \mE\left[\mathcal{R}(T)\right] \leq \mathcal{O}\left( \frac{\Rmax^2 \eta^2}{\alpha} + \frac{\Rmax K^{5/2} \eta \alpha T}{\Delta^2} + \frac{KT}{\eta} \right). \label{proof:regret_2}
    \end{align}
    
    To bound the regret, we jointly optimize the hyperparameters $\alpha$ and $\eta$. Balancing the first two terms with respect to $\alpha$, we need to let:
    \begin{align*}
        \frac{\Rmax^2 \eta^2}{\alpha} = \frac{\Rmax K^{5/2} \eta \alpha T}{\Delta^2} \implies \alpha = \mathcal{O}\left( \sqrt{\frac{\Rmax \Delta^2 \eta}{K^{5/2} T}} \right).
    \end{align*}
    
    Substituting $\alpha$ into line~\eqref{proof:regret_2}, the first two terms collapse, and we obtain,
    \begin{align*}
        \mE\left[\mathcal{R}(T)\right] \leq \mathcal{O}\left( \sqrt{\frac{\Rmax^3 K^{5/2} T}{\Delta^2}} \eta^{3/2} + \frac{KT}{\eta} \right).
    \end{align*}
    
    Next, balancing the remaining terms to find the optimal schedule for $\eta$, we have,
    \begin{align*}
        \sqrt{\frac{\Rmax^3 K^{5/2} T}{\Delta^2}} \eta^{3/2} = \frac{KT}{\eta} \implies \eta^5 = \frac{\Delta^2 T}{\Rmax^3 K^{1/2}},
    \end{align*}
    which leads to:
    \begin{align*}
        \eta = \mathcal{O}\left( K^{-1/10} \Rmax^{-3/5} \Delta^{2/5} T^{1/5} \right).
    \end{align*}
    
    Substituting $\eta$ back into the dominant regret term $\mathcal{O}(KT/\eta)$ yields,
    \begin{align*}
        \mE\left[\mathcal{R}(T)\right] = \mathcal{O}\left( K^{11/10} \Rmax^{3/5} \Delta^{-2/5} T^{4/5} \right)
    \end{align*}
\end{proof}
\section{Additional Results}
\label{subsec:recurrences}

\subsection{Squared Positive Part}
\begin{lemma}[Lower Bound for the Squared Positive Part]
\label{lemma:positive_part_bound}
For any non-negative real numbers $x, y \ge 0$, the following inequality holds:
\begin{align*}
    \left( (x - y)^+ \right)^2 \ge x^2 - 2xy
\end{align*}
\end{lemma}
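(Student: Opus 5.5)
We argue by cases on the sign of $x-y$, after first reducing to a direct comparison of quadratics. Since $x,y\ge 0$, the right-hand side can be rewritten as $x^2-2xy = (x-y)^2 - y^2$, so it suffices to compare $\left((x-y)^+\right)^2$ with $(x-y)^2 - y^2$.

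\emph{Case $x \ge y$.} Here $(x-y)^+ = x-y$, hence
\begin{align*}
\left((x-y)^+\right)^2 = (x-y)^2 = x^2 - 2xy + y^2 \ge x^2 - 2xy,
\end{align*}
since $y^2\ge 0$.

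\emph{Case $x < y$.} Here $(x-y)^+ = 0$, and we must show $x^2 - 2xy \le 0$, i.e.\ $x(x-2y)\le 0$. This holds because $x\ge 0$ and $x < y \le 2y$ (using $y\ge 0$), so $x - 2y < 0$.

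The only point needing care is the second case, where nonnegativity of both $x$ and $y$ is genuinely used: without $x\ge 0$ the product $x(x-2y)$ could be positive. The first case holds for all real $x,y$. This matches how the lemma is invoked in line~\eqref{proof:convergence_1}, with $x=\pt(a^*)(r(a^*)-\pt^{\top}\bm{r})\ge 0$ and $y=(K-1)/\eta\ge 0$.
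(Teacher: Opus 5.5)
Your proof is correct and takes essentially the same approach as the paper. Both use the same case split on $x \ge y$ versus $x < y$. In the first case both drop $y^2 \ge 0$, and in the second both show $x^2 - 2xy \le 0$ from nonnegativity: you factor it as $x(x-2y)$, while the paper chains $x^2 \le xy \le 2xy$, which is the same observation.
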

\begin{proof}
We proceed by analyzing the two cases for the positive part operator.

\paragraph{Case 1.} Let $x \ge y$. The operator is active, hence $(x - y)^+ = x - y$. Expanding the square yields $(x - y)^2 = x^2 - 2xy + y^2$. Since $y^2 \ge 0$, $x^2 - 2xy + y^2 \ge x^2 - 2xy$.

\paragraph{Case 2.} Let $x < y$. The operator is inactive, hence $(x - y)^+ = 0$. We must show that $0 \ge x^2 - 2xy$. Since $x \ge 0$ and $x < y$, multiplying both sides of the strict inequality by $x$ yields $x^2 \le xy$. Because $x, y \ge 0$, we have $xy \le 2xy$. Combining these inequalities yields $x^2 \le 2xy$, which implies $x^2 - 2xy \le 0$.

In both cases, the inequality holds, concluding the proof.
\end{proof}

\subsection{Recurrences}
The goal of this section is to study the recurrence,
\begin{align}
    \label{recurrence}
    r_{t+1} \leq r_t - \alpha \frac{a}{2}r_t^2 + \alpha \frac{b}{\eta},
\end{align}
following the analysis proposed by \citet{montenegro2024learning}. For this purpose, we define the following helper sequence:
\begin{align}
    \begin{cases}
        \rho_0 = r_0\\
        \rho_{t+1} = \rho_t - \alpha\frac{a}{2} \rho_t^2 + \alpha\frac{b}{\eta},\quad\text{if }t\geq0
    \end{cases}
\end{align}
It is possible to show that, under a condition on the step size $\alpha$, the sequence $\rho_t$ upper bounds the recurrence $r_t$.

\begin{lemma}
    If $\alpha \leq \frac{1}{a\rho_t}$, for every $t\geq 0$ we have $\rho_t\geq r_t$.
\end{lemma}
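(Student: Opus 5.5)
We argue by induction on $t$, using that the map $f(x) \coloneqq x - \alpha\frac{a}{2}x^2 + \alpha\frac{b}{\eta}$ is nondecreasing on the region where the step-size condition holds. The base case is immediate: $\rho_0 = r_0$ by definition.

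For the inductive step, assume $\rho_t \ge r_t$. The recurrence~\eqref{recurrence} gives $r_{t+1} \le f(r_t)$, while by definition $\rho_{t+1} = f(\rho_t)$. It therefore suffices to show $f(r_t) \le f(\rho_t)$. The derivative is $f'(x) = 1 - \alpha a x$, which is nonnegative for every $x \le \frac{1}{\alpha a}$. The hypothesis $\alpha \le \frac{1}{a\rho_t}$ (with $\rho_t>0$) is exactly $\rho_t \le \frac{1}{\alpha a}$. Since $r_t \le \rho_t$, the whole interval $[r_t,\rho_t]$ lies in the region where $f$ is nondecreasing. By the mean value theorem, or directly from the factorization
\begin{align*}
f(\rho_t) - f(r_t) = (\rho_t - r_t)\left(1 - \alpha\frac{a}{2}(\rho_t + r_t)\right),
\end{align*}
we get $f(\rho_t) \ge f(r_t)$. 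Indeed, $\rho_t + r_t \le 2\rho_t \le \frac{2}{\alpha a}$, so the second factor is nonnegative. Hence $\rho_{t+1} \ge r_{t+1}$.

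The only delicate point is the sign of the quantities. In the factorization, I use $r_t \le \rho_t$ to bound $\rho_t + r_t \le 2\rho_t$; this needs no lower bound on $r_t$, so a negative $r_t$ is harmless. In our application $r_t = \mE[\delta(\bt_t)] \ge 0$ anyway. The condition $\alpha \le 1/(a\rho_t)$ implicitly requires $\rho_t > 0$. If instead $\rho_t \le 0$, then $1 - \alpha\frac{a}{2}(\rho_t + r_t) \ge 1 > 0$ trivially, since $a, \alpha > 0$ and $r_t \le \rho_t \le 0$. Thus both sign cases are covered and the induction closes.
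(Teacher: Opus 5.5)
Your proof is correct and follows the paper's argument: induction on $t$, where the inductive step uses that $x \mapsto x - \alpha\frac{a}{2}x^2$ is nondecreasing for $x \le \frac{1}{\alpha a}$, and the hypothesis $\alpha \le \frac{1}{a\rho_t}$ guarantees this on all of $[r_t,\rho_t]$. Your factorization $f(\rho_t)-f(r_t)=(\rho_t-r_t)\bigl(1-\alpha\frac{a}{2}(\rho_t+r_t)\bigr)$ and the sign discussion make the paper's terse monotonicity step fully rigorous.
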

\begin{proof}
    By induction on $t$. For $t=0$, the statement is true since $\rho_0 = r_0$. Suppose the statement holds for $j\leq t$. Then, for $t+1$, we have:
    \begin{align*}
        \rho_{t+1} &= \rho_t - \alpha\frac{a}{2} \rho_t^2 + \alpha\frac{b}{\eta}\\
        & \geq r_t - \alpha\frac{a}{2} r_t^2 + \alpha\frac{b}{\eta}\\
        & \geq r_{t+1}
    \end{align*}
    Where the first inequality holds from the inductive hypothesis and by the fact that the function $f(x) = x - \alpha \frac{a}{2}x^2$ is non-decreasing when $\alpha$ is chosen properly. Indeed, we need to study the sign of the derivative of $f(x)$:
    \begin{align*}
        f'(x) = 1 - \alpha a x \geq 0 \Rightarrow x \leq \frac{1}{\alpha a}
    \end{align*}
    Hence, we need $\alpha \leq \frac{1}{a \rho_t}$ to ensure $\rho_t$ is non-decreasing, and so is $r_t$ by the inductive hypothesis.
\end{proof}
Thus, it is possible to study the convergence of $\rho_t$ as a surrogate for $r_t$. If $\rho_t$ is convergent, than it converges to a fixed point $\bar{\rho}$ as follows:
\begin{align}
    \rhobar = \rhobar  -\alpha \frac{a}{2}\rhobar^2 + \alpha \frac{b}{\eta} \Rightarrow \rhobar = \sqrt{\frac{2b}{\eta a}}\label{eq:rho_bar}
\end{align}
In which only the positive solution of the second-order equation is considered, since $r_t\geq 0$ by definition and $\rho_t \geq r_t$. It is now necessary to study the monotonicity of $\rho_t$. The following lemma states that, under a specific choice of the learning rate, initializing $\rho_0 = r_0$ above the fixed point $\rhobar$, the sequence will be non-increasing, remaining in the interval $[\rhobar, r_0]$. Symmetrically, initializing $\rho_0$ below the fixed point, the sequence will be non-decreasing, remaining in the interval $[r_0, \rhobar]$. 

\begin{lemma}
    The following statements hold:
    \begin{itemize}
        \item if $r_0 \geq \rhobar$ and $\alpha \leq \frac{1}{a r_0}$ it holds that $\rhobar \leq \rho_{t+1} \leq \rho_t$.
        \item if $r_0 \leq \rhobar$ and $\alpha \leq \frac{1}{a \rhobar}$ it holds that $\rhobar \geq \rho_{t+1} \geq \rho_t$.
    \end{itemize}
\end{lemma}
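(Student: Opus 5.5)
The plan is to study the map $f(x) \coloneqq x - \alpha\frac{a}{2}x^2 + \alpha\frac{b}{\eta}$, so that $\rho_{t+1} = f(\rho_t)$, and to prove both items by induction on $t$. Two facts do all the work: $f$ is non-decreasing on the relevant interval, and the sign of $\rho_{t+1}-\rho_t$ is fixed by the position of $\rho_t$ relative to $\rhobar$. The structure is the same as for the comparison lemma above.

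First, rewrite the increment using the fixed-point identity \eqref{eq:rho_bar}, namely $\alpha\frac{b}{\eta} = \alpha\frac{a}{2}\rhobar^2$. This gives
\begin{align*}
\rho_{t+1} - \rho_t = \alpha\frac{a}{2}\left(\rhobar^2 - \rho_t^2\right) = \alpha\frac{a}{2}\left(\rhobar - \rho_t\right)\left(\rhobar + \rho_t\right).
\end{align*}
Since $\rho_t \geq 0$ (maintained inductively, because $\rho_t$ stays in an interval with non-negative endpoints), we have $\rho_{t+1} \leq \rho_t$ whenever $\rho_t \geq \rhobar$, and $\rho_{t+1} \geq \rho_t$ whenever $\rho_t \leq \rhobar$.

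Second, the derivative $f'(x) = 1 - \alpha a x$ is non-negative for $x \leq \frac{1}{\alpha a}$. In the first item we have $\alpha \leq \frac{1}{a r_0}$, so $f$ is non-decreasing on $[0, r_0]$. In the second item we have $\alpha \leq \frac{1}{a\rhobar}$, so $f$ is non-decreasing on $[0, \rhobar]$.

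For the first item, induct on the hypothesis $\rhobar \leq \rho_t \leq r_0$, which holds at $t=0$. Monotonicity of $f$ on $[\rhobar, r_0] \subseteq [0, r_0]$ together with $f(\rhobar) = \rhobar$ gives $\rho_{t+1} = f(\rho_t) \geq f(\rhobar) = \rhobar$. The sign identity gives $\rho_{t+1} \leq \rho_t \leq r_0$. Together these yield $\rhobar \leq \rho_{t+1} \leq \rho_t$ and close the induction.

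The second item is symmetric. Induct on $r_0 \leq \rho_t \leq \rhobar$. Monotonicity on $[0, \rhobar]$ gives $\rho_{t+1} = f(\rho_t) \leq f(\rhobar) = \rhobar$, and the sign identity gives $\rho_{t+1} \geq \rho_t \geq r_0 \geq 0$.

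The only delicate point is that the step-size condition has to cover the whole interval on which the induction hypothesis places $\rho_t$. This is exactly why the threshold is $1/(a r_0)$ in the first case and $1/(a\rhobar)$ in the second: each is the reciprocal of $\alpha a$ times the right endpoint of the respective invariant interval. With that check, the two steps above suffice.
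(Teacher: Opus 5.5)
Your proof is correct and follows essentially the same route as the paper: you induct on the invariant interval, use the fixed-point identity $\alpha\frac{b}{\eta}=\alpha\frac{a}{2}\bar{\rho}^2$ to fix the sign of $\rho_{t+1}-\rho_t$, and use monotonicity of $x\mapsto x-\alpha\frac{a}{2}x^2+\alpha\frac{b}{\eta}$ for $x\le 1/(\alpha a)$ to compare against $f(\bar{\rho})=\bar{\rho}$. Your write-up also gets the direction right in the first item, where the paper's proof writes $\rho_{t+1}\le\bar{\rho}$ but should have $\ge$, exactly as in your step $f(\rho_t)\ge f(\bar{\rho})$; and you spell out the second item, which the paper only calls analogous.
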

\begin{proof}
    Let's start with the first statement, by induction on $t$. The inductive hypothesis is: $\rho_{t+1} \leq \rho_t$ and $\rho_{t+1} \leq \rhobar$. For $t=0$, for the first inequality, we have,
    \begin{align*}
        \rho_1 &= \rho_0-\alpha \frac{a}{2}\rho_0^2 + \alpha \frac{b}{\eta} \leq \rho_0-\alpha \frac{a}{2}\rhobar^2 + \alpha \frac{b}{\eta}=\rho_0,
    \end{align*}
    where the fact that $\rho_0 \geq \rhobar \geq 0$ and the definition of $\rhobar$ has been used. For the second inequality, we have,
    \begin{align*}
         \rho_1 &= \rho_0-\alpha \frac{a}{2}\rho_0^2 + \alpha \frac{b}{\eta} \leq \rhobar-\alpha \frac{a}{2}\rhobar^2 + \alpha \frac{b}{\eta}=\rhobar,
    \end{align*}
    since the function $x - \alpha \frac{a}{2}x^2$ is non-decreasing in x for $x \leq \rho_0$ since $\alpha \leq \frac{1}{a\rho_0}$ and by definition of $\rhobar$. Suppose now the statements hold for $j\leq t$. Under the inductive hypothesis $\rho_t \leq \rho_0$, the choice of the learning rate $\alpha \leq \frac{1}{a \rho_{0}}$ implies $\alpha \leq \frac{1}{a \rho_{t}}$. Thus, for the first inequality, we have,
    \begin{align*}
        \rho_{t+1} &= \rho_t-\alpha \frac{a}{2}\rho_t^2 + \alpha \frac{b}{2} \leq \rho_t-\alpha \frac{a}{2}\rhobar^2 + \alpha \frac{b}{2}=\rho_t,
    \end{align*}
    where the inductive hypothesis and the definition of $\rhobar$ have been used. For the second inequality, we have,
    \begin{align*}
         \rho_{t+1} &= \rho_t-\alpha \frac{a}{2}\rho_t^2 + \alpha \frac{b}{\eta} \leq \rhobar-\alpha \frac{a}{2}\rhobar^2 + \alpha \frac{b}{\eta}=\rhobar,
    \end{align*}
    where the inductive hypothesis and the fact that $x - \alpha \frac{a}{2}x^2$ is non-decreasing in $x$ for $x \leq \rho_t$, since $\alpha \leq \frac{1}{a \rho_t}$.

    For the second statement, we can proceed analogously as for the first one, switching the signs of the inequalities and recalling that, in this case, $\rho_t$ is upper-bounded by $\rhobar$.
\end{proof}

We can now focus on the case $r_0 \geq \rhobar$, since the second case is irrelevant for the convergence. In this case, we can show that $  \rho_t$ converges to $\rhobar$ with a certain rate. To this end, we can study the following auxiliary sequence:
\begin{align}
    \begin{cases}
        \nu_0 = \rho_0\\
        \nu_{t+1} = \left(1 - \alpha\frac{a}{2}\rhobar\right) \nu_t + \alpha\frac{b}{\eta},\quad\text{if }t\geq0,
    \end{cases}
\end{align}
for which we need to prove that $\nu_t$ upper bounds $\rho_t$.

\begin{lemma}
    If $r_0\geq \rhobar$ and $\alpha \leq \frac{1}{a~r_0}$, then it holds that, for $t\geq0$, $\nu_t \geq \rho_t$.
\end{lemma}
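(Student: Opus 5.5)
The plan is a direct induction on $t$, working with the gap $\nu_t-\rho_t$ and using the monotonicity lemma just established for $\rho_t$. The base case is immediate, since $\nu_0=\rho_0$ by definition.

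\textbf{Inductive step.} Assume $\nu_t\geq\rho_t$. Subtracting the two recursions, the constant forcing terms $\alpha\frac{b}{\eta}$ cancel exactly. Adding and subtracting $\left(1-\alpha\frac{a}{2}\rhobar\right)\rho_t$ then gives
\begin{align*}
    \nu_{t+1}-\rho_{t+1} &= \left(1-\alpha\frac{a}{2}\rhobar\right)\nu_t - \rho_t + \alpha\frac{a}{2}\rho_t^2\\
    &= \left(1-\alpha\frac{a}{2}\rhobar\right)(\nu_t-\rho_t) + \alpha\frac{a}{2}\rho_t\left(\rho_t-\rhobar\right).
\end{align*}
It then suffices to show that both summands are non-negative.

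\textbf{Second summand.} Since $r_0\geq\rhobar$ and $\alpha\leq\frac{1}{a r_0}$, the first statement of the preceding lemma applies. It gives $\rhobar\leq\rho_t\leq\rho_0=r_0$ for every $t$. Together with $\rhobar\geq 0$ (it is the positive root in \eqref{eq:rho_bar}), this yields $\rho_t\geq 0$ and $\rho_t-\rhobar\geq 0$, so the second summand is non-negative.

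\textbf{First summand.} The coefficient is non-negative because $\rhobar\leq r_0$ and $\alpha\leq\frac{1}{a r_0}$ give $\alpha\frac{a}{2}\rhobar\leq\frac{1}{2}$. Hence $1-\alpha\frac{a}{2}\rhobar\geq\frac{1}{2}>0$. Combined with the inductive hypothesis $\nu_t-\rho_t\geq 0$, the first summand is non-negative, which closes the induction.

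The only step that needs care is the sign of $\rho_t-\rhobar$. A naive comparison would need the quadratic map to be monotone, whereas the linearized recursion for $\nu_t$ is affine. This is handled by invoking the previous lemma, which keeps $\rho_t$ in $[\rhobar,r_0]$ under exactly the stated hypotheses $r_0\geq\rhobar$ and $\alpha\leq\frac{1}{a r_0}$. Once that confinement is in hand, no further conditions on $a$, $b$, or $\eta$ are needed.
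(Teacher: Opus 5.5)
Your proof is correct and follows essentially the same route as the paper. Both argue by induction on $t$, combining the inductive hypothesis (multiplied by the non-negative coefficient $1-\alpha\frac{a}{2}\bar{\rho}$) with the confinement $\rho_t \geq \bar{\rho} \geq 0$ from the preceding monotonicity lemma; your gap identity $\nu_{t+1}-\rho_{t+1} = \bigl(1-\alpha\frac{a}{2}\bar{\rho}\bigr)(\nu_t-\rho_t) + \alpha\frac{a}{2}\rho_t(\rho_t-\bar{\rho})$ repackages the paper's two-step chain of inequalities and justifies the coefficient's sign directly via $\alpha\frac{a}{2}\bar{\rho}\leq\frac{1}{2}$, rather than through the paper's detour via $1-\alpha\frac{a}{2}\rho_t\geq 0$.
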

\begin{proof}
    By induction on $t$. For $t=0$, we have that $\nu_0 = \rho_0$, so the statement holds. Suppose the statement holds for $j\leq t$. Then, for $t+1$, we have,
    \begin{align*}
        \nu_{t+1} &= \left(1 - \alpha\frac{a}{2}\rhobar\right) \nu_t + \alpha\frac{b}{\eta}\\
        &\geq \left(1 - \alpha\frac{a}{2}\rhobar\right) \rho_t + \alpha\frac{b}{\eta}\\
        & \geq \left(1 - \alpha\frac{a}{2}\rho_t\right) \rho_t + \alpha\frac{b}{\eta}  =\rho_{t+1},
    \end{align*}
    where the first inequality holds by inductive hypothesis, and the second inequality holds since $\rho_t \geq \rhobar$ and by the fact that $1-\alpha\frac{a}{2}\rho_t \geq 0$ whenever $\alpha \leq \frac{2}{a \rho_t}$, which is entailed by the requirement $\alpha \leq \frac{1}{a\rho_0}$, recalling that $\rho_k \geq 0$ since $\rhobar \geq 0$.
\end{proof}

It is now possible to study the convergence rate of the sequence $\nu_t$, which can be obtained by unrolling the recursion:
\begin{align}
    \nu_{t+1} &= \left(1-\alpha \frac{a}{2}\rhobar\right)^{t+1}\rho_0 + \alpha \frac{b}{\eta} \sum_{j=0}^{t}\left(1-\alpha \frac{a}{2}\rhobar\right)^{j}\label{proof:recurrence_0}\\
    & \leq \left(1-\alpha \frac{a}{2}\rhobar\right)^{t+1}\rho_0 + \alpha \frac{b}{\eta} \sum_{j=0}^{\infty}\left(1-\alpha \frac{a}{2}\rhobar\right)^{j}\nonumber\\
    & = \left(1-\alpha \frac{a}{2}\rhobar\right)^{t+1}\rho_0 + \frac{2b}{\eta a\rhobar}\label{proof:recurrence_1}\\
    & = \left(1-\frac{1}{2}\sqrt{\frac{2\alpha^2 ab}{\eta}}\right)^{t+1}\rho_0 + \sqrt{\frac{2b}{\eta a}}\nonumber,
\end{align}
where Equation~\eqref{proof:recurrence_1} follows from choosing the learning rate $\alpha$ such that the series converges, namely, $\alpha \leq \sqrt{\frac{\eta}{2ab}}$
Putting all the conditions on the learning rate together, we have,
\begin{align*}
    \alpha = \min \left\{ \frac{1}{a~r_0}, \sqrt{\frac{\eta}{2ab}} \right\}
\end{align*}

\subsection{Maxima of the Regularized Objective}\label{apx:maxima}
Let $\Phi_\eta(\bt)=J(\bt)+\frac{1}{\eta}\sum_{a\in \dsb{K}}\log\pi_{\bt}(a)$ be our regularized objective and fix an $\eta>0$. Let $f:\Delta_K\to\mathbb{R}^*$ be the extended real-valued function
\begin{align*}
    f(\bm{x}) = \langle \bm{x},\bm{r}\rangle + \frac{1}{\eta}\sum_i\log\bm{x_i},
\end{align*}
where $\Delta_K$ is the $(K-1)$-dimensional simplex and $\mathbb{R}^*=\mathbb{R}\cup \{+\infty,-\infty\}$. Clearly, $\Delta_K$ is compact and $f$ is upper semi-continuous on $\Delta_K$, so $f$ attains its maximum in $\Delta_K$. Since $f=-\infty$ on the boundary, it must attain its maximum in the interior of $\Delta_K$. The softmax parametrization maps $\mathbb{R}^K$ onto the interior of the simplex (it is surjective). Hence, for every $\bm{x}\in\mathrm{int}(\Delta_K)$, there exists a $\bt_{\bm{x}}\in\mathbb{R}^K$ such that $f(\bm{x})=\Phi_\eta(\bt_{\bm{x}})$.\footnote{In fact, there are infinitely many: if $f(\bm{x})=\Phi_\eta(\bt_{\bm{x}})$, then $f(\bm{x})=\Phi_\eta(\bt)$ for all $\bt\in\Theta_{\bm{x}}$, where $\Theta_{\bm{x}}=\{c\bt_{x}, c\in\mathbb{R}\}$.} Conversely, every $\bt\in\mathbb{R}^K$ defines a unique policy $\bm{x}\in\mathrm{int}(\Delta_K)$. So, $\Phi_\eta$ attains its maximum in $\mathbb{R}^K$.

The same argument applies to the spectral regularization of Section \ref{sec:comparison}, where the interior of the positive-semidefinite cone of the Fisher information matrix is shown to correspond the interior of the simplex under the reparametrized softmax.

\section{Experimental Details}
\label{apx:experimental_details}

In this section, we present the details of the experiments provided in Section~\ref{sec:experiments}, with additional results regarding the comparison of \lbsgb with \sgb, \sgb with entropy regularization (\ent), and Natural Policy Gradient (\npg). Both \sgb and \ent have been described in Appendix~\ref{apx:algorithms}. In the experiments, we focus on the convergence dynamics of the policy $\pi_{\bt_t}$ toward the optimal action $a^*$.

\subsection{Setting}
To evaluate the convergence of the algorithms, we exploit a stationary $K$-armed Gaussian bandit environment. Each action $a \in \dsb{K}$ is associated with a fixed expected value $r(a)$ which remains constant through the time horizon $T$. The reward $R_t(a_t)$ observed by the agent at time $t$ upon selecting action $a_t$ is sampled according to,
\begin{align*} 
    R_t(a_t) \mid a_t = a \sim \mathcal{N}\left(r(a), 1\right),
\end{align*}
implying that the noise $\epsilon_t = R_t(a_t)-r(a_t)$ is i.i.d. following a standard distribution $\mathcal{N}(0, 1)$. The vector of true means $\bm{r} \in \mR^K$ is generated uniformly with a support on $[-\Rmax, \Rmax]$, with a specified $\Delta^*$ with the second-best arm. Note that in the experiments, differently from the theory, $\Rmax$ is the maximum value of the true mean reward, while the sampled reward from each arm, as described previously, is a normal distribution centered at $r(a)$.

\subsection{On the Barrier Parameter $\eta$}

The selection of the hyper-parameter $\eta$ is driven by a trade-off identified in our theoretical analysis. First, $\eta$ must be chosen sufficiently large to minimize the bias of the converged solution relative to the optimal policy. Since the true optimal policy is deterministic, lower values of $\eta$ induce excessive stochasticity, preventing the policy from approximating the optimal distribution.

However, $\eta$ cannot be arbitrarily large. As $\eta \to \infty$, the regularization term vanishes relative to the reward signal, and the objective function recovers the standard \sgb formulation. This would reintroduce the issues inherent to standard policy gradients, such as vanishing gradients or premature convergence, that our regularization aims to mitigate. 

\subsection{Sensitivity to the Number of Arms $K$ and Learning Rate $\alpha$} 
The results are presented in Figure~\ref{fig:experiments_k}, illustrating the convergence dynamics of the policy toward the optimal action $a^*$ across varying action space dimensions $K$ and learning rates $\alpha$. We conducted experiments on bandit instances with $K \in \{10, 100, 1000\}$ using two distinct learning rates:$\alpha = 0.01$ and $\alpha = 0.1$. 

Across all configurations, we fixed the sub-optimality gap to $\Delta^* = 0.1$, the maximum reward to $R_{max} = 1$, and the total time horizon to $T = 2.5 \cdot 10^4$. The barrier parameter $\eta$ for \lbsgb and \ent was scaled according to the action space, as summarized in Table~\ref{tab:eta_values}. All results are averaged over $N=100$ independent runs, and the plots display the corresponding $95\%$ confidence intervals computed via $t$-intervals.

\begin{table}[htbp]
    \centering
    \caption{Choice of the barrier parameter $\eta$ for \lbsgb and \ent across different action spaces $K$ and learning rates $\alpha$.}
    \label{tab:eta_values}
    \renewcommand{\arraystretch}{1.5} 
    \setlength{\tabcolsep}{11pt}      
    \begin{tabular}{|c|c|c|}
        \hline
        \rowcolor[gray]{0.9}
        \textbf{$\boldsymbol{K}$} & \textbf{$\boldsymbol{\alpha = 0.01}$} & \textbf{$\boldsymbol{\alpha = 0.1}$} \\
        \hline
        $10$ & \multicolumn{2}{c|}{$1000$} \\
        \hline
        $100$ & \multicolumn{2}{c|}{$2000$} \\
        \hline
        $1000$ & $10000$ & $5000$ \\
        \hline
    \end{tabular}
\end{table}

All 

\paragraph{Scalability with respect to $K$.} Observing the top row of Figure~\ref{fig:experiments_k} where the learning rate is selected as $\alpha = 0.01$, all the tested algorithms  (\sgb, \lbsgb, \ent, and \npg) successfully identify the optimal arm in the low-dimensional setting ($K = 10$). However, the performance of \sgb, \lbsgb, and \npg deteriorates as $K$ grows to $100$ and $1000$.  In contrast, \lbsgb converges to the optimal policy and demonstrates superior scalability across action space dimensions.

\paragraph{Robustness to the learning rate $\alpha$.} The bottom row of  Figure~\ref{fig:experiments_k}, where the learning rate is selected as $\alpha = 0.1$, exposes the vulnerability of standard PG methods to aggressive step sizes. \sgb, \ent, and \npg exhibit severe instability and premature convergence, since large parameter updates push the policy toward the boundary of the probability simplex. Conversely, \lbsgb remains highly robust, since the repulsive force of the log barrier strictly restricts the optimization trajectory to the non-degenerate interior of the simplex.

\begin{figure}[t]
    \centering
    
    \begin{subfigure}[b]{0.32\textwidth}
        \centering
        \includegraphics[width=\textwidth]{images/exp_1_K_10_comparison_lr_0_01.pdf}
        \caption{$K=10, \alpha=0.01$}
    \end{subfigure}\hfill
    \begin{subfigure}[b]{0.32\textwidth}
        \centering
        \includegraphics[width=\textwidth]{images/exp_1_K_100_comparison_lr_0_01.pdf}
        \caption{$K=100, \alpha=0.01$}
    \end{subfigure}\hfill
    \begin{subfigure}[b]{0.32\textwidth}
        \centering
        \includegraphics[width=\textwidth]{images/exp_1_K_1000_comparison_lr_0_01.pdf}
        \caption{$K=1000, \alpha=0.01$}

    \end{subfigure}
    
    \vspace{1.5em} 
    
    \begin{subfigure}[b]{0.32\textwidth}
        \centering
        \includegraphics[width=\textwidth]{images/exp_1_K_10_comparison_lr_0_1.pdf}
        \caption{$K=10, \alpha=0.1$}
    \end{subfigure}\hfill
    \begin{subfigure}[b]{0.32\textwidth}
        \centering
        \includegraphics[width=\textwidth]{images/exp_1_K_100_comparison_lr_0_1.pdf}
        \caption{$K=100, \alpha=0.1$}
    \end{subfigure}\hfill
    \begin{subfigure}[b]{0.32\textwidth}
        \centering
        \includegraphics[width=\textwidth]{images/exp_1_K_1000_comparison_lr_0_1.pdf}
        \caption{$K=1000, \alpha=0.1$}
    \end{subfigure}
    
    \vspace{1em} 
    
    \centerline{
        \legendline{lbsgb} \lbsgb \hspace{1.5em}
        \legendline{sgb} \sgb \hspace{1.5em}
        \legendline{ent} \ent \hspace{1.5em}
        \legendline{npg} \npg    
    }
    
    \vspace{0.5em} 
    
    \caption{Comparison between algorithms across $K=\{10, 100, 1000\}$. The top row uses a learning rate $\alpha = 0.01$, and the bottom row uses $\alpha = 0.1$. All experiments use $\Delta^* = 0.1$ (100 runs $\pm$ 95\% C.I.). For presentation purposes, these experiments are the same as the ones presented in Figure~\ref{fig:experiments_grid}.}
    \label{fig:experiments_k}
\end{figure}

\subsection{Comparison with Baselines.}
In Figure~\ref{fig:baselines} we compare \lbsgb with baselines. In particular, we compare our proposal with a clipped version of \texttt{LB-SGB} from~\cite{zhang2021sample}, in which the algorithm, at each phase of increasing length, clips the policy to ensure that all actions have a probability of being played greater than $\epsilon_{\text{bb}}\geq 1/2K$. We also compare \lbsgb with \npg with baseline from~\cite{chung2021beyond}, in which a value baseline is subtracted from the importance sampling estimator of the instantaneous reward. The value baseline is estimated by computing the empirical mean of each arm. Both algorithms implement ad-hoc learning rate schedules, while \lbsgb uses a constant learning rate ($\alpha = 0.1$). The experiment shows that the clipping approach induces severe discontinuities in the learning trajectory (visible for $K=10$). Furthermore, its forced exploration threshold ($1/2K$) vanishes for large $K$, failing to prevent premature stagnation. \npg utilizes variance reduction to limit over-committal behavior, but it does not alter the underlying simplex geometry; its aggressive natural gradient updates still extinguish exploration early when initial probabilities are inherently tiny ($1/K$). Conversely, \lbsgb consistently converges to the optimal policy across all instances. By maintaining a continuous log-barrier exploration penalty, \lbsgb demonstrates superior robustness and scalability. Furthermore, it achieves these results with a simple constant learning rate ($\alpha=0.1$), thereby entirely avoiding ad hoc schedules required by the competing methods.

\begin{figure}[t]
    \centering

    \begin{subfigure}[b]{0.32\linewidth}
        \centering
        \includegraphics[width=\linewidth]{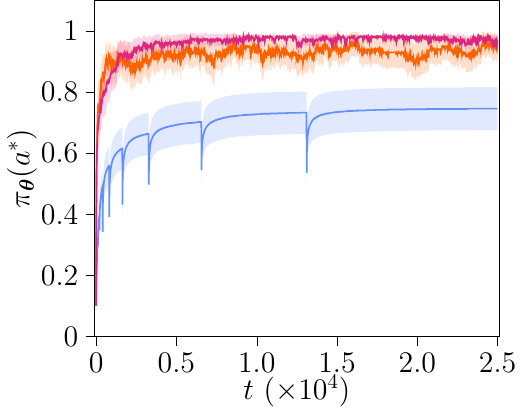}
        \caption{$K=10$}
    \end{subfigure}
    \hfill
    \begin{subfigure}[b]{0.32\linewidth}
        \centering
        \includegraphics[width=\linewidth]{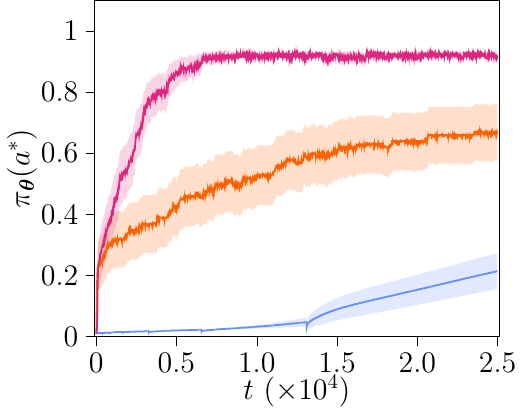}
        \caption{$K=100,$}
    \end{subfigure}
    \hfill
    \begin{subfigure}[b]{0.32\linewidth}
        \centering
        \includegraphics[width=\linewidth]{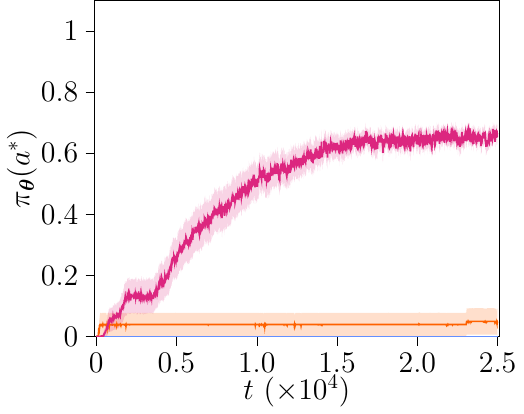}
        \caption{$K=1000$}
    \end{subfigure}

    \vspace{1em} 

    \centerline{
        \legendline{lbsgb} \lbsgb \hspace{1.5em}
        \legendline{npg} \npg  with Baseline \hspace{1.5em}
        \legendline{sgb} \texttt{LB-SGB}  with Clipping
    }

    \vspace{1em}

    \caption{Comparison between \lbsgb, \npg with baseline, and \lbsgb with clipping, across different MAB instances with $K = \{10, 100, 1000\}$. All experiments utilize $\Delta^* = 0.1$, $\Rmax = 1$, and aggregate $100$ independent runs ($\pm 95\%$ C.I.).}
    \label{fig:baselines}
\end{figure}

\subsection{Sensitivity to $\eta$}
In Figure~\ref{fig:eta_sensitivity}, we evaluate the sensitivity of \lbsgb to the barrier parameter $\eta$ on a MAB instance with $K=10$, $\Delta^* = 0.1$, and $R_{\max} = 1$. The three panes illustrate the fundamental trade-off between exploration and exploitation governed by this parameter.

\paragraph{Control over exploration.} The middle pane ($\min_a \pi_{\boldsymbol{\theta}}(a)$) highlights the core failure of vanilla \sgb: lacking any control over the minimum action probability, its exploration rapidly decays to zero. Consequently, the gradient signal vanishes, and the algorithm prematurely stagnates without converging to the optimal action (left pane). In contrast, \lbsgb explicitly bounds this minimum probability away from zero, ensuring continuous exploration throughout the learning process.

\paragraph{The barrier trade-off.} The left pane ($\pi_{\boldsymbol{\theta}}(a^*)$) reveals the behavioral trade-off when tuning $\eta$. If the barrier parameter is too low (e.g., $\eta = 10^2$), the penalty is overly restrictive; the algorithm overexplores and fails to fully concentrate on the optimal arm, plateauing at a highly suboptimal distribution. Conversely, if $\eta$ is too high (e.g., $\eta = 10^4$), the regularization becomes too weak, and the algorithm recovers the unstable, stagnating behavior of vanilla \sgb. When properly tuned (e.g., $\eta = 10^3$), \lbsgb maintains exactly enough exploration to avoid premature convergence while swiftly isolating the optimal action.

\paragraph{Impact on cumulative regret.} This structural trade-off translates directly into the cumulative empirical regret (right pane). The over-regularized instance ($\eta = 10^2$) incurs linear regret due to the persistent, forced sampling of sub-optimal arms. However, we observe a clear optimal choice ($\eta = 10^3$) that yields the lowest cumulative regret, successfully balancing the requirement to explore with the capacity to quickly exploit the optimal action.

\begin{figure}[htbp]
    \centering
    \begin{subfigure}[b]{0.31\textwidth}
        \centering
        \includegraphics[width=\textwidth]{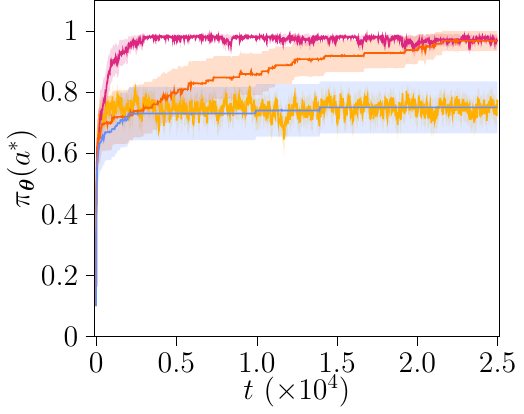}
        \label{fig:sub_k10}
    \end{subfigure}
    \hfill
    \begin{subfigure}[b]{0.32\textwidth}
        \centering
        \includegraphics[width=\textwidth]{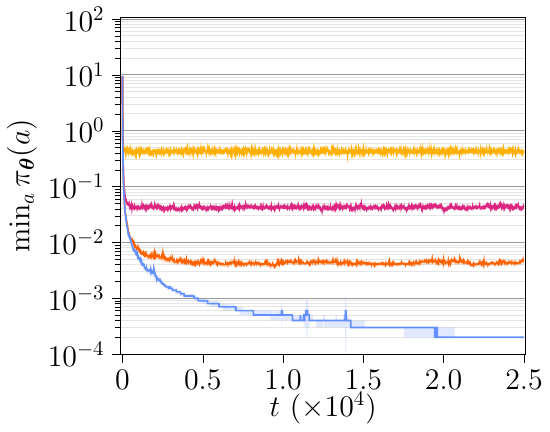}
        \label{fig:sub_k100}
    \end{subfigure}
    \hfill
    \begin{subfigure}[b]{0.31\textwidth}
        \centering
        \includegraphics[width=\textwidth]{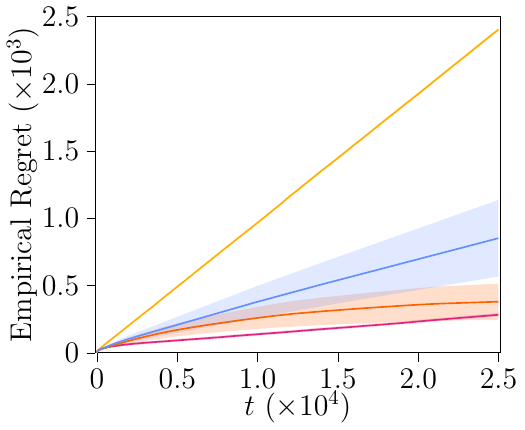}
        \label{fig:sub_k1000}
    \end{subfigure}

    \centerline{
        \legendline{sgb} \sgb \hspace{1.5em}
        \legendline{ent} \texttt{LB-SGB} $\eta = 10^2$ \hspace{1.5em}
        \legendline{lbsgb} \texttt{LB-SGB} $\eta = 10^3$ \hspace{1.5em}
        \legendline{npg} \texttt{LB-SGB} $\eta = 10^4$    
    }

    \vspace{1em}
    
    \caption{Performance, minimum action probability and regret for \sgb and \texttt{LB-SGB} with different choices for $\eta$}
    \label{fig:eta_sensitivity}
\end{figure}
\subsection{Computational Resources}
All the experiments were run on a 2023 14-inch MacBook Pro equipped with an Apple M2 Pro chip and $16~\text{GB}$ of RAM.



\end{document}